\definecolor{DarkRed}{rgb}{0.545,0,0}
\def \ifempty#1{\def\temp{#1} \ifx\temp\empty }
\newcommand{\A}{\ensuremath{\mathbf{A}}}
\newcommand{\B}{\ensuremath{\mathbf{B}}}
\newcommand{\C}{\ensuremath{\mathbf{C}}}
\newcommand{\D}{\ensuremath{\mathbf{D}}}
\newcommand{\HH}{\ensuremath{\mathbf{H}}}
\newcommand{\I}{\ensuremath{\mathbf{I}}}
\renewcommand{\aa}{\ensuremath{\mathbf{a}}}
\renewcommand{\b}{\ensuremath{\mathbf{b}}}
\newcommand{\f}{\ensuremath{\mathbf{f}}}
\newcommand{\g}{\ensuremath{\mathbf{g}}}
\newcommand{\p}{\ensuremath{\mathbf{p}}}
\newcommand{\sss}{\ensuremath{\mathbf{s}}}  
\newcommand{\uu}{\ensuremath{\mathbf{u}}}
\newcommand{\vv}{\ensuremath{\mathbf{v}}}
\newcommand{\w}{\ensuremath{\mathbf{w}}}
\newcommand{\x}{\ensuremath{\mathbf{x}}}
\newcommand{\y}{\ensuremath{\mathbf{y}}}
\newcommand{\z}{\ensuremath{\mathbf{z}}}
\newcommand{\0}{\ensuremath{\mathbf{0}}}
\newcommand{\1}{\ensuremath{\mathbf{1}}}
\newcommand{\bbR}{\ensuremath{\mathbb{R}}}
\newcommand{\calO}{\ensuremath{\mathcal{O}}}
\newcommand{\abs}[2][]{%
  \ifempty{#1} {\left\lvert#2\right\rvert} \else {#1\lvert#2#1\rvert} \fi}
\newcommand{\norm}[2][]{%
  \ifempty{#1} {\left\lVert#2\right\rVert} \else {#1\lVert#2#1\rVert} \fi}
\newcommand{\caja}[4][1]{{%
    \renewcommand{\arraystretch}{#1}%
    \begin{tabular}[#2]{@{}#3@{}}%
      #4%
    \end{tabular}%
    }}
\newcommand{\diagop}{\operatorname{diag}}
\newcommand{\diag}[1]{\ensuremath{\diagop\left(#1\right)}}
\newcommand{\rankop}{\operatorname{rank}}
\newcommand{\rank}[1]{\ensuremath{\rankop\left(#1\right)}}
\newcommand{\condop}{\operatorname{cond}}
\newcommand{\cond}[1]{\ensuremath{\condop\left(#1\right)}}
\theoremstyle{plain}
\newtheorem{thm}{Theorem}[section]
\newtheorem*{lemma*}{Lemma}
\newtheorem*{prop*}{Proposition}
\theoremstyle{definition}
\newtheorem*{defn*}{Definition}
\newtheorem*{exmp*}{Example}
\newtheorem*{conj*}{Conjecture}
\theoremstyle{remark}
\newtheorem*{rmk*}{Remark}
\title{Inverse classification with logistic and softmax classifiers: \\ efficient optimization}
\author{
  Miguel \'A.\ Carreira-Perpi\~n\'an \hspace{5ex} Suryabhan Singh Hada \\
  Dept.\ of Computer Science \& Engineering, University of California, Merced \\
  {\url{http://eecs.ucmerced.edu}}
}
\date{March 18, 2026}
\begin{document}

\maketitle

\begin{abstract}

  In recent years, a certain type of problems have become of interest where one wants to query a trained classifier. Specifically, one wants to find the closest instance to a given input instance such that the classifier's predicted label is changed in a desired way. Examples of these ``inverse classification'' problems are counterfactual explanations, adversarial examples and model inversion. All of them are fundamentally optimization problems over the input instance vector involving a fixed classifier, and it is of interest to achieve a fast solution for interactive or real-time applications. We focus on solving this problem efficiently with the squared Euclidean distance for two of the most widely used classifiers: logistic regression and softmax classifier. Owing to special properties of these models, we show that the optimization can be solved in closed form for logistic regression, and iteratively but extremely fast for the softmax classifier. This allows us to solve either case exactly (to nearly machine precision) in a runtime of milliseconds to around a second even for very high-dimensional instances and many classes.

\end{abstract}

\section{Introduction}
\label{s:intro}

In an abstract sense, machine learning classifiers can be regarded as manipulating three objects: the input feature vector \x, the output label $y$ and the classifier model $f$. \emph{Training} or \emph{learning} ($\x,y \to f$) is the problem of inferring $f$ from \x\ and $y$ (the training set), and is usually formulated as an optimization problem. \emph{Inference} or \emph{prediction} ($\x,f \to y$) is the problem of inferring $y$ from \x\ and $f$ (the training set), and is usually formulated via the application of an explicit function $f$ to \x. Both training and inference have been long studied for a wide range of classifiers. The third problem ($f,y \to \x$), which can be called \emph{inverse classification}, is to find \x\ from $y$ and $f$, and can also be formulated as an optimization problem. This problem is far less popular than the other two but has received significant attention in recent years in specific settings. One of them is in \emph{adversarial examples}. We are given a classifier $f$, such as a neural net, and an input instance \x\ which is classified by $f$ as (say) class 1. The goal is to perturb \x\ slightly (or equivalently to find a close instance to \x) such that it is then classified by $f$ as (say) class 2. The motivation is to trick the classifier into predicting another class and trigger some action (such as having a self-driving car misclassify a stop sign as a yield sign, say). Another setting is in \emph{counterfactual explanations}. Again, we are given a classifier $f$ and an input instance \x\ which is classified by $f$ as (say) class 1, which is undesirable. The goal is to change \x\ in a minimally costly way so that it is classified as (say) class 2, which is desirable. As an example, \x\ could represent a loan applicant (age, salary, etc.\@) and $f$ could predict whether to approve the loan or not. Problems of this type have become particularly important in the context of transparency in AI, where users may want to interpret, explain, understand, query or manipulate a trained classifier.

Such inverse classification problems can be naturally formulated as an optimization involving a distance or cost in feature space and a loss in label space. While many recent works have focused on the motivation or application of adversarial examples and counterfactual explanations, here we are interested in the numerical optimization aspects of such problems. Indeed, we recognize it as a novel type of optimization problem with specific characteristics that deserves special attention. Such problems are computationally much smaller than the training problem (which involves a dataset and model parameters as variables), but it is still important to solve them fast: some of the applications mentioned above require real-time or interactive processing, and this processing may often take place in limited-computation devices such as mobile phones. Besides, the optimization can be nontrivial if the number of features is in the thousands or milions.

In this paper, we consider logistic regression and softmax classifiers (also known as multinomial or multiclass logistic regression), which are among the simplest yet most widely used classifiers in practice; and the squared Euclidean distance as cost. We will define inverse classification as a certain optimization problem that, while encoding the idea of inverse classification in a natural way, is computationally convenient. Then, we characterize it in theory and provide what probably are the most efficient solutions possible for either type of classifier.

\section{Related work}
\label{s:related}

As noted earlier, some forms of inverse classification problems exist, although generally the emphasis has been in the application of such problems rather than on understanding the optimization problem in theory and solving it efficiently.

Most inverse classification works concern neural networks. Inverting neural nets with continuous outputs has been investigated since the 1980s \citep{William86a,KinderLinden90a,Hoskin_92a,Behera_95a,Jensen_99a}. The algorithms were usually based on minimizing the squared error between the desired output and the predicted one via backpropagation, i.e., gradient descent. The widespread deployment of large, highly accurate neural nets in many practical applications in the last decade has brought new interest in inversion problems. One of them is activation maximization, which---with the goal of understanding the inner workings of a deep net---seeks input vectors that cause a particular neuron in the net to have a large output \citep{Simony_14a,ZeilerFergus14a,MahendVedald16a,DosovitBrox16a,Nguyen_16a,HadaCarreir21a}. Another is adversarial examples, mentioned in the introduction \citep{Szeged_14a,Goodfel_15a}. Again, these works typically use gradient descent as optimization method, which is greatly facilitated by automatic differentiation, now commonly available in libraries such as PyTorch or Tensorflow (although the ReLU activation commonly used in deep nets is not differentiable everywhere). Some of these works also explore different formulations of the problem, such as different distance or cost functions, different constraints or different ways to perturb the instance, sometimes in an effort to find input instances that are realistic or suitable for specific data, such as images, audio or language \citep{Nguyen_17a,Dhuran_18a,Alzant_18a,CarlinWagner18a,Goodfel_18a,Finlay_19a,VanloovKlaise21a}.

A more recent trend are counterfactual explanations \citep{Wachter_18a,Ustun_19a,Russel19a,Karimi_20a}. Some of these use agnostic algorithms which assume nothing about the classifier other than it can be evaluated on arbitrary instances, using some kind of random or approximate search \citep{Sharma_20a,Karimi_20a}. One can also restrict the instance search space to a finite set of instances and do a brute-force search, as in a database search \citep{Wexler_20a}. Other algorithms are specialized for decision trees and forests, which have the added difficulty of defining a piecewise constant classifier or regressor \citep{Kantch_16a,Chen_19f,ParmenVidal21a,CarreirHada21a,CarreirHada21b,HadaCarreir21b,CarreirHada23a,CarreirHada23b}.

Driven by practical applications, problems related to inverse classification have also been studied in data mining or knowledge discovery from databases as a form of knowledge extraction from a trained model \citep{Yang_06c,Bella_11a,MartenProvos14a,Cui_15a}, in particular in applications such as customer relationship management (CRM).

Finally, inverse classification or regression arises also as a step within the method of auxiliary coordinates (MAC) \citep{CarreirWang12a,CarreirWang14a}. MAC seeks to train nested functions in machine learning, resulting from function composition, such as $\g(\f(\x))$ where \f\ is a feature extraction stage whose output is fed to a classifier \g. MAC operates by introducing auxiliary variables \z\ that decouple \f\ from \g, turning this into a penalized optimization, and solving it by alternating over the original parameters of \f\ and \g\ and the auxiliary variables \z. The latter step over the auxiliary variable $\z_n$ of each training instance $(\x_n,\y_n)$ can be written as ``$\min_{\z_n}{ \frac{\lambda}{2} \norm{\z_n - \f(\x_n)}^2 + \text{loss}(\y_n,\g(\z_n)) }$'', which takes the form of an inverse classification or regression, similar to eq.~\eqref{e:objfcn}.

\section{Inverse classification as optimization: softmax classifier}
\label{s:objfcn}

\subsection{Definition of the optimization problem}
\label{s:objfcn:def}

Consider a $K$-class softmax classifier, where the softmax probability of class $i \in \{1,\dots,K\}$ for instance $\x \in \bbR^D$ is defined as $\p(\x) = (p_1(\x),\dots,p_K(\x))^T$ with
\begin{equation}
  \label{e:softmax}
  p_i(\x) = e^{z_i} \bigg/ \sum^K_{j=1}{ e^{z_j} },\ i \in \{1,\dots,K\}
\end{equation}
where $\z = \A \x + \b \in \bbR^K$ and the parameters $\A \in \bbR^{K \times D}$ and $\b \in \bbR^{D \times 1}$ are a matrix of weights and a vector of biases, respectively. Throughout this paper, we assume the parameters (hence the classifier) are fixed, presumably by having been learned on a training set. Obviously, each $p_i$ value is in (0,1) and their sum is 1. Define%
\footnote{For ease of reference, throughout the paper we highlight important expressions in \textcolor{blue}{blue}.}
the function \textcolor{blue}{$g_i(\x) = -\ln{p_i(\x)} > 0$}. Write $\A^T = (\aa_1 \cdots \aa_K)$ where each $\aa_i$ is of $D \times 1$ ($i$th row of \A, transposed). Define $\bar{\A}_k = \A - \1 \aa^T_k$ of $K \times D$, where \1\ is a vector of ones (and likewise $\overline{\aa}_{ki} = \aa_i - \aa_k$) and $\overline{\b}_k = \b - b_k \1$ of $K \times 1$. Note that $\bar{\A}_k$ has one row of zeros (row $k$).

The following results characterize the problem. Theorems~\ref{th:log-softmax-cvx}--\ref{th:objfcn-strongly-cvx} are well known in convex optimization. The remaining theorems exploit the particular mathematical structure of the $K$-class softmax classifier, logistic regression and the $\ell_2$ norm (the low-dimensional Hessian representation for the $K$-class softmax classifier and the closed-form solution for logistic regression). This affords important computational advantages which do not arise with other classifiers or other norms.

\begin{thm}
  \label{th:log-softmax-cvx}
  For each $i \in \{1,\dots,K\}$ we have that $g_i$ is convex (but not strongly convex).
\end{thm}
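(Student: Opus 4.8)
The plan is to rewrite $g_i$ as a log-sum-exp term plus an affine term, use the standard convexity of log-sum-exp for the first claim, and then disprove strong convexity by exhibiting directions along which the second directional derivative of $g_i$ decays to $0$.

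\emph{Convexity.} First I would expand $g_i(\x) = -\ln p_i(\x) = -z_i + \ln\sum_{j=1}^K e^{z_j}$, i.e., with $\z=\A\x+\b$,
\[ g_i(\x) \;=\; \ell(\A\x+\b) - \aa_i^T\x - b_i, \qquad \ell(\z) \;:=\; \ln\textstyle\sum_{j=1}^K e^{z_j}. \]
The map $\ell$ is convex on $\bbR^K$: differentiating gives $\nabla\ell(\z) = \p$ (the softmax of $\z$) and $\nabla^2\ell(\z) = \diag{\p} - \p\p^T$, and for every $\w\in\bbR^K$, $\w^T(\diag{\p}-\p\p^T)\w = \sum_j p_j w_j^2 - \big(\sum_j p_j w_j\big)^2 \ge 0$ by Cauchy--Schwarz (it is the variance of $w_j$ under the distribution $\p$). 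Since precomposition with an affine map and addition of an affine function both preserve convexity, $g_i$ is convex; equivalently, by the chain rule $\nabla^2 g_i(\x) = \A^T\HH(\x)\A \succeq \0$, where $\HH(\x) := \diag{\p(\x)} - \p(\x)\p(\x)^T$.

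\emph{No strong convexity.} Here $g_i$ is $m$-strongly convex iff $\vv^T\nabla^2 g_i(\x)\,\vv \ge m\norm{\vv}^2$ for all $\x\in\bbR^D$ and all $\vv$. I would fix an arbitrary $\vv\neq\0$, put $\w = \A\vv$, and look at the Hessian form along the ray $\x(t)=t\vv$ as $t\to+\infty$. There $z_j(t) = t\,\aa_j^T\vv + b_j$ grows linearly; setting $M := \max_j \aa_j^T\vv$ and $S := \{\,j : \aa_j^T\vv = M\,\}$, the softmax satisfies $p_j(\x(t))\to 0$ for $j\notin S$ and $p_j(\x(t)) \to \pi_j := e^{b_j}/\sum_{l\in S}e^{b_l}$ for $j\in S$. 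Hence
\[ \vv^T\nabla^2 g_i(\x(t))\,\vv = \w^T\HH(\x(t))\,\w = \sum_j p_j w_j^2 - \big(\textstyle\sum_j p_j w_j\big)^2 \;\longrightarrow\; \sum_{j\in S}\pi_j w_j^2 - \big(\textstyle\sum_{j\in S}\pi_j w_j\big)^2 . \]
But $w_j = \aa_j^T\vv = M$ for every $j\in S$, so the limit is the variance of a constant, i.e.\ $0$. Thus $\vv^T\nabla^2 g_i(\x(t))\,\vv\to 0$ while $\norm{\vv}^2>0$ is fixed, so no $m>0$ can work and $g_i$ is not strongly convex.

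The step deserving care is this last one: along the chosen ray the ``winning set'' $S$ may contain several indices (e.g.\ if two rows of \A\ coincide, or if $\vv$ is chosen so that several $\aa_j^T\vv$ tie), so $\p(\x(t))$ need not approach a vertex of the simplex and $\HH(\x(t))$ need not approach $\0$. The argument nonetheless goes through because the coordinates of $\w$ indexed by $S$ are all equal to $M$, which forces the limiting variance to vanish regardless of $\abs{S}$ --- this is the only subtlety, and it is minor. (As a cheaper alternative when $D\ge K$: $\HH(\x)\1=\0$ gives $\rank{\HH(\x)}\le K-1<D$, so $\nabla^2 g_i(\x)$ is singular for every \x; the ray argument is only needed to also cover $D<K$.)
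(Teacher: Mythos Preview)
Your proof is correct. The paper itself does not give an argument: it simply cites Boyd and Vandenberghe's \emph{Convex Optimization} (p.~87) for the result, which is precisely the log-sum-exp convexity you invoke. So your convexity argument---writing $g_i$ as $\ell(\A\x+\b)$ plus an affine term and using the PSD Hessian $\diag{\p}-\p\p^T$ of $\ell$---is the standard route and matches what the cited reference contains.

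For the ``not strongly convex'' part you actually supply more than the paper does: the paper gives no argument at all for this claim. Your ray argument is clean and handles the general case (any $D,K$); the observation that on the winning set $S$ the coordinates $w_j=\aa_j^T\vv$ all equal $M$, forcing the limiting variance to vanish regardless of $|S|$, is exactly the right point. Your parenthetical shortcut for $D\ge K$ (rank deficiency of $\A^T\HH\A$ via $\HH\1=\0$) is also correct and is in fact the situation the paper cares about in practice, as later theorems (e.g.\ the eigenvalue count in Theorem~\ref{th:Hess_pd}) lean on $K\le D$.
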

\begin{proof}
  This is a well-known result, see e.g.\ \citet[p.~87]{BoydVanden04a}.
\end{proof}

We define the optimization problem%
\footnote{All (vector or matrix) norms are $\ell_2$ norms throughout the paper.}
\footnote{Note that $-p_k$ itself is not convex (even for $K=2$) and neither is $\frac{\lambda}{2} \norm{\x - \overline{\x}}^2 - p_k(\x)$. Defining a model inversion this way has local minima (in fact, it seems to have a local minimum very near $\overline{\x}$, which is useless but makes finding a useful minimum hard).}:
\begin{equation}
  \label{e:objfcn}
  \textcolor{blue}{\min_{\x \in \bbR^D}{ E(\x;\lambda,k) = \frac{\lambda}{2} \norm{\x - \overline{\x}}^2 + g_k(\x) }}
\end{equation}
where $\overline{\x} \in \bbR^D$ is the \emph{source instance}, $k \in \{1,\dots,K\}$ the \emph{target class} and $\lambda > 0$ trades off distance to $\overline{\x}$ with class-$k$ probability. Next, we give several results about the problem and about Newton's method and discuss their implications in section~\ref{s:objfcn:disc}. An alternative formulation of interest is ``$\min_{\x}{ \norm{\x - \overline{\x}} }$ s.t.\ $g_k(\x) \le \alpha$''. Although we do not consider it here, it can be solved by trying different values of $\lambda$ in~\eqref{e:objfcn}, possibly using a bisection search. This can be done efficiently by scanning a solution path over $\lambda$ via warm-start (see section~\ref{s:objfcn:path}).

\begin{thm}
  \label{th:objfcn-strongly-cvx}
  $E(\x;\lambda,k)$ is strongly convex. Hence, it has a unique minimizer $\x^*$.
\end{thm}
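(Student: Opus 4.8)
The plan is to exploit the additive structure $E(\x;\lambda,k) = q(\x) + g_k(\x)$, where $q(\x) = \frac{\lambda}{2}\norm{\x - \overline{\x}}^2$ is an explicit quadratic and $g_k$ is convex by Theorem~\ref{th:log-softmax-cvx}. Strong convexity of a sum follows once one summand is strongly convex and the other merely convex, so the quadratic term carries all the curvature we need, and its modulus $\lambda$ will be the modulus of $E$.

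Concretely, first I would recall the definition: $h\colon\bbR^D\to\bbR$ is $m$-strongly convex iff $h - \frac{m}{2}\norm{\cdot}^2$ is convex. Since $q(\x) - \frac{\lambda}{2}\norm{\x}^2 = -\lambda\,\overline{\x}^T\x + \frac{\lambda}{2}\norm{\overline{\x}}^2$ is affine, hence convex, $q$ is $\lambda$-strongly convex. Then $E - \frac{\lambda}{2}\norm{\cdot}^2 = \bigl(q - \frac{\lambda}{2}\norm{\cdot}^2\bigr) + g_k$ is a sum of two convex functions, hence convex, so $E$ is $\lambda$-strongly convex. Equivalently, since $g_k$ is $C^\infty$ (the composition of the affine map $\x\mapsto\A\x+\b$ with the smooth log-softmax), one may instead argue at the level of Hessians: $\nabla^2 E(\x) = \lambda\I + \nabla^2 g_k(\x) \succeq \lambda\I \succ \0$, using that $\nabla^2 g_k(\x)$ is positive semidefinite by convexity of $g_k$.

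For existence and uniqueness of $\x^*$: strong convexity implies strict convexity, which already yields uniqueness of any minimizer. For existence I would invoke coercivity together with continuity: because $g_k = -\ln p_k > 0$, we have $E(\x;\lambda,k) \ge \frac{\lambda}{2}\norm{\x - \overline{\x}}^2 \to \infty$ as $\norm{\x}\to\infty$, so all sublevel sets of $E$ are bounded (and closed, by continuity), hence compact, and the infimum is attained. Combining, $\x^*$ exists and is unique.

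There is essentially no obstacle here; the one point deserving a moment's care is the existence step, since strong convexity by itself does not force a minimizer to exist on an unbounded domain unless one also appeals to coercivity (immediate here from $g_k>0$). The remainder is just the standard fact that convex-plus-strongly-convex is strongly convex, plus its routine consequence for minimizers.
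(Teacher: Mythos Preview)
Your proof is correct and follows essentially the same line as the paper, which simply observes that $E$ is the sum of the convex $g_k$ and the strongly convex quadratic $\frac{\lambda}{2}\norm{\x-\overline{\x}}^2$ and cites \citet{Nester04a}. Your write-up supplies the details the paper omits; one minor remark is that on $\bbR^D$ strong convexity already implies coercivity (via the quadratic lower bound $E(\x)\ge E(\x_0)+\nabla E(\x_0)^T(\x-\x_0)+\tfrac{\lambda}{2}\norm{\x-\x_0}^2$), so the separate appeal to $g_k>0$ for existence is not strictly needed.
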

\begin{proof}
  This follows \citep{Nester04a} from the fact that $E$ equals the sum of a convex function, $g_k$ (from theorem~\ref{th:log-softmax-cvx}), and a strongly convex quadratic function, $\frac{\lambda}{2} \norm{\x - \overline{\x}}^2$.
\end{proof}

\subsection{Gradient and Hessian}
\label{s:objfcn:grad-Hess}

It is easy to see that the gradient and Hessian of $E$ wrt \x\ are:
\begin{align}
  \label{e:grad}
  \textcolor{blue}{\nabla_{\x}{ E(\x;\lambda,k) } } &= \textcolor{blue}{\lambda (\x - \overline{\x}) + \smash{\bar{\A}}^T_k \p(\x)} \\
  \label{e:Hess}
  \textcolor{blue}{\nabla^2_{\x\x}{ E(\x;\lambda,k) } } &= \textcolor{blue}{\lambda \I + \smash{\bar{\A}}^T_k (\diag{\p} - \p \p^T) \bar{\A}_k}.
\end{align}
To avoid clutter, we will usually omit the dependence on $\x,\lambda,k$. We will also call the matrix $\smash{\bar{\A}}^T_k (\diag{\p} - \p \p^T) \bar{\A}_k$ the ``softmax Hessian'' .

\begin{thm}
  \label{th:Hess_pd}
  The Hessian $\nabla^2{ E(\x) }$ satisfies the following $\forall \x \in \bbR^D$: it is positive definite; its largest eigenvalue and condition number are upper bounded by $\lambda + \smash{\norm{\bar{\A}_k}}^2$ and $1 + \smash{\norm{\bar{\A}_k}}^2/\lambda$, respectively; all its $D$ eigenvalues are greater or equal than $\lambda$; if $K \le D$ then at least $D-K+1$ eigenvalues equal $\lambda$.
\end{thm}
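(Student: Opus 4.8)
The plan is to reduce everything to a spectral analysis of the ``softmax Hessian'' and, more precisely, of its inner factor $\SS_0 \bydef \diag{\p} - \p\p^T$, a $K \times K$ symmetric matrix depending on $\x$ only through $\p = \p(\x)$, and then transport the relevant properties through $\bar{\A}^T_k \SS_0 \bar{\A}_k$ and the additive shift by $\lambda\I$ in~\eqref{e:Hess}.

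First I would establish three facts about $\SS_0$: (i)~$\SS_0 \succeq 0$; (ii)~$\SS_0 \preceq \I$; and (iii)~$\SS_0 \1 = \0$, hence $\rank{\SS_0} \le K-1$. For (i)--(ii) the cleanest route is to note that $\p\p^T \succeq 0$ and $0 < p_i < 1$, so that $\0 \preceq \diag{\p} - \p\p^T \preceq \diag{\p} \preceq \I$; equivalently, $\vv^T \SS_0 \vv = \sum_i p_i v^2_i - (\sum_i p_i v_i)^2$ is the variance of a random variable taking value $v_i$ with probability $p_i$, and thus lies in $[0, \sum_i p_i v^2_i] \subseteq [0, \norm{\vv}^2]$. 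Fact (iii) is immediate from $\p^T\1 = 1$, since $\SS_0\1 = \p - \p(\p^T\1) = \0$.

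Next I would push these to the softmax Hessian $\M \bydef \bar{\A}^T_k \SS_0 \bar{\A}_k$ of~\eqref{e:Hess}. From (i), $\M \succeq 0$, since $\x^T\M\x = (\bar{\A}_k\x)^T \SS_0 (\bar{\A}_k\x) \ge 0$ for all $\x$. From (ii) and submultiplicativity of the spectral norm, $\norm{\M} \le \norm{\bar{\A}^T_k}\,\norm{\SS_0}\,\norm{\bar{\A}_k} \le \norm{\bar{\A}_k}^2$. From (iii), $\rank{\M} \le \rank{\SS_0} \le K-1$. Since $\nabla^2 E = \lambda\I + \M$ and $\lambda\I$ commutes with $\M$, the two matrices are simultaneously diagonalizable and the eigenvalues of $\nabla^2 E$ are exactly the numbers $\lambda + \mu$ as $\mu$ ranges over the eigenvalues of $\M$ (counted with multiplicity). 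The four claims then follow: positive definiteness and ``all $D$ eigenvalues $\ge \lambda$'' because every $\mu \ge 0$ and $\lambda > 0$; the largest eigenvalue equals $\lambda + \lambda_{\max}(\M) \le \lambda + \norm{\bar{\A}_k}^2$; the condition number is $\lambda_{\max}(\nabla^2 E)/\lambda_{\min}(\nabla^2 E) \le (\lambda + \norm{\bar{\A}_k}^2)/\lambda = 1 + \norm{\bar{\A}_k}^2/\lambda$, using $\lambda_{\min}(\nabla^2 E) \ge \lambda$; and if $K \le D$ then $\rank{\M} \le K-1 < D$, so $\Null(\M)$ has dimension at least $D - (K-1) = D - K + 1$, and any $\vv$ in it satisfies $\nabla^2 E\,\vv = \lambda\vv$, giving at least $D-K+1$ eigenvalues of $\nabla^2 E$ equal to $\lambda$.

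I do not anticipate a real obstacle; the only points requiring care are choosing a clean argument for $\norm{\SS_0} \le 1$ (rather than computing the eigenvalues of $\diag{\p} - \p\p^T$ explicitly) and correctly bookkeeping multiplicities in the last claim --- in particular that it is the null space of $\M$, of dimension at least $D-K+1$, on which the Hessian acts as $\lambda\I$. One could instead deduce the eigenvalue bounds from Weyl's inequalities, but the fact that $\lambda\I$ commutes with $\M$ makes that detour unnecessary.
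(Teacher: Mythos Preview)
Your proof is correct and follows the same spectral-shift strategy as the paper: show the softmax Hessian $\M = \bar{\A}^T_k\SS_0\bar{\A}_k$ is positive semidefinite with $\norm{\M} \le \norm{\bar{\A}_k}^2$ and rank at most $K-1$, then read off all four claims from $\nabla^2 E = \lambda\I + \M$. The only substantive difference is in the rank argument for the last claim: the paper uses that $\bar{\A}_k$ has a zero row (row $k$), so $\rank{\bar{\A}_k} \le K-1$ and hence $\rank{\M} \le K-1$; you instead use $\SS_0\1 = \0$ to get $\rank{\SS_0} \le K-1$ and hence $\rank{\M} \le K-1$. Both routes are equally short and valid; yours is arguably cleaner in that it does not depend on the particular centering convention chosen for $\bar{\A}_k$.
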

\begin{proof}
  $\nabla^2{ E(\x) }$ is symmetric positive definite since the softmax Hessian is positive semidefinite (from theorem~\ref{th:log-softmax-cvx}) and the $\lambda$-term Hessian is positive definite. Hence, the eigenvalues of $\nabla^2{ E(\x) }$ equal its singular values and have the form $\lambda + \lambda_d$, where $0 \le \lambda_D \le \dots \le \lambda_1$ are the eigenvalues of the softmax Hessian. 
  
  We can find an upper bound for $\lambda_1$ as follows. Let \uu\ be an eigenvalue of $\smash{\bar{\A}}^T_k (\diag{\p} - \p \p^T) \bar{\A}_k$, then:
  \begin{align*}
    &\quad\ \uu^T \smash{\bar{\A}}^T_k (\diag{\p} - \p \p^T) \bar{\A}_k \uu \\
    &= \uu^T \smash{\bar{\A}}^T_k \diag{\p} \bar{\A}_k \uu - \uu^T \smash{\bar{\A}}^T_k \p \p^T \bar{\A}_k \uu \\
    &= \uu^T \smash{\bar{\A}}^T_k \diag{\p} \bar{\A}_k \uu - \smash{\norm{\p^T \bar{\A}_k \uu}}^2 \\
    &\le \uu^T \smash{\bar{\A}}^T_k \diag{\p} \bar{\A}_k \uu
  \end{align*}
  so its largest eigenvalue $\lambda_1$ is smaller or equal than the largest eigenvalue of $\smash{\bar{\A}}^T_k \diag{\p} \bar{\A}_k$, which equals $\norm{\smash{\bar{\A}}^T_k \diag{\p} \bar{\A}_k}$. But $\norm{\smash{\bar{\A}}^T_k \diag{\p} \bar{\A}_k} \le \smash{\norm{\bar{\A}_k}}^2 \norm{\diag{\p}} < \smash{\norm{\bar{\A}_k}}^2$ (since \p\ is in the interior of the regular simplex, so its largest element is smaller than 1). Consequently $\cond{\nabla^2{ E(\x) }} = (\lambda + \lambda_1)/(\lambda + \lambda_D) \le (\lambda + \lambda_1)/\lambda \le 1 + \smash{\norm{\bar{\A}_k}}^2/\lambda$. Note that $\smash{\norm{\bar{\A}_k}}^2$ equals the largest eigenvalue of $\bar{\A}_k \smash{\bar{\A}}^T_k$ (or $\smash{\bar{\A}}^T_k \bar{\A}_k$).
  
  If $K \le D$ then $\rank{\bar{\A}_k} < K$, since $\bar{\A}_k$ contains one row of zeros, and the rank of the softmax Hessian (which involves a product with $\bar{\A}_k$) is at most $K-1$, so $0 = \lambda_D = \dots = \lambda_K \le \lambda_{K-1} \le \dots \le \lambda_1$. Hence $\nabla^2{ E(\x) }$ has at least $D-K+1$ eigenvalues equal to $\lambda$ and the rest are all greater or equal than $\lambda$.
\end{proof}

\begin{thm}
  \label{th:grad_Lips}
  The gradient $\nabla{ E(\x) }$ is Lipschitz continuous with Lipschitz constant $L = \lambda + \smash{\norm{\bar{\A}_k}}^2$.
\end{thm}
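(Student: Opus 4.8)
The plan is to invoke the standard equivalence: for a twice continuously differentiable function, $\nabla E$ is Lipschitz with constant $L$ if and only if $\norm{\nabla^2 E(\x)} \le L$ for all \x, where $\norm{\cdot}$ on the Hessian is the spectral (operator) norm. So the whole statement reduces to bounding the operator norm of the Hessian~\eqref{e:Hess} uniformly in \x, and we have already done essentially all the work in theorem~\ref{th:Hess_pd}.

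Concretely, the first step is to note that $\nabla^2 E(\x)$ is symmetric positive definite (theorem~\ref{th:Hess_pd}), so its operator norm equals its largest eigenvalue, which by that same theorem is bounded above by $\lambda + \smash{\norm{\bar{\A}_k}}^2 = L$ for every $\x \in \bbR^D$. The second step is to pass from the uniform Hessian bound to the gradient Lipschitz bound via the integral form of the mean value theorem: for any $\x,\y \in \bbR^D$,
\begin{equation*}
  \nabla E(\y) - \nabla E(\x) = \left( \int_0^1 \nabla^2 E(\x + t(\y - \x))\,dt \right)(\y - \x),
\end{equation*}
so that $\norm{\nabla E(\y) - \nabla E(\x)} \le \big(\sup_{t \in [0,1]} \norm{\nabla^2 E(\x + t(\y - \x))}\big)\,\norm{\y - \x} \le L \norm{\y - \x}$. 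That is the entire argument.

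There is no real obstacle here; the only thing worth a sentence is to make explicit that the bound in theorem~\ref{th:Hess_pd} is on the \emph{largest} eigenvalue (not merely the condition number), and that for a positive definite matrix this coincides with the $\ell_2$ operator norm, which is the constant that controls the Lipschitz modulus of the gradient. One could also remark that the bound is essentially tight: since $\norm{\diag{\p}} \to 1$ as \p\ approaches a vertex of the simplex, $\sup_{\x} \norm{\nabla^2 E(\x)}$ approaches $\lambda + \smash{\norm{\bar{\A}_k}}^2$ (in degenerate directions), so $L$ cannot be improved in general, though this is not needed for the statement.
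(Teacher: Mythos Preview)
Your proof is correct and follows essentially the same route as the paper's: both invoke the equivalence (via the mean value theorem) between a uniform spectral bound on $\nabla^2 E$ and a Lipschitz constant for $\nabla E$, and both pull the bound $\norm{\nabla^2 E(\x)} \le \lambda + \smash{\norm{\bar{\A}_k}}^2$ directly from theorem~\ref{th:Hess_pd}. Your version is slightly more detailed (you spell out the integral form of the mean value theorem and the identification of the operator norm with the largest eigenvalue), but the argument is the same.
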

\begin{proof}
  This follows from the fact that, since $E(\x$) is convex and twice continuously differentiable, then $L$ is a Lipschitz constant of $\nabla{ E(\x) }$ iff $\norm{\nabla^2{ E(\x) }} \le L$ $\forall \x \in \bbR^D$ (which itself is a direct consequence of the mean value theorem). From theorem~\ref{th:Hess_pd}, $\smash{\norm{\nabla^2{ E(\x) }}} \le \lambda + \smash{\norm{\bar{\A}_k}}^2$, which we can take as $L$.
\end{proof}

\begin{thm}
  \label{th:Hessian-extremes}
  If $\lambda \gg 1$ then $\nabla^2{ E(\x) } \approx \lambda \I$ $\forall \x \in \bbR^D$. If $\lambda \ll 1$ then $\nabla^2{ E(\x^*) } \approx \lambda \I$.
\end{thm}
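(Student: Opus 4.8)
The plan is to write the Hessian as $\nabla^2 E(\x) = \lambda\I + \SS(\x)$, where $\SS(\x) = \smash{\bar{\A}}^T_k(\diag{\p(\x)} - \p(\x)\p(\x)^T)\bar{\A}_k$ is the softmax Hessian, and then to control $\SS(\x)$ separately in the two regimes. Note that $\SS(\x)$ is exactly $\nabla^2 g_k(\x)$, so by theorem~\ref{th:log-softmax-cvx} it is positive semidefinite; and the argument in the proof of theorem~\ref{th:Hess_pd} shows $\norm{\SS(\x)} \le \smash{\norm{\bar{\A}_k}}^2$ for \emph{every} $\x \in \bbR^D$, with a bound that does not involve $\lambda$. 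These two facts are the only ingredients.

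The first claim ($\lambda \gg 1$) is then essentially immediate: dividing by $\lambda$,
\[
  \norm{\tfrac{1}{\lambda}\nabla^2 E(\x) - \I} \;=\; \tfrac{1}{\lambda}\norm{\SS(\x)} \;\le\; \smash{\norm{\bar{\A}_k}}^2/\lambda
\]
uniformly in $\x$, so $\nabla^2 E(\x)/\lambda \to \I$ as $\lambda \to \infty$, which is what $\nabla^2 E(\x) \approx \lambda\I$ means in this regime.

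For the second claim ($\lambda \ll 1$) the idea is that the minimizer $\x^*$ is pushed into a region where the softmax probability concentrates on class $k$, where $\SS$ vanishes. I would carry this out in three steps. \emph{(i)} Show $g_k(\x^*) \to 0$ as $\lambda \to 0$: since $\x^*$ minimizes $E(\cdot;\lambda)$ and $\frac{\lambda}{2}\norm{\x^* - \overline{\x}}^2 \ge 0$, for every $\x$ we have $g_k(\x^*) \le E(\x^*;\lambda) \le \frac{\lambda}{2}\norm{\x - \overline{\x}}^2 + g_k(\x)$; taking $\x$ to be near-minimizers of $g_k$ and using that class $k$ is \emph{reachable}, i.e.\ $\inf_{\x} g_k(\x) = 0$, and then letting $\lambda \to 0$ forces $g_k(\x^*) \to 0$. \emph{(ii)} Hence $p_k(\x^*) = e^{-g_k(\x^*)} \to 1$, and since the other components $p_j(\x^*)$ are nonnegative and sum to $1 - p_k(\x^*)$, we get $\p(\x^*) \to \e_k$, the $k$th standard unit vector. \emph{(iii)} By continuity $\diag{\p(\x^*)} - \p(\x^*)\p(\x^*)^T \to \diag{\e_k} - \e_k\e^T_k = \0$, hence $\SS(\x^*) \to \0$, i.e.\ $\norm{\nabla^2 E(\x^*) - \lambda\I} = \norm{\SS(\x^*)} \to 0$.

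The delicate point, and the one I expect to be the main obstacle, is making steps (i)--(ii) rigorous, i.e.\ genuinely pinning $\p(\x^*)$ to the vertex $\e_k$. This uses that class $k$ is reachable; it holds under mild hypotheses (for instance when $\A$ has full rank, so the $\overline{\aa}_{kj} = \aa_j - \aa_k$, $j \ne k$, are linearly independent and the system $\smash{\overline{\aa}}^T_{kj}\vv < 0$ has a solution, along which $g_k \to 0$), but absent such a condition $\x^*$ may instead converge to a finite minimizer of $g_k$ at which the softmax Hessian need not be small, so in full generality the statement is a generic / leading-order one. It is also worth flagging that the two halves are ``approximate'' in different senses: the first is relative ($\nabla^2 E(\x)/\lambda \to \I$), the second absolute ($\nabla^2 E(\x^*) - \lambda\I \to \0$).
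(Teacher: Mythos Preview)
Your proposal is correct and follows the same line as the paper's proof: bound the softmax Hessian uniformly for $\lambda \gg 1$, and for $\lambda \ll 1$ argue that $\p(\x^*) \to \e_k$ so that $\diag{\p} - \p\p^T \to \0$. In fact you go further than the paper, which simply \emph{asserts} $p_k(\x^*) \approx 1$ without justification; your variational argument $g_k(\x^*) \le E(\x^*;\lambda) \le \tfrac{\lambda}{2}\norm{\x-\overline{\x}}^2 + g_k(\x)$ supplies the missing step, and your observation that this requires reachability of class $k$ (e.g.\ via linear independence of the $\overline{\aa}_{kj}$) and your distinction between the relative and absolute senses of ``$\approx$'' are points the paper glosses over entirely.
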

\begin{proof}
  For $\lambda \gg 1$ the result follows because the softmax Hessian is bounded (from theorem~\ref{th:Hess_pd}). For $\lambda \ll 1$, the softmax probabilities satisfy $p_k(\x^*) \approx 1$ and $p_j(\x^*) \approx 0$ if $j \neq k$, so $\diag{\p(\x^*)} - \p(\x^*) \p(\x^*)^T \approx \0$ and the result follows.
\end{proof}

\begin{thm}
  \label{th:Hess_SMW}
  The inverse Hessian can be written as:
  \begin{equation}
    \label{e:Hess_SMW}
    \left( \nabla^2 E \right)^{-1} = \left( \lambda \I + \smash{\bar{\A}}^T_k (\diag{\p} - \p \p^T) \bar{\A}_k \right)^{-1} = \HH^{-1} + \frac{(\HH^{-1} \vv) (\HH^{-1} \vv)^T}{1 - \vv^T \HH^{-1} \vv}
  \end{equation}
  where $\vv = \smash{\bar{\A}}^T_k \p$ and $\HH^{-1} = \frac{1}{\lambda} \left( \I - \smash{\bar{\A}}^T_k \left( \bar{\A}_k \smash{\bar{\A}}^T_k + \lambda \, \smash{\diag{\p}}^{-1} \right)^{-1} \bar{\A}_k \right)$. Likewise:
  \begin{equation}
    \label{e:Hess_SMW2}
    - \left( \nabla^2 E \right)^{-1} \nabla E = - \lambda \left( \nabla^2 E \right)^{-1} (\x - \overline{\x}) - \frac{1}{1 - \vv^T \HH^{-1} \vv} \HH^{-1} \vv.
  \end{equation}
\end{thm}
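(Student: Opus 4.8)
The plan is to apply the Sherman–Morrison–Woodbury (SMW) identity twice, in two nested layers. The softmax Hessian $\bar{\A}^T_k(\diag{\p}-\p\p^T)\bar{\A}_k$ decomposes as a positive part $\bar{\A}^T_k\diag{\p}\bar{\A}_k$ minus a rank-one part $(\bar{\A}^T_k\p)(\bar{\A}^T_k\p)^T = \vv\vv^T$. So I would first write $\nabla^2 E = \HH - \vv\vv^T$ with $\HH = \lambda\I + \bar{\A}^T_k\diag{\p}\bar{\A}_k$, and apply the rank-one SMW update: $(\HH-\vv\vv^T)^{-1} = \HH^{-1} + \frac{\HH^{-1}\vv\vv^T\HH^{-1}}{1-\vv^T\HH^{-1}\vv}$, which is exactly the claimed form $\HH^{-1} + \frac{(\HH^{-1}\vv)(\HH^{-1}\vv)^T}{1-\vv^T\HH^{-1}\vv}$. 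For this step to be legitimate I must check that $1-\vv^T\HH^{-1}\vv \neq 0$; this follows because $\nabla^2 E$ is positive definite (theorem~\ref{th:Hess_pd}), hence invertible, and the SMW denominator is (up to the sign and a positive factor) the ratio $\det(\nabla^2 E)/\det(\HH)$, both of which are positive. Concretely, $\det(\HH - \vv\vv^T) = \det(\HH)(1 - \vv^T\HH^{-1}\vv)$, and since the left side and $\det \HH$ are both strictly positive, $1 - \vv^T\HH^{-1}\vv > 0$.

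Next I would verify the stated formula for $\HH^{-1}$ itself. Here $\HH = \lambda\I + \bar{\A}^T_k\diag{\p}\bar{\A}_k$, which is a low-rank ($\le K$) perturbation of $\lambda\I$, so a second application of SMW (in Woodbury form, pushing the inverse to the $K\times K$ space) gives
\begin{equation*}
  \HH^{-1} = \frac{1}{\lambda}\Bigl(\I - \bar{\A}^T_k\bigl(\diag{\p}^{-1} + \tfrac{1}{\lambda}\bar{\A}_k\bar{\A}^T_k\bigr)^{-1}\tfrac{1}{\lambda}\bar{\A}_k\Bigr)
  = \frac{1}{\lambda}\Bigl(\I - \bar{\A}^T_k\bigl(\bar{\A}_k\bar{\A}^T_k + \lambda\,\diag{\p}^{-1}\bigr)^{-1}\bar{\A}_k\Bigr),
\end{equation*}
where I pull a factor $\lambda$ out of the inner inverse to clear the $\frac{1}{\lambda}$. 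The middle matrix $\bar{\A}_k\bar{\A}^T_k + \lambda\diag{\p}^{-1}$ is $K\times K$ and invertible (it is positive definite: $\bar{\A}_k\bar{\A}^T_k$ is PSD and $\lambda\diag{\p}^{-1}$ is PD since every $p_i > 0$), so the expression is well defined. Writing $\HH = \lambda\I + \bar{\A}^T_k\,\diag{\p}^{1/2}\,\diag{\p}^{1/2}\,\bar{\A}_k$ and treating $\diag{\p}^{1/2}\bar{\A}_k$ as the low-rank factor makes the Woodbury application completely mechanical.

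Finally, for~\eqref{e:Hess_SMW2} I would substitute the gradient expression~\eqref{e:grad}, $\nabla E = \lambda(\x-\overline{\x}) + \bar{\A}^T_k\p = \lambda(\x-\overline{\x}) + \vv$, so that
\begin{equation*}
  -(\nabla^2 E)^{-1}\nabla E = -\lambda(\nabla^2 E)^{-1}(\x-\overline{\x}) - (\nabla^2 E)^{-1}\vv,
\end{equation*}
and it remains to show $(\nabla^2 E)^{-1}\vv = \frac{1}{1-\vv^T\HH^{-1}\vv}\HH^{-1}\vv$. Applying the first SMW identity to $\vv$ gives $(\nabla^2 E)^{-1}\vv = \HH^{-1}\vv + \frac{\HH^{-1}\vv\,(\vv^T\HH^{-1}\vv)}{1-\vv^T\HH^{-1}\vv} = \HH^{-1}\vv\cdot\bigl(1 + \frac{\vv^T\HH^{-1}\vv}{1-\vv^T\HH^{-1}\vv}\bigr) = \frac{\HH^{-1}\vv}{1-\vv^T\HH^{-1}\vv}$, using that $\vv^T\HH^{-1}\vv$ is a scalar. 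This is exactly the claimed identity. The only genuine subtlety — as opposed to bookkeeping — is ensuring the SMW denominators never vanish; everything else is a direct substitution, and the positive-definiteness facts from theorem~\ref{th:Hess_pd} (together with $p_i > 0$) handle that.
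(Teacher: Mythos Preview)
Your proposal is correct and follows essentially the same route as the paper: write $\nabla^2 E = \HH - \vv\vv^T$ with $\HH = \lambda\I + \bar{\A}^T_k\diag{\p}\bar{\A}_k$, apply Sherman--Morrison to peel off the rank-one term, then apply Woodbury to $\HH$ to push the inverse into the $K\times K$ space. You supply more detail than the paper does (the positivity of the SMW denominators, invertibility of the inner $K\times K$ matrix, and the explicit simplification of $(\nabla^2 E)^{-1}\vv$ for~\eqref{e:Hess_SMW2}), but the argument is the same.
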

\begin{proof}
  Calling $\vv = \smash{\bar{\A}}^T_k \p$ and $\HH = \lambda \I + \smash{\bar{\A}}^T_k \diag{\p} \bar{\A}_k$, we can write the gradient as $\nabla E = \lambda (\x - \overline{\x}) + \vv$ and the Hessian as $\nabla^2 E = \HH - \vv \vv^T$. The results follow from applying the Sherman-Morrison-Woodbury formula $(\A+\B\C\D)^{-1}=\A^{-1}-\A^{-1}\B(\C^{-1}+\D\A^{-1}\B)^{-1}\D\A^{-1}$ twice: first to $\nabla^2 E$ by taking $\A = \HH$, $\B = \D^T = \vv$ and $\C = -1$, and then to $\HH$ by taking $\A = \lambda \I$, $\B^T = \D = \bar{\A}_k$ and $\C = \diag{\p}$.
\end{proof}
Although the expressions above look complicated, they reduce the computation of the Newton direction by requiring a $K \times K$ inverse (of $\bar{\A}_k \smash{\bar{\A}}^T_k + \lambda \, \smash{\diag{\p}}^{-1}$), rather than a $D \times D$ one (of $\nabla^2 E$). Since typically $K \ll D$, this is an enormous savings. Unfortunately, it is not possible to speed this up even further and avoid inverses altogether by caching a factorization of $\bar{\A}_k \smash{\bar{\A}}^T_k + \lambda \, \smash{\diag{\p}}^{-1}$, because it is a diagonal update. Note that we never actually invert any matrix: for reasons of numerical stability and efficiency, expressions of the form $\A^{-1} \b$ are computed by solving a linear system $\A \x = \b$ rather than by explicitly computing $\A^{-1}$ and multiplying it times \b\ (in Matlab, \verb_x = A\b_ rather than \verb_x = inv(A)*b_).

Having obtained the solution $\x^*(\lambda)$ for a value of $\lambda$, we obtain the corresponding softmax probabilities $p_i(\x^*(\lambda))$ for $i = 1,\dots,K$ by evaluating the softmax function at $\x^*$. Also, from the definition of the objective function in eq.~\eqref{e:objfcn}, we have that $p_i(\x^*(\lambda)) = \exp{\big(\frac{\lambda}{2} \norm{\x^*(\lambda) - \overline{\x}}^2 - E(\x^*(\lambda))\big)}$.

\subsection{Convergence and rate of convergence}
\label{s:objfcn:conv}

Theorem~\ref{th:objfcn:conv} states that Newton's method with a line search (l.s.\@) will converge to the minimizer of $E$ from any starting point (global convergence). It is a consequence of our results above and the following theorem%
\footnote{In this section (and elsewhere in the paper when referring to an iterate), we use the subindex $k$ to indicate the $k$th iterate in the optimization algorithm. This should not be confused with the use of $k$ to indicate a class index, as in eq.~\eqref{e:objfcn}.}.

\begin{thm}
  \label{th:Zoutendijk}
  Consider a function $f\mathpunct{:}\ \bbR^D \to \bbR$, not necessarily convex, bounded below and continuously differentiable in $\bbR^D$, and with $L$-Lipschitz continuous gradient (i.e., $\norm{\nabla f(\x) - \nabla f(\y)} \le L \norm{\x - \y}$ $\forall \x,\y \in \bbR^D$ and $L > 0$). Consider an iteration of the form $\x_{k+1} = \x_k + \alpha_k \sss_k$, where $\x_0 \in \bbR^D$ is any starting point, $\sss_k$ is a descent direction (i.e., $\sss^T_k \nabla f(\x_k) < 0$ if $\nabla f(\x_k) \neq \0$) and the step size $\alpha_k$ satisfies the Wolfe conditions. Then $\sum_{k \ge 0}{\cos^2{\theta_k} \, \norm{\nabla f(\x_k)}^2} < \infty$, where $\cos{\theta_k} = - \sss^T_k \nabla f(\x_k) / \norm{\sss_k} \norm{\nabla f(\x_k)}$. \\
  Now assume $\sss_k = - \B^{-1}_k \nabla f(\x_k)$ where, for each $k = 0,1,2\dots$, $\B_k$ is positive definite and $\cond{\B_k} \le M$ for some $M > 0$. Then $\norm{\nabla f(\x_k)} \to \0$ as $k \to \infty$.
\end{thm}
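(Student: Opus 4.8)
\section*{Proof proposal}

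The plan is to establish the two assertions in sequence: first the Zoutendijk summability bound $\sum_{k \ge 0}\cos^2{\theta_k}\,\norm{\nabla f(\x_k)}^2 < \infty$, and then deduce $\norm{\nabla f(\x_k)} \to \0$ from it using the extra conditioning hypothesis. This is the classical Zoutendijk argument (see e.g.\ Nocedal \& Wright), adapted to our notation; every step is elementary.

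For the first part I would write out the two Wolfe conditions on the step size $\alpha_k$, with constants $0 < c_1 < c_2 < 1$: the sufficient-decrease inequality $f(\x_{k+1}) \le f(\x_k) + c_1 \alpha_k \sss_k^T \nabla f(\x_k)$ and the curvature inequality $\sss_k^T \nabla f(\x_{k+1}) \ge c_2\, \sss_k^T \nabla f(\x_k)$. Subtracting $\sss_k^T \nabla f(\x_k)$ from the curvature inequality and bounding its left-hand side by Cauchy--Schwarz together with $L$-Lipschitz continuity of the gradient, namely $\sss_k^T(\nabla f(\x_{k+1}) - \nabla f(\x_k)) \le \alpha_k L \norm{\sss_k}^2$, yields the lower bound $\alpha_k \ge \frac{1-c_2}{L}\,\frac{-\sss_k^T \nabla f(\x_k)}{\norm{\sss_k}^2}$, whose right side is positive because $\sss_k$ is a descent direction. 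Substituting this into the sufficient-decrease inequality (the substitution reverses because $\sss_k^T \nabla f(\x_k) < 0$) gives $f(\x_{k+1}) \le f(\x_k) - \frac{c_1(1-c_2)}{L}\,\frac{(\sss_k^T \nabla f(\x_k))^2}{\norm{\sss_k}^2}$, and the last fraction is exactly $\cos^2{\theta_k}\,\norm{\nabla f(\x_k)}^2$. Summing over $k$ telescopes the $f$-terms, and since $f$ is bounded below we obtain $\sum_{k \ge 0}\cos^2{\theta_k}\,\norm{\nabla f(\x_k)}^2 \le \frac{L}{c_1(1-c_2)}\bigl(f(\x_0) - \inf f\bigr) < \infty$.

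For the second part, with $\sss_k = -\B_k^{-1}\nabla f(\x_k)$ I would lower-bound $\cos{\theta_k}$ uniformly in $k$. Writing $\g_k = \nabla f(\x_k)$ and assuming $\g_k \neq \0$ (otherwise $\norm{\nabla f(\x_k)} = 0$ and there is nothing to prove at that iterate), we have $\cos{\theta_k} = \g_k^T \B_k^{-1}\g_k \big/ \bigl(\norm{\B_k^{-1}\g_k}\,\norm{\g_k}\bigr)$. Since $\B_k$ is symmetric positive definite, $\g_k^T \B_k^{-1}\g_k \ge \lambda_{\max}(\B_k)^{-1}\norm{\g_k}^2$ and $\norm{\B_k^{-1}\g_k} \le \lambda_{\min}(\B_k)^{-1}\norm{\g_k}$, which combine to give $\cos{\theta_k} \ge \lambda_{\min}(\B_k)/\lambda_{\max}(\B_k) = 1/\cond{\B_k} \ge 1/M > 0$. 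From the first part, $\cos^2{\theta_k}\,\norm{\g_k}^2 \to 0$; since $\cos^2{\theta_k}$ is bounded below by $1/M^2 > 0$, it follows that $\norm{\g_k}^2 \to 0$, i.e.\ $\norm{\nabla f(\x_k)} \to \0$.

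I do not expect a genuine obstacle here, as this is a textbook result. The only point requiring care is in the first part: correctly chaining the curvature condition with Lipschitz continuity to extract the positive lower bound on $\alpha_k$, and tracking the signs of $\sss_k^T\nabla f(\x_k)$ and of the Wolfe constants so that the telescoping sum comes out finite. The rest is routine symmetric-matrix eigenvalue bounds and the definition of $\cos{\theta_k}$.
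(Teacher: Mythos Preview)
Your proof is correct and follows exactly the route the paper invokes: the paper's own proof simply cites Theorem~3.2 and Exercise~3.5 of Nocedal \& Wright and notes that $\cond{\B_k}\le M$ gives $\cos\theta_k\ge 1/M$, while you have spelled out precisely that textbook argument in full. There is no substantive difference in approach.
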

\begin{proof}
  This theorem is an amalgamation of theorem 3.2 (Zoutendijk's theorem) and exercise 3.5 in \citet{NocedalWright06a}. The bound $\cond{\B_k} \le M$ implies a bound $\cos{\theta_k} \ge \frac{1}{M} > 0$, from which the second result follows. The theorem also holds for other l.s.\ conditions such as the strong Wolfe conditions or the Goldstein conditions.
\end{proof}

\begin{thm}
  \label{th:objfcn:conv}
  Consider the objective function $E(\x)$ of eq.~\eqref{e:objfcn} and an iteration of the form $\x_{k+1} = \x_k - \alpha_k \smash{\left( \nabla^2 E(\x_k) \right)}^{-1} \, \nabla E(\x_k)$ for $k = 0,1,2\dots$, where $\x_0 \in \bbR^D$ is any starting point and the step size $\alpha_k$ satisfies the Wolfe conditions. Then $\lim_{k\to\infty}{\x_k} = \x^*$, the unique minimizer of $E$, and $\lim_{k\to\infty}{\norm{\nabla f(\x_k)}} = \0$.
\end{thm}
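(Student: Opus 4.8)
The plan is to show that the stated Newton iteration is a special case of the abstract iteration in Theorem~\ref{th:Zoutendijk} with $\B_k = \nabla^2 E(\x_k)$, invoke that theorem to get $\norm{\nabla E(\x_k)} \to \0$, and then upgrade this to $\x_k \to \x^*$ using strong convexity.

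First I would check that $f = E$ satisfies the hypotheses of Theorem~\ref{th:Zoutendijk}: $E$ is $C^\infty$ on $\bbR^D$; it is bounded below because it is strongly convex (Theorem~\ref{th:objfcn-strongly-cvx}), hence it attains a finite minimum $E(\x^*)$; and $\nabla E$ is $L$-Lipschitz with $L = \lambda + \smash{\norm{\bar{\A}_k}}^2$ by Theorem~\ref{th:grad_Lips}. Next I would verify the structure of the search direction. By Theorem~\ref{th:Hess_pd}, $\B_k = \nabla^2 E(\x_k)$ is positive definite for every $\x_k \in \bbR^D$, so $\sss_k = -\B_k^{-1}\nabla E(\x_k)$ is well-defined; and it is a descent direction whenever $\nabla E(\x_k) \neq \0$, since $\sss_k^T\nabla E(\x_k) = -\nabla E(\x_k)^T \B_k^{-1}\nabla E(\x_k) < 0$. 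Crucially, Theorem~\ref{th:Hess_pd} also provides a bound on $\cond{\B_k}$ that is uniform over all $\x_k \in \bbR^D$, namely $\cond{\B_k} \le 1 + \smash{\norm{\bar{\A}_k}}^2/\lambda =: M$. Together with the Wolfe step sizes $\alpha_k$, all hypotheses of Theorem~\ref{th:Zoutendijk} are met, so $\norm{\nabla E(\x_k)} \to \0$. (A minor point worth one sentence: the Wolfe conditions are actually satisfiable at each iterate, since $E$ is smooth and bounded below along the descent ray $\x_k + \alpha\sss_k$; this is the standard existence lemma, e.g.\ Lemma~3.1 in \citet{NocedalWright06a}, so the iteration is well-defined.)

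Finally I would pass from $\norm{\nabla E(\x_k)} \to \0$ to $\x_k \to \x^*$. Strong convexity of $E$ with modulus $\lambda$ (Theorem~\ref{th:objfcn-strongly-cvx}, coming from the $\frac{\lambda}{2}\norm{\x-\overline{\x}}^2$ term) yields the standard monotonicity inequality $\norm{\nabla E(\x) - \nabla E(\y)} \ge \lambda\norm{\x - \y}$ for all $\x,\y \in \bbR^D$. Taking $\y = \x^*$ and using $\nabla E(\x^*) = \0$ gives $\norm{\x_k - \x^*} \le \frac{1}{\lambda}\norm{\nabla E(\x_k)} \to 0$, i.e.\ $\x_k \to \x^*$, which is the unique minimizer by Theorem~\ref{th:objfcn-strongly-cvx}.

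I do not expect a real obstacle: the proof is essentially an assembly of the earlier results, and all the analytic content (positive-definiteness and the \emph{uniform} condition-number bound of the Hessian, Lipschitz gradient, strong convexity, boundedness below) has already been established. The only subtlety that deserves emphasis is precisely that the condition-number bound in Theorem~\ref{th:Hess_pd} holds uniformly over $\bbR^D$ and not merely near $\x^*$ — this is what turns the generic local convergence of Newton's method into the global statement claimed here.
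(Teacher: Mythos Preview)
Your proposal is correct and follows essentially the same route as the paper: verify the hypotheses of Theorem~\ref{th:Zoutendijk} using Theorems~\ref{th:Hess_pd} and~\ref{th:grad_Lips} (positive-definite Hessian with uniformly bounded condition number, Lipschitz gradient, $E$ bounded below), conclude $\norm{\nabla E(\x_k)}\to 0$, and then use strong convexity to pass to $\x_k\to\x^*$. Your final step via the inequality $\norm{\x_k-\x^*}\le\tfrac{1}{\lambda}\norm{\nabla E(\x_k)}$ is in fact more explicit than the paper's one-line appeal to uniqueness of the critical point.
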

\begin{proof}
  Since $\nabla^2 E(\x_k)$ is positive definite, $- \smash{\left( \nabla^2 E(\x_k) \right)}^{-1} \, \nabla E(\x_k)$ is a descent direction. The function $E(\x)$ is nonnegative (hence lower bounded) and continuously differentiable in $\bbR^D$. Its gradient $\nabla E (\x)$ is Lipschitz continuous in $\bbR^D$ (from theorem~\ref{th:grad_Lips}) and its Hessian's condition number is upper bounded (from theorem~\ref{th:Hess_pd}). Hence, from theorem~\ref{th:Zoutendijk}, $\norm{\nabla f(\x_k)} \to \0$ as $k \to \infty$. Since $E$ is strongly convex and differentiable, there can only be one point with zero gradient, which is the global minimizer $\x^*$, hence $\x_k \to \x^*$.
\end{proof}
Further, theorem 3.5 in \citet{NocedalWright06a} states that, if the starting point $\x_0$ is sufficiently close to $\x^*$ (which can always be achieved by the l.s.\@) and if using unit step sizes, then both the sequences of iterates $\{\x_k\}$ and of gradients $\{\nabla E(\x_k)\}$ converge \emph{quadratically} to $\x^*$ and \0, respectively. Although the theorem we give requires a l.s.\ with Wolfe conditions, a backtracking search that always tries the unit step size first also works, because it is sufficient to take us near enough the minimizer and then using unit step sizes converges quadratically.

For gradient descent, global convergence is also assured with a l.s.\ (by taking $\B_k = \I$ in theorem~\ref{th:Zoutendijk}) or with a fixed step size of $1/L$ \citep{Nester04a}. For other algorithms (Polak-Ribi{\`e}re CG, and (L-)BFGS) it is harder to give robust global convergence results \citep{NocedalWright06a}, but in any case they are inferior to Newton's method in our case.

\subsection{Computational complexity}
\label{s:objfcn:cmplx}

For Newton's method, the dominant costs are as follows. There is a setup cost of $\calO(D K^2)$ to compute $\bar{\A}_k \smash{\bar{\A}}^T_k$. Per iteration, we have a cost of $\calO(D (2K+1))$ for the gradient, $\calO(2K^3 + 6KD)$ for the Newton direction, and $\calO(D (K+1))$ per backtracking step; our experiments show that most iterations use a single backtracking step (see fig.~\ref{f:histogr}). If $D \gg K$ one Newton iteration costs about 3 gradient computations.

\subsection{Solving for a path over $\lambda$}
\label{s:objfcn:path}

It is of interest to solve problem~\eqref{e:objfcn} not just for a single value of $\lambda$, but for a range of values of $\lambda$. This can provide a set of counterfactual explanations at different distances and with different target class probabilities, which a user may examine and choose from. It is also a way to solve the constrained form of the problem, namely ``$\min_{\x}{ \norm{\x - \overline{\x}} }$ s.t.\ $g_k(\x) \le \alpha$'', since tracing this over $\alpha$ corresponds to tracing~\eqref{e:objfcn} over $\lambda$.

Using our algorithm to optimize for each $\lambda$ value, solving for a path of $\lambda$ from large to small (say $10^3$ to $10^{-5}$) is very efficient by using warm-start over $\lambda$. The first $\lambda$ value is initialized from $\x = \overline{\x}$, to which the solution is very close. If $\lambda$ changes slowly, i.e., we follow a fine path, a step size of 1 in Newton's method typically works, but we try more $\lambda$ values. Alternatively, we can change $\lambda$ more quickly, i.e., we follow a coarse path and try few $\lambda$ values, but we will need more Newton iterations per $\lambda$. Either way, since the algorithm is very fast, this does not matter much, and solving for the whole path is just a bit slower than solving for a single $\lambda$. We illustrate this in our experiments (see fig.~\ref{f:lambdas}).

\subsection{Discussion}
\label{s:objfcn:disc}

An inverse classification problem (e.g.\ a counterfactual explanation) requires a definition of distance or norm in instance space that measures the cost of changing the source instance. In general, the norm depends on the application. For example, some works in the adversarial examples literature have used the $\ell_2$ norm, such as \citet[section 4.1]{Szeged_14a} or \citet[section 1]{Moosav_16a}. Other works have used the $\ell_1$ or $\ell_{\infty}$ norm. In this work we use only the $\ell_2$ norm because, together with the logistic regression or softmax classifier, it makes a very efficient computation possible for its unique solution. We are not aware of a similarly efficient solution for other norms. This is particularly useful in real-time or interactive scenarios where fast feedback matters. For instance, in a credit approval or document moderation system, a user may want to understand which minimal changes to the input, such as adjusting an income-related feature or modifying a small portion of the text, would be sufficient to change the model's decision. The efficiency of our approach makes it possible to compute such counterfactual explanations on demand, allowing users or practitioners to explore different ``what-if'' scenarios interactively.

Although the objective function $E$ is nonlinear, the special structure of its Hessian makes minimizing it very effective. Firstly, since $E$ is strongly convex, there is a unique solution no matter the value of $\lambda$ or the target class $k$. Second, if $K < D$ (which is the typical case in practice, fewer classes than features) then the Hessian has $D-K+1$ eigenvalues equal to $\lambda$ and $K-1$ greater or equal than $\lambda$. This means that the Hessian is much better conditioned than it would otherwise be; it is ``round'' in all but $K-1$ directions, and all optimization methods will benefit from that. That said, using the curvature information is still important to take long steps. Finally, the Hessian is actually round everywhere if $\lambda \gg 1$ or near the minimizer if $\lambda \ll 1$.

This special structure also makes Newton's method unusually convenient. In practice, Newton's method is rarely applicable in direct form for two important reasons: 1) the Hessian is often not positive definite and hence can create directions that are not descent, so it needs modification, which is computationally costly (e.g.\ it may require factorizing it). 2) Computing the Newton direction requires computing the Hessian (in $\calO(D^2)$ time and memory) and solving a linear system based on it (in $\calO(D^3)$ time). To make this feasible with large $D$ requires approximating Newton's method, which leads to taking worse steps and losing its quadratic convergence properties. None of this is a problem here: our Hessian is positive definite everywhere, and computing the Newton direction (via theorem~\ref{th:Hess_SMW}) requires a linear system of a $K \times K$ matrix, where $K$ is in practice small (less than 100 is the vast majority of applications), without ever forming a $D \times D$ matrix, which then scales to very large $D$. Finally, while Newton's method still requires a line search for the step size to ensure the iterates descend, near the minimizer a unit step works and leads to quadratic convergence. As a result we achieve the best of all possible worlds: global convergence (from any starting point); very fast iterations, scalable to high feature dimensionalities and many classes; and quadratic convergence order, which means it is possible to reach near machine-precision accuracy.

This theoretical argument strongly suggests that no other (first-order) method can compete with Newton's method in this case, particularly if we seek a highly accurate solution (unless we use a huge number of classes $K$), and this is clearly seen in our experiments. The runtime ranges from milliseconds to a second over problems with feature dimension ranging from $10^3$ to $10^5$ and tens of classes.

This may surprise readers used to optimizing convex problems in machine learning using gradient descent with a fixed step size and no line search (which converges linearly with strongly convex problems), or even an online form of gradient descent. This is convenient when the objective function is costly to evaluate, for example consisting of a sum over a large training set. But the setting here is different: it is as if we had a single training instance. Hence, using a quadratically convergent method with a line search is far faster.

\section{Inverse classification as optimization: logistic regression}
\label{s:logreg}

Logistic regression corresponds to the case $K = 2$ of the softmax classifier. All the previous results apply, but in this case we can find $\x^*$ in closed form%
\footnote{By ``closed form'' we mean in terms of the scalar function $\phi(\alpha,\beta)$ (even though computing the latter requires an iterative algorithm).},
as given in theorem~\ref{th:logreg}. Let us take $k = 1$ as the target class w.l.o.g.\ and define $\w = \aa_2 - \aa_1$ and $w_0 = b_2 - b_1$ given the softmax parameters $\{\A,\b\}$, or we can just think of $\{\w,w_0\}$ as the weight vector and bias of the logistic regression model, where \w\ points towards class 1. Then we have
\begin{subequations}
  \label{e:logreg}
  \begin{align}
    \textcolor{blue}{p_1(\x)} &= \textcolor{blue}{\frac{1}{1 + e^{\w^T\x+w_0}} \in (0,1)} \\
    p_2(\x) &= 1 - p_1(\x) \\
    g_1(\x) &= \log{(1 + e^{\w^T\x+w_0})} \\
    \textcolor{blue}{\nabla_{\x}{ E(\x;\lambda) } } &= \textcolor{blue}{\lambda (\x - \overline{\x}) + (1 - p_1(\x)) \w} \\
    \textcolor{blue}{\nabla^2_{\x\x}{ E(\x;\lambda) } } &= \textcolor{blue}{\lambda \I + p_1(\x) (1 - p_1(\x)) \w \w^T}.
  \end{align}
\end{subequations}
Define the function $\textcolor{blue}{\phi(\alpha,\beta)}\mathpunct{:}\ \bbR_+ \times \bbR \to (0,1)$ as the unique root of the equation \textcolor{blue}{$t = 1/(1+\exp{(\alpha t + \beta)})$} for $\alpha>0$ and $\beta \in \bbR$. The next theorem shows $\phi$ is well defined%
\footnote{The function $\phi(\alpha,\beta)$ for $\alpha < 0$ was studied in \cite[pp.~5--6]{CarreirWilliam03a}, where it arises in the context of finding the maximum of a Gaussian mixture with two components. In that case there can be one, two or three roots depending on $\alpha$ and $\beta$, so $\phi$ can be multivalued, unlike in our case, where it is univalued.}.

\begin{thm}
  \label{th:phi-fcn}
  Let $\alpha>0$, $\beta \in \bbR$ and $h(t) = t - 1/(1+\exp{(\alpha t + \beta)})$ for $t \in \bbR$. Then $h$ has a unique root $t^*$, $h(t^*) = 0$, and $t^* \in (0,1)$.
\end{thm}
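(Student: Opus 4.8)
The plan is to show existence and uniqueness of a root of $h(t) = t - 1/(1+\exp(\alpha t + \beta))$ by a monotonicity argument, and to locate the root in $(0,1)$ by checking the signs of $h$ at the endpoints $t=0$ and $t=1$. Write $\sigma(s) = 1/(1+e^s)$ for the (decreasing) sigmoid, so that $h(t) = t - \sigma(\alpha t + \beta)$. Since $\alpha > 0$, the map $t \mapsto \alpha t + \beta$ is increasing, and $\sigma$ is strictly decreasing, so $t \mapsto \sigma(\alpha t + \beta)$ is strictly decreasing; hence $h(t) = t - \sigma(\alpha t + \beta)$ is the sum of a strictly increasing function and another strictly increasing function ($-\sigma(\alpha t+\beta)$), so $h$ is strictly increasing on $\bbR$. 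A strictly increasing continuous function has at most one root, which settles uniqueness.

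For existence, I would evaluate $h$ at the endpoints of $(0,1)$. At $t = 0$: $h(0) = 0 - \sigma(\beta) = -\sigma(\beta) < 0$, since $\sigma$ takes values in $(0,1)$. At $t = 1$: $h(1) = 1 - \sigma(\alpha + \beta) > 0$, again since $\sigma(\alpha+\beta) \in (0,1)$. Because $h$ is continuous on the closed interval $[0,1]$ and changes sign, the intermediate value theorem gives a root $t^* \in (0,1)$; combined with the strict monotonicity above, this root is unique (not just in $(0,1)$ but in all of $\bbR$). That also shows $t^* \in (0,1)$ as claimed. Equivalently, one can observe directly that any root must satisfy $t^* = \sigma(\alpha t^* + \beta) \in (0,1)$ since $\sigma$ maps $\bbR$ into $(0,1)$, which already pins the root inside the unit interval without appealing to the endpoint signs; but the endpoint-sign argument is what delivers existence.

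There is essentially no hard part here — the only thing to be careful about is getting the direction of monotonicity right, i.e.\ that $\alpha > 0$ together with the sigmoid being \emph{decreasing} (note the sign convention $1/(1+e^s)$, not $1/(1+e^{-s})$) makes $h$ strictly increasing rather than the composition having any non-monotone behavior; this is exactly the feature that distinguishes the present $\alpha>0$ case from the $\alpha<0$ case mentioned in the footnote, where multiple roots can occur. One could alternatively phrase the monotonicity via $h'(t) = 1 + \alpha\, \sigma(\alpha t+\beta)(1-\sigma(\alpha t+\beta)) \ge 1 > 0$, which makes strict monotonicity manifest and avoids any appeal to sums of increasing functions; I would likely present it this way for brevity.
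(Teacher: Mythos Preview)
Your proposal is correct and matches the paper's own proof essentially line for line: the paper argues that $h'(t) > 0$ for all $t \in \bbR$ (your derivative computation), so $h$ is strictly increasing, and then notes $h(0) < 0$ and $h(1) > 0$ to locate the unique root in $(0,1)$.
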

\begin{proof}
  This holds because $h'(t) > 0$ $\forall t \in \bbR$, so $h$ is strictly monotonically increasing, and $h(0)<0$ and $h(1)>0$.
\end{proof}

\begin{thm}
  \label{th:logreg}
  The unique minimizer of $E(\x;\lambda)$ (from eq.~\eqref{e:objfcn}) is \textcolor{blue}{$\x^* = \overline{\x} - \frac{1}{\lambda} (1 - p^*_1) \w$} where \textcolor{blue}{$p^*_1 = \phi(\alpha,\beta)$}, \textcolor{blue}{$\alpha = \frac{1}{\lambda} \norm{\w}^2 > 0$} and \textcolor{blue}{$\beta = \w^T \overline{\x} + w_0 - \alpha \in \bbR$}.
\end{thm}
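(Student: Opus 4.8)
The plan is to lean on strong convexity. By Theorem~\ref{th:objfcn-strongly-cvx}, $E(\x;\lambda)$ has a unique minimizer $\x^*$, and since $E$ is differentiable this $\x^*$ is the unique point with $\nabla E = \0$. So I would start from the stationarity equation, using the gradient in eq.~\eqref{e:logreg}: $\lambda(\x^* - \overline{\x}) + (1 - p_1(\x^*))\,\w = \0$, which rearranges to $\x^* = \overline{\x} - \frac{1}{\lambda}(1 - p_1(\x^*))\,\w$. The key structural remark is that this already pins $\x^*$ to the affine line $\{\overline{\x} + s\w : s \in \bbR\}$, so the $D$-dimensional problem collapses to determining a single scalar, namely $p_1^* \bydef p_1(\x^*)$.

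Next I would substitute this expression for $\x^*$ into the logistic probability formula. Setting $\alpha = \frac{1}{\lambda}\norm{\w}^2$ and computing the logit, $\w^T\x^* + w_0 = \w^T\overline{\x} + w_0 - \frac{1}{\lambda}(1 - p_1^*)\norm{\w}^2 = (\w^T\overline{\x} + w_0 - \alpha) + \alpha p_1^* = \beta + \alpha p_1^*$, with $\beta = \w^T\overline{\x} + w_0 - \alpha$. Hence $p_1^* = p_1(\x^*) = 1/(1 + e^{\alpha p_1^* + \beta})$, i.e., $p_1^*$ is a root of the scalar equation $t = 1/(1 + \exp(\alpha t + \beta))$. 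Since $\alpha > 0$ and $\beta \in \bbR$, Theorem~\ref{th:phi-fcn} guarantees this equation has a unique root, which lies in $(0,1)$ and is by definition $\phi(\alpha,\beta)$; therefore $p_1^* = \phi(\alpha,\beta)$, and substituting back gives $\x^* = \overline{\x} - \frac{1}{\lambda}(1 - \phi(\alpha,\beta))\,\w$ with $\alpha,\beta$ as claimed.

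To close the argument I would verify the converse direction, so that what we have is not merely a necessary condition: define $\x^\star = \overline{\x} - \frac{1}{\lambda}(1 - \phi(\alpha,\beta))\,\w$, recompute its logit by the same calculation to get $\w^T\x^\star + w_0 = \beta + \alpha\,\phi(\alpha,\beta)$, and use the fixed-point property of $\phi$ to conclude $p_1(\x^\star) = \phi(\alpha,\beta)$. Then $\nabla E(\x^\star;\lambda) = \lambda(\x^\star - \overline{\x}) + (1 - \phi(\alpha,\beta))\,\w = -(1 - \phi(\alpha,\beta))\,\w + (1 - \phi(\alpha,\beta))\,\w = \0$, and strong convexity identifies $\x^\star$ as the unique global minimizer.

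I do not expect a genuine obstacle: the proof is essentially a one-line substitution once one notices the reduction to the line through $\overline{\x}$ along $\w$. The one point requiring care is the self-reference — after substitution $p_1^*$ appears on both sides — so one must invoke Theorem~\ref{th:phi-fcn} to know the fixed point exists, is unique, and (for consistency with $p_1$ being a probability) lies in $(0,1)$, rather than just asserting it. It is also worth noting in passing that $1 - \phi(\alpha,\beta) \in (0,1)$, so $\x^*$ is displaced from $\overline{\x}$ along $-\w$ by a distance strictly less than $\norm{\w}/\lambda$, which is the expected ``small move toward class~1'' behavior.
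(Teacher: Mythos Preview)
Your proof is correct and follows essentially the same route as the paper: use strong convexity to reduce to the stationarity condition, rearrange to place $\x^*$ on the line $\overline{\x}+s\w$, substitute into the logistic formula to obtain the scalar fixed-point equation, and invoke Theorem~\ref{th:phi-fcn}. Your added converse verification is fine but redundant, since strong convexity already makes $\nabla E=\0$ both necessary and sufficient.
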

\begin{proof}
  Since $E$ is strongly convex and continuously differentiable, $\x^* \in \bbR^D$ is the unique solution of the nonlinear system of $D$ equations $\nabla{ E(\x^*) } = \lambda (\x^* - \overline{\x}) + (1 - p_1(\x^*)) \w = \0 \Leftrightarrow \x^* = \overline{\x} - \frac{1}{\lambda} (1 - p_1(\x^*)) \w$. Hence $\w^T \x^* + w_0 = \w^T \overline{\x} + w_0 - \frac{1}{\lambda} (1 - p_1(\x^*)) \smash{\norm{\w}^2}$. Since, from eq.~\eqref{e:logreg}, $p_1(\x) = 1/(1 + \exp{(\w^T\x+w_0)})$, substituting in it the previous expression yields $p^*_1 = 1/(1+\exp{(\alpha p^*_1 + \beta)})$. This is a scalar equation for $p^*_1 \in (0,1)$ whose solution, from theorem~\ref{th:phi-fcn}, is $p^*_1 = \phi(\alpha,\beta)$.
\end{proof}
Theorem~\ref{th:logreg} shows that the problem of minimizing $E$ in $D$ dimensions \emph{is really a problem in one dimension}, and the solution $\x^*$ lies along the ray emanating from $\overline{\x}$ along \w. Hence, to find the minimizer of eq.~\eqref{e:objfcn}, we compute $\alpha$, $\beta$, $p^*_1$ and $\x^*$ using theorem~\ref{th:logreg}. To compute $p^*_1 = \phi(\alpha,\beta)$, we have to solve a scalar equation. This can be done using a robustified version of Newton's method in one variable (e.g.\ by falling back to a bisection step if the Newton step jumps out of the current root bracket, as in \cite{VladymCarreir13a}). Convergence to machine precision occurs in a few iterations, in $\calO(1)$. The overall computational complexity is $\calO(D)$, dominated by reading the data ($\overline{\x}$, \w) and doing two vector-vector products ($\w^T \overline{\x}$ and $\w^T \w$). For $D$ ranging from $10^3$ to $10^5$, the runtime ranges from microseconds to milliseconds. As shown in our experiments, this is far faster than running Newton's method in the $D$-dimensional space.

\section{Experiments}
\label{s:expts}

Our experiments are very straightforward and agree well with the theory. In short, Newton's method consistently achieves the fastest runtime of any algorithm, by far, while reaching the highest precision possible. For logistic regression, the closed-form solution is even better. The appendix gives gives more details.

\subsection{$K$-class softmax}

We solve problems of the form~\eqref{e:objfcn} on 4 datasets of different characteristics: MNIST ($D =$ 784, $K =$ 10), of handwritten digit images with grayscale pixel features \citep{Lecun_98a}; RCV1 ($D =$ 47\,236, $K =$ 51), of documents with TFIDF features \citep{Lewis_04a}; VGGFeat64 ($D = 8\,192$, $K =$ 16), of neural net features from the last convolutional layer of a pre-trained VGG16 deep net \citep{SimonyZisser15a} on a subset of 16 classes of the ImageNet dataset \citep{Deng_09a}, where each image has been resized to 64$\times$64 pixels; and VGGFeat256 ($D =$ 131\,072, $K =$ 16), like VGGFeat64 but using images of 256$\times$256 to create the features. For each dataset we trained a softmax classifier using \texttt{scikit-learn} \citep{Pedreg_11a}.

We evaluated several well-known optimization methods, and for each we use the line search that we found worked best. For Newton's method, BFGS and L-BFGS we use a backtracking line search with initial step 1 and backtracking factor $\rho =$ 0.8. For gradient descent (GD) and Polak-Ribi{\`e}re conjugate gradient (CG), we use a more sophisticated line search that allows steps longer than 1 and ensures the Wolfe conditions hold (algorithm 3.5 in \cite{NocedalWright06a}). All methods were implemented by us in Matlab except for CG, which uses Carl Rasmussen's \texttt{minimize.m} very efficient implementation. For L-BFGS, we tried several values of its queue size and found $m = 4$ worked best in our datasets. Each method iterates until $\norm{\nabla E(\x_k)} < 10^{-8}$, up to a maximum of 1\,000 iterations (which only GD reaches, occasionally). The initial iterate was always the source instance $\x_0 = \overline{\x}$.

For each dataset, we generated 50 instances of problem~\eqref{e:objfcn}, each given by a source instance $\overline{\x}$, a target class $k$ and a value of $\lambda$ (see details in the appendix). Fig.~\ref{f:all-examples} shows a scatterplot of objective function value $E(\x^*)$, number of iterations and runtime in seconds for each method on each problem instance. It is clear that Newton's method works better than any other method, by far (note the plots are in log scale). It takes around 10 iterations to converge and between 1 and 100 ms runtime---a remarkable feat given that the problems are nonlinear and reach over $10^5$ variables. Next best are CG and L-BFGS, which take about 10 times longer; and finally, GD, which is far slower. BFGS is only applicable in small- to medium-size problems because it stores a Hessian matrix approximation explicitly; in our problems this makes it very slow even if it does not require many iterations.

\begin{figure}[t]
  \centering
  \psfrag{Newton}{\scriptsize Newton}
  \psfrag{Gradient}{\scriptsize Gradient}
  \psfrag{L-BFGS}{\scriptsize L-BFGS}
  \psfrag{BFGS}{\scriptsize BFGS}
  \psfrag{CG}{\scriptsize CG}
  \begin{tabular}{@{}c@{}c@{}c@{}c@{}}
    VGGFeat256 & VGGFeat64 & RCV1 & MNIST \\
    \psfrag{E}[][]{$E$}
    \psfrag{its}[][]{$\log_{10}$(\#iterations)}
    \includegraphics*[width=0.25\linewidth]{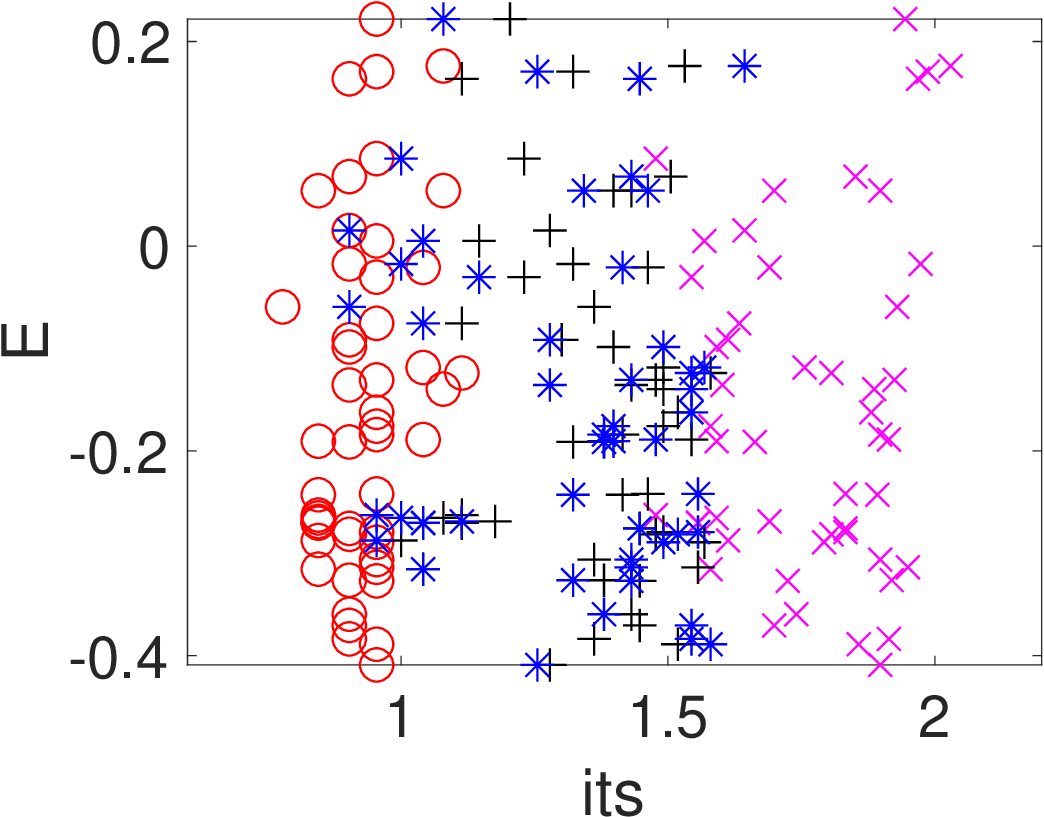}&
    \psfrag{E}[][]{}
    \psfrag{its}[][]{$\log_{10}$(\#iterations)}
    \includegraphics*[width=0.25\linewidth]{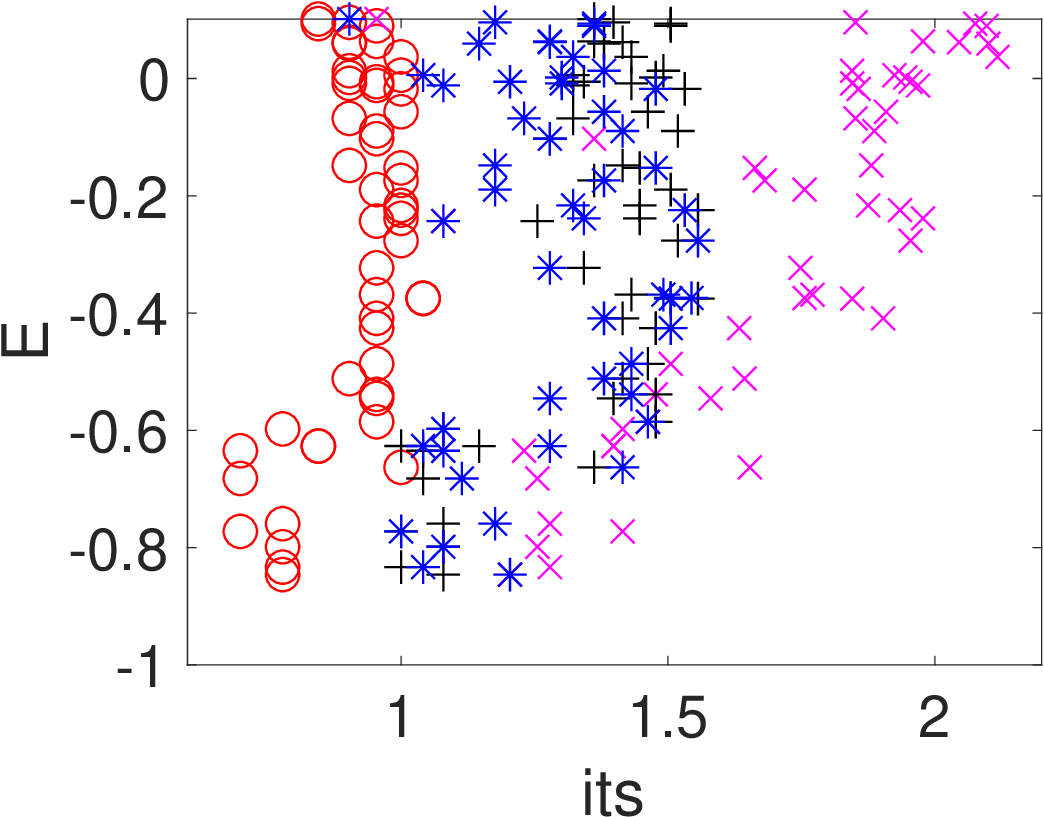}&
    \psfrag{E}[][]{}
    \psfrag{its}[][]{$\log_{10}$(\#iterations)}
    \includegraphics*[width=0.25\linewidth]{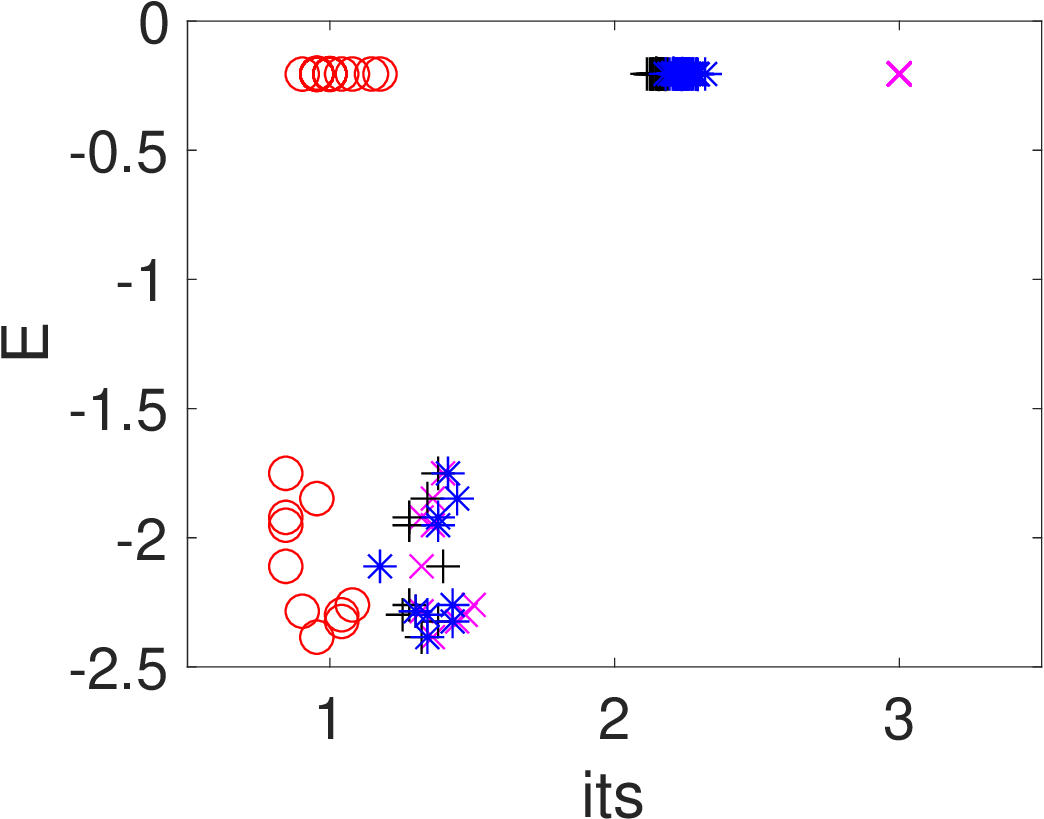}&
    \psfrag{E}[][]{}
    \psfrag{its}[][]{$\log_{10}$(\#iterations)}
    \includegraphics*[width=0.25\linewidth]{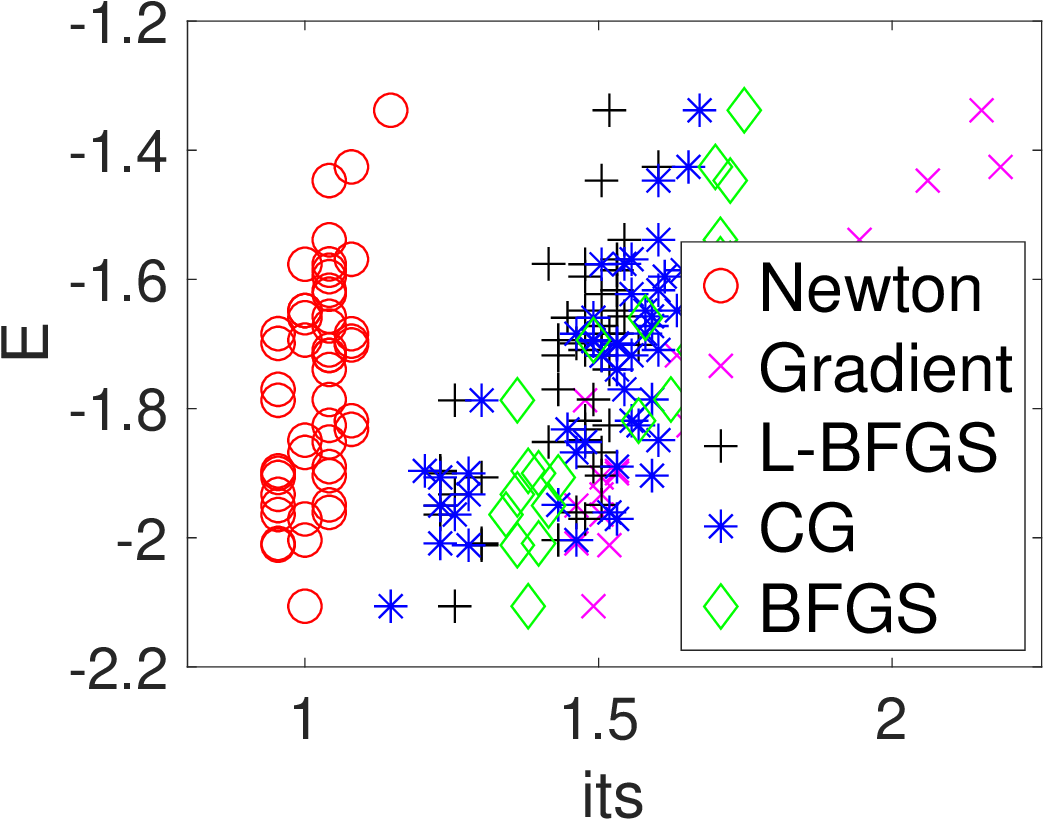}\\[2ex]
    \psfrag{E}[][]{$E$}
    \psfrag{time}[][]{$\log_{10}$(time (s))}
    \includegraphics*[width=0.25\linewidth]{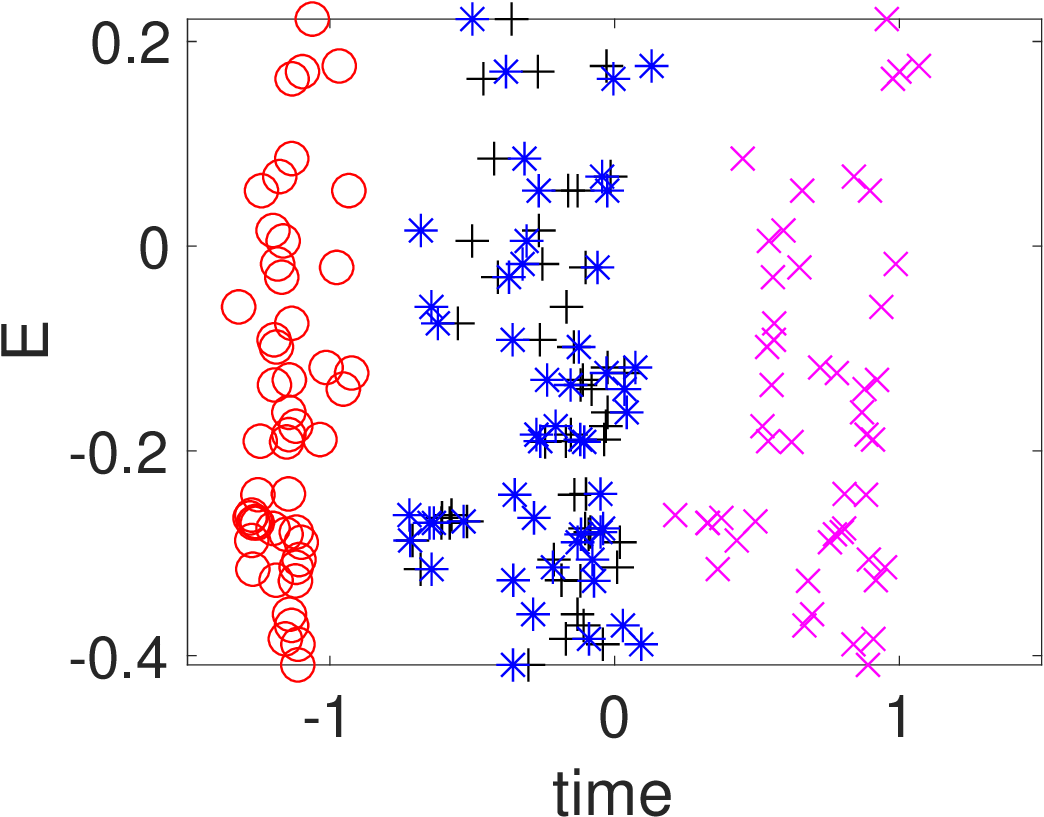}&
    \psfrag{E}[][]{}
    \psfrag{time}[][]{$\log_{10}$(time (s))}
    \includegraphics*[width=0.25\linewidth]{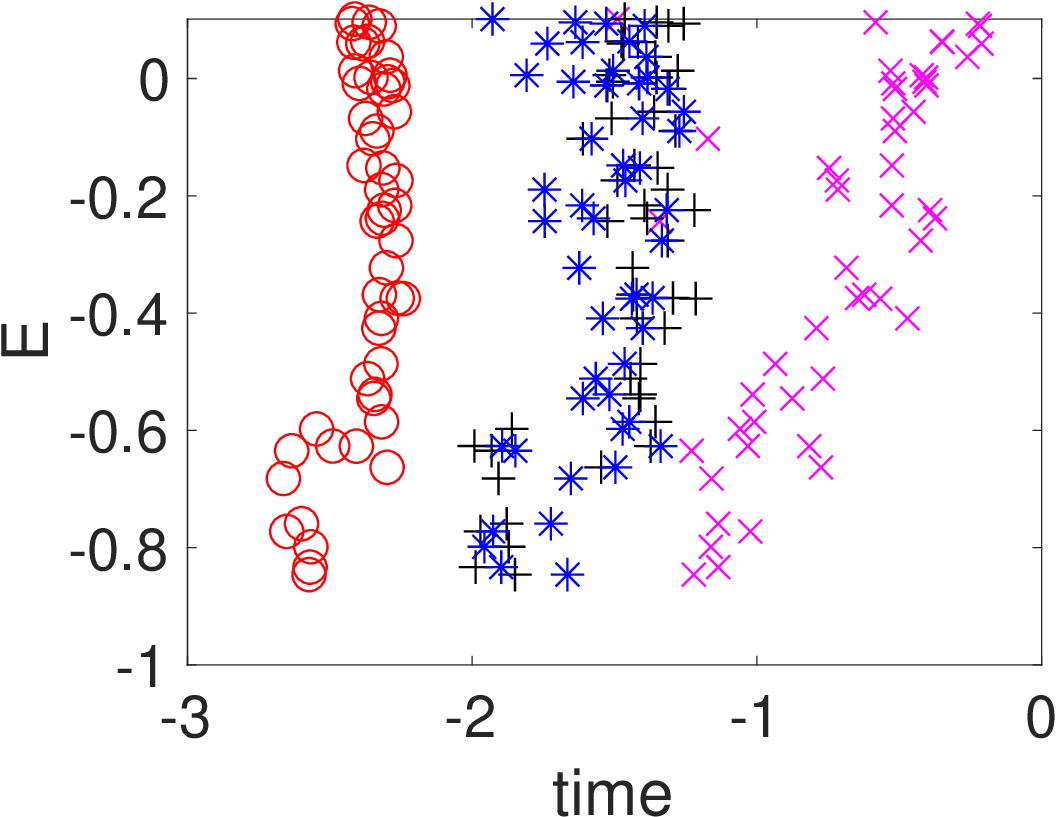}&
    \psfrag{E}[][]{}
    \psfrag{time}[][]{$\log_{10}$(time (s))}
    \includegraphics*[width=0.25\linewidth]{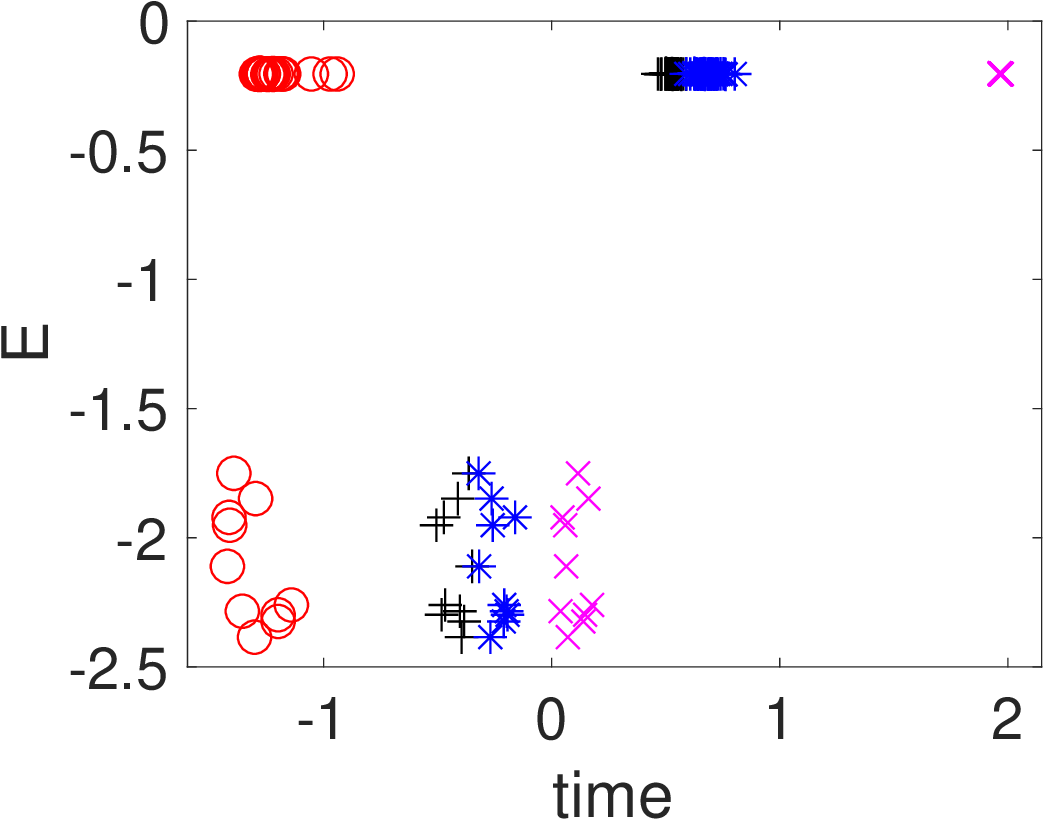}&
    \psfrag{E}[][]{}
    \psfrag{time}[][]{$\log_{10}$(time (s))}
    \includegraphics*[width=0.25\linewidth]{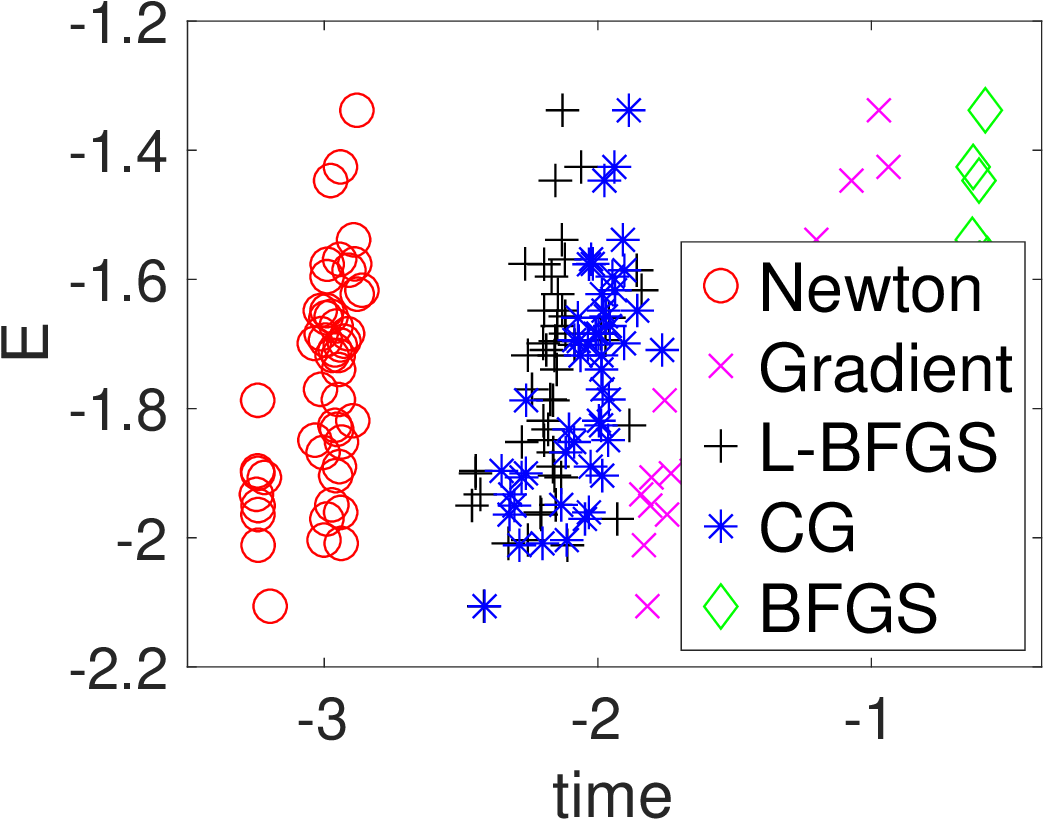}\\
  \end{tabular}
  \caption{Performance of different optimization methods on a collection of 50 problem instances per dataset, in number of iterations (top) and runtime in seconds (below).}
  \label{f:all-examples}
\end{figure}

Fig.~\ref{f:examples} shows the learning curves for one specific problem instance per dataset. Most informative are those for $E(\x_k) - E(\x^*)$ and $\norm{\nabla E(\x_k)}$, which clearly show the asymptotic convergence rate of each method (note that while $E$ must decrease monotonically at each iteration, the gradient norm need not). For linearly convergent methods (all except Newton's) the iterates in a log-log plot should trace a straight line of negative slope equal to the logarithm of the rate; for quadratically convergent methods (Newton's), they trace an exponential curve instead. This is clearly visible in the ends of the curves. The last few iterates of Newton's method double the number of correct decimal digits at each iteration, and quickly reach the precision limit of the machine.

\begin{figure}[p]
  \centering
  \psfrag{Newton}{\scriptsize Newton}
  \psfrag{Gradient}{\scriptsize Gradient}
  \psfrag{L-BFGS}{\scriptsize L-BFGS}
  \psfrag{BFGS}{\scriptsize BFGS}
  \psfrag{CG}{\scriptsize CG}
  \begin{tabular}{@{}c@{}c@{}c@{}c@{}}
    \caja{c}{c}{VGGFeat256 \\ $\lambda=0.015$ \\[-0.5ex] \scriptsize $(p_8,p_0) = (0.999,1.28\cdot 10^{-14})$ \\[-1ex] \scriptsize $\rightarrow (0.141,0.827)$} & \caja{c}{c}{VGGFeat64 \\ $\lambda=0.01$ \\[-0.5ex] \scriptsize $(p_8,p_0) = (0.999,1.22\cdot 10^{-5})$ \\[-1ex] \scriptsize $\rightarrow (3.15\cdot 10^{-3},0.963)$} & \caja{c}{c}{RCV1 \\ $\lambda=0.01$ \\[-0.5ex] \scriptsize $(p_{19},p_{49}) = (0.984,1.55\cdot 10^{-12})$ \\[-1ex] \scriptsize $\rightarrow (2.31\cdot 10^{-3},0.899)$} & \caja{c}{c}{MNIST \\ $\lambda=0.01$ \\[-0.5ex] \scriptsize $(p_7,p_0) = (0.999,6.49\cdot 10^{-6})$ \\[-1ex] \scriptsize $\rightarrow (4.22\cdot 10^{-4},0.999)$} \\[5ex]
    \psfrag{E}[][]{$E$}
    \psfrag{its}[][]{$\log_{10}$(\#iterations)}
    \includegraphics*[width=0.25\linewidth,height=0.185\linewidth]{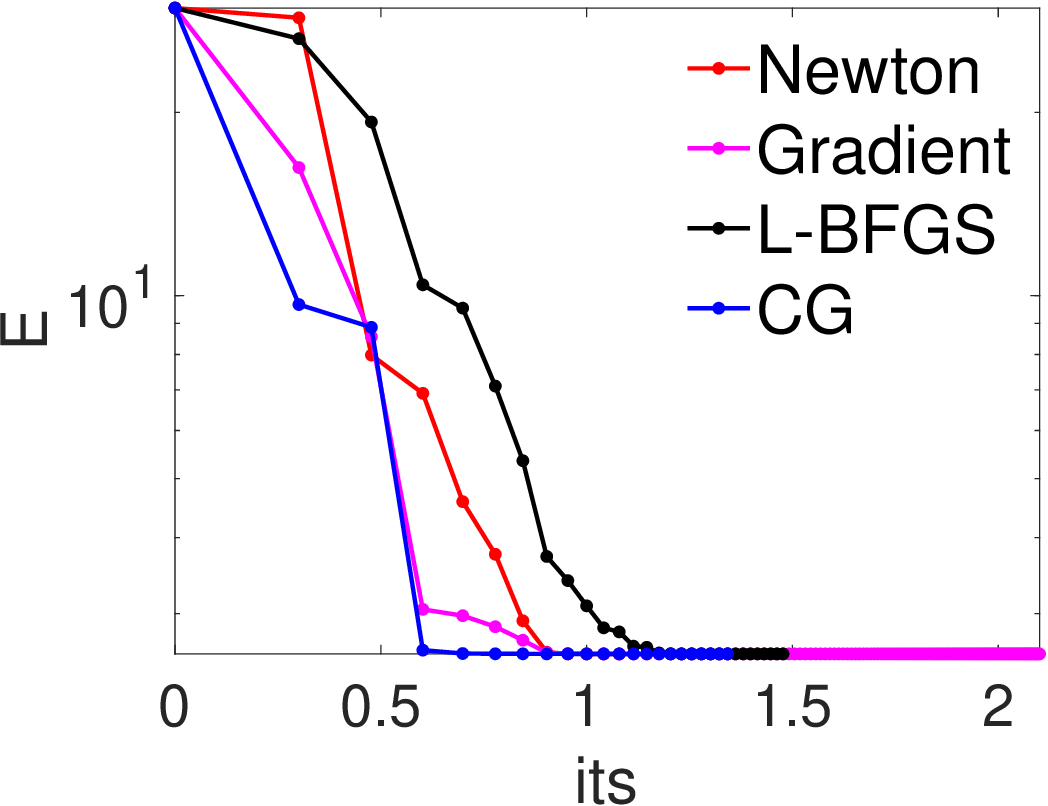}&
    \psfrag{E}{}
    \psfrag{its}[][]{$\log_{10}$(\#iterations)}
    \includegraphics*[width=0.25\linewidth,height=0.185\linewidth]{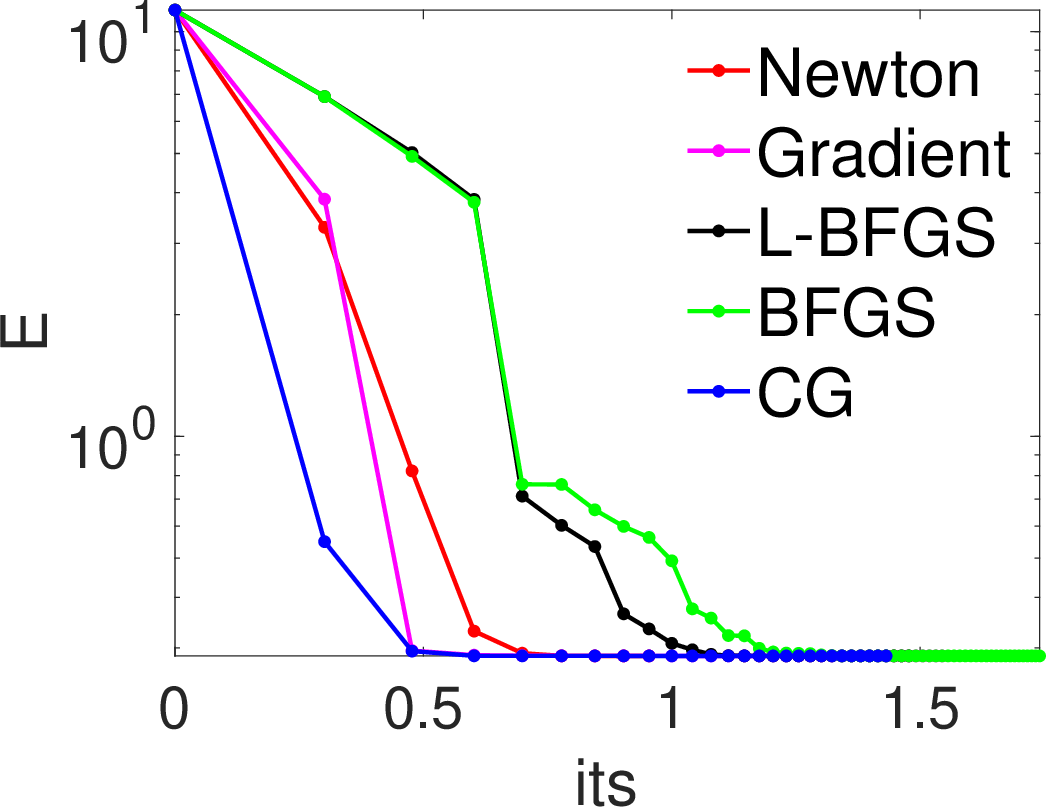}&
    \psfrag{E}{}
    \psfrag{its}[][]{$\log_{10}$(\#iterations)}
    \includegraphics*[width=0.25\linewidth,height=0.185\linewidth]{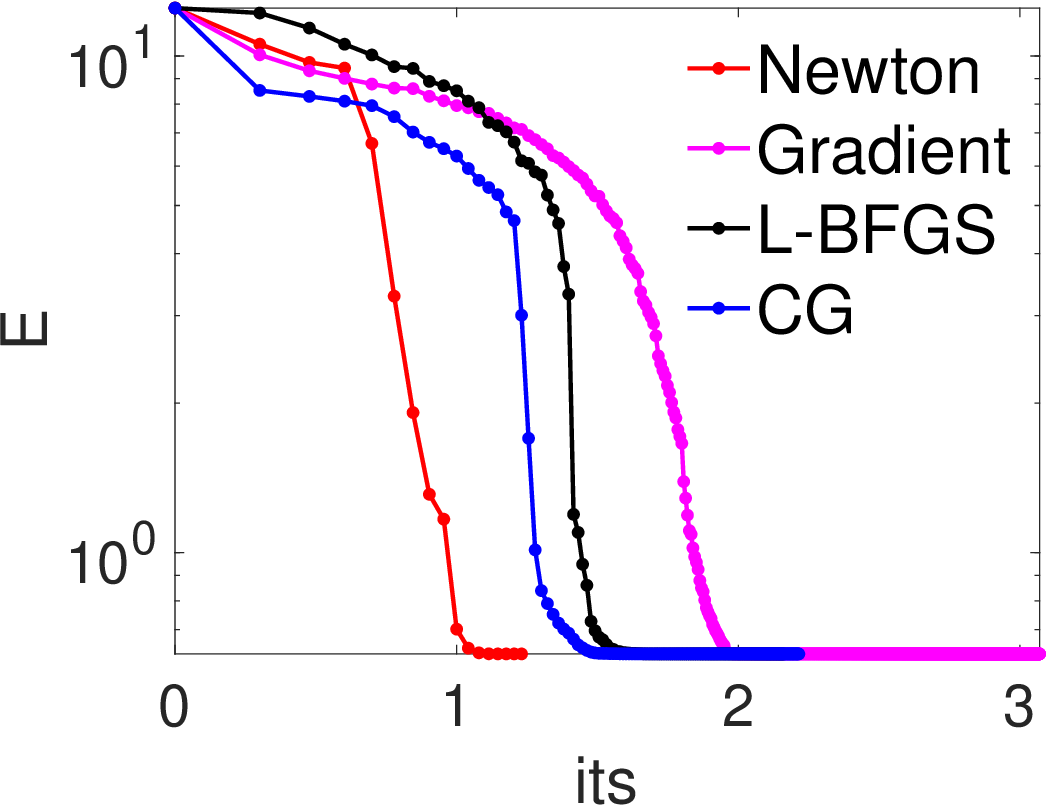}&
    \psfrag{E}{}
    \psfrag{its}[][]{$\log_{10}$(\#iterations)}
    \includegraphics*[width=0.25\linewidth,height=0.185\linewidth]{MNISTImages_CompEvItr.eps} \\
    \psfrag{E}[][]{$E$}
    \psfrag{time}[][]{$\log_{10}$(time (s))}
    \includegraphics*[width=0.25\linewidth,height=0.185\linewidth]{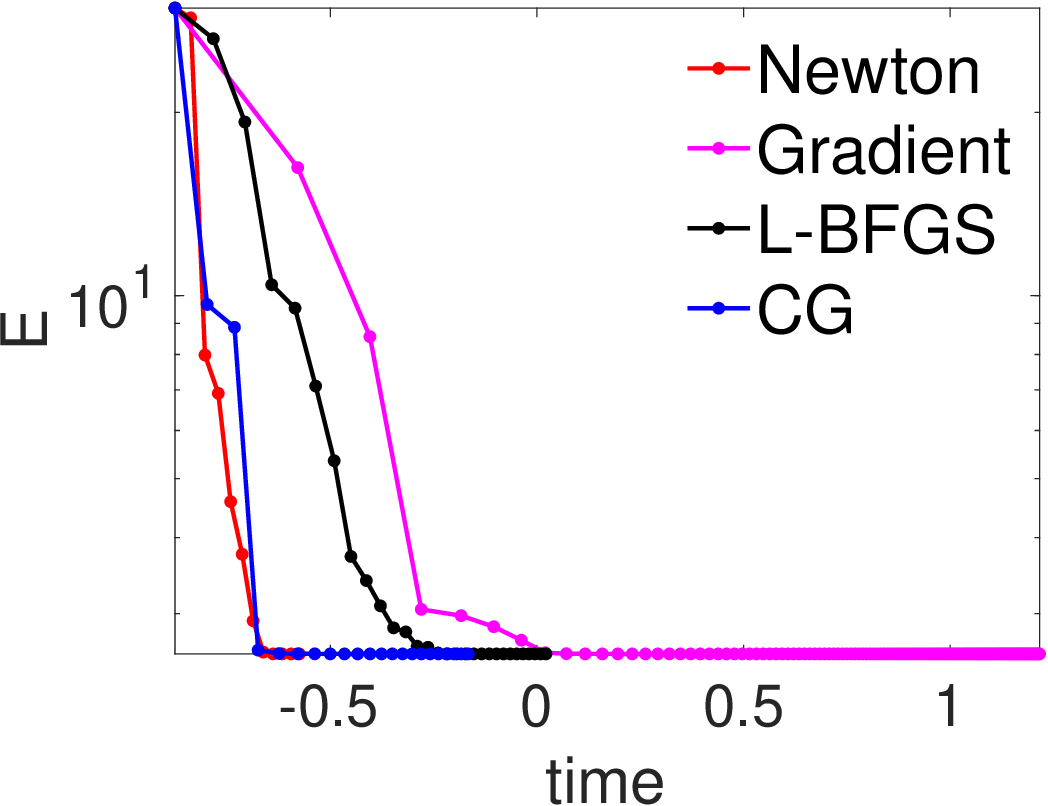}&
    \psfrag{E}{}
    \psfrag{time}[][]{$\log_{10}$(time (s))}
    \includegraphics*[width=0.25\linewidth,height=0.185\linewidth]{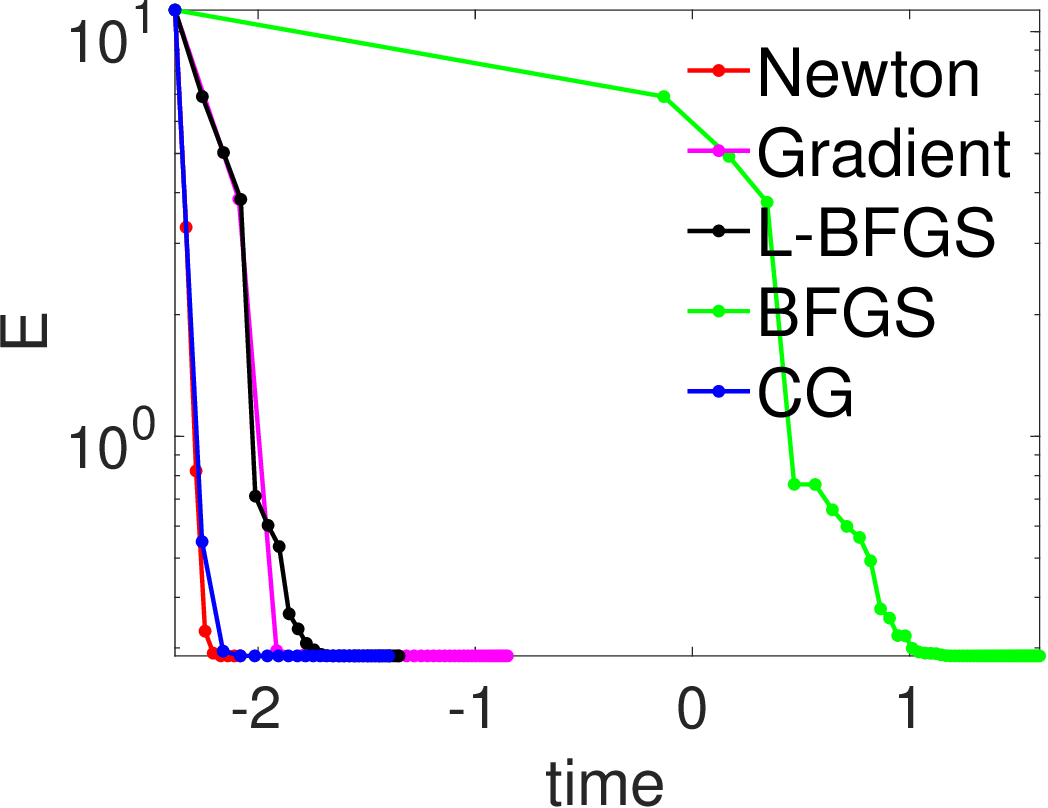}&
    \psfrag{E}{}
    \psfrag{time}[][]{$\log_{10}$(time (s))}
    \includegraphics*[width=0.25\linewidth,height=0.185\linewidth]{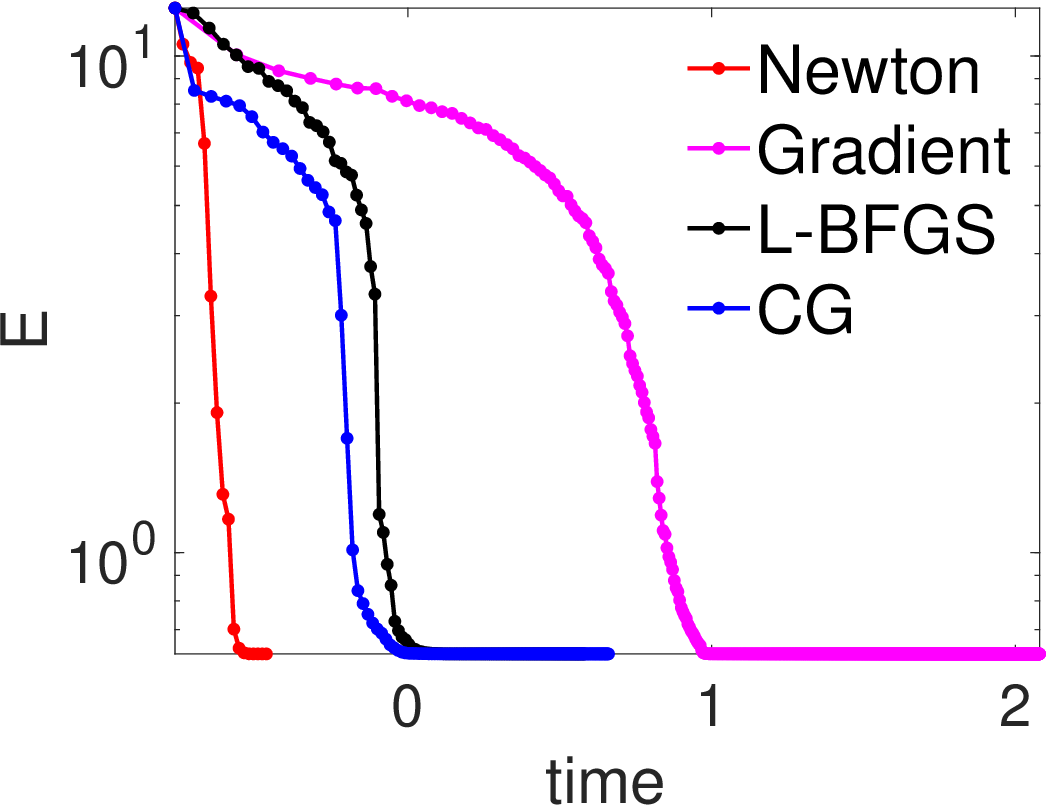}&
    \psfrag{E}{}
    \psfrag{time}[][]{$\log_{10}$(time (s))}
    \includegraphics*[width=0.25\linewidth,height=0.185\linewidth]{MNISTImages_CompEvTime.eps} \\[2ex]
    \psfrag{G}[][]{$\norm{\nabla E}$}
    \psfrag{its}[][]{$\log_{10}$(\#iterations)}
    \includegraphics*[width=0.25\linewidth,height=0.185\linewidth]{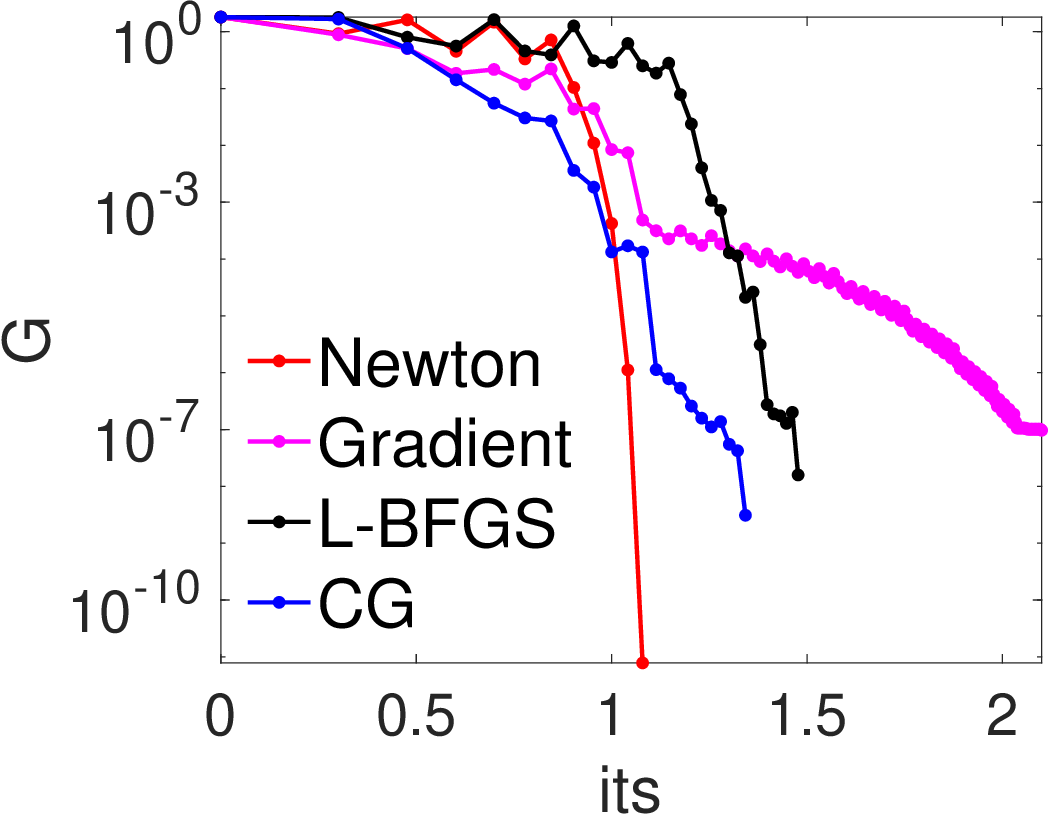}&
    \psfrag{G}{}
    \psfrag{its}[][]{$\log_{10}$(\#iterations)}
    \includegraphics*[width=0.25\linewidth,height=0.185\linewidth]{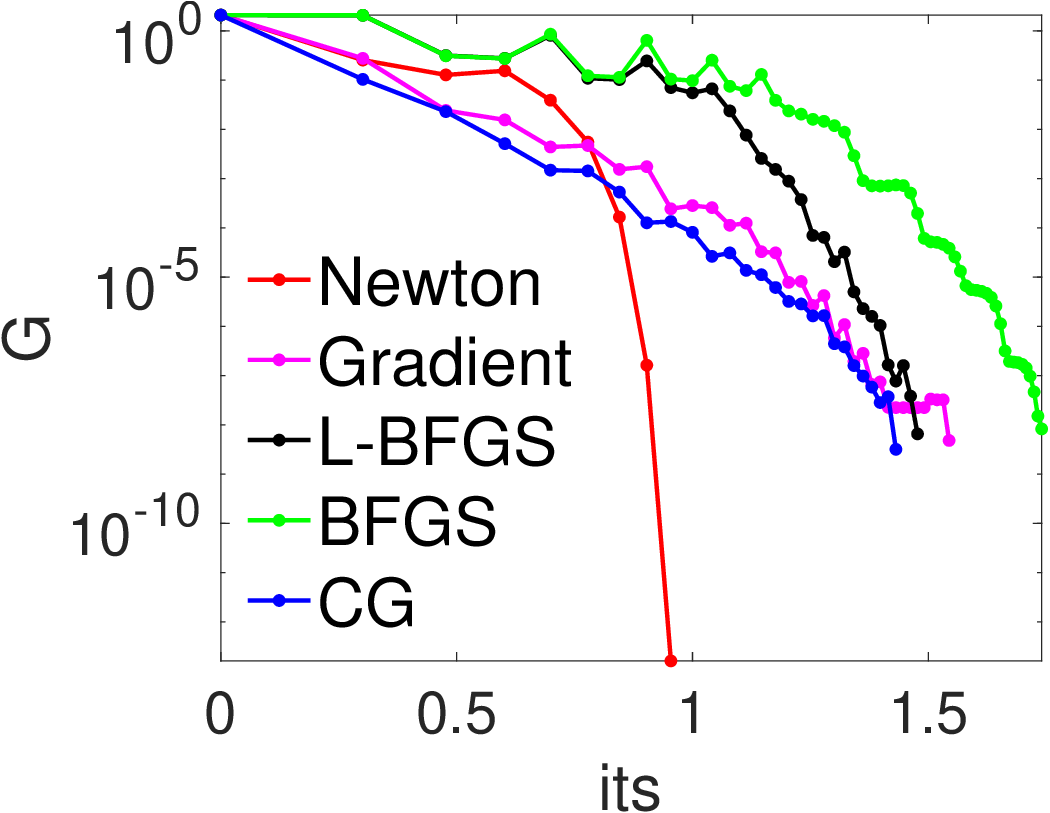}&
    \psfrag{G}{}
    \psfrag{its}[][]{$\log_{10}$(\#iterations)}
    \includegraphics*[width=0.25\linewidth,height=0.185\linewidth]{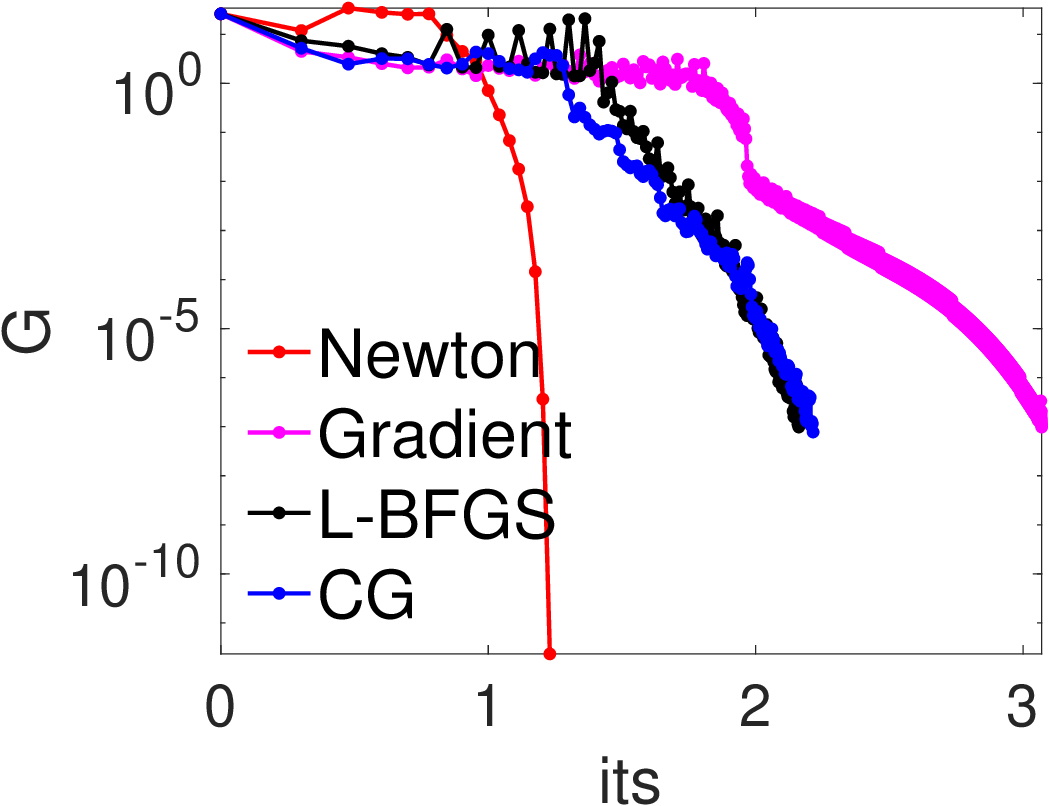}&
    \psfrag{G}{}
    \psfrag{its}[][]{$\log_{10}$(\#iterations)}
    \includegraphics*[width=0.25\linewidth,height=0.185\linewidth]{MNISTImages_CompGvItr.eps} \\
    \psfrag{G}[][]{$ \norm{\nabla E}$}
    \psfrag{time}[][]{$\log_{10}$(time (s))}
    \includegraphics*[width=0.25\linewidth,height=0.185\linewidth]{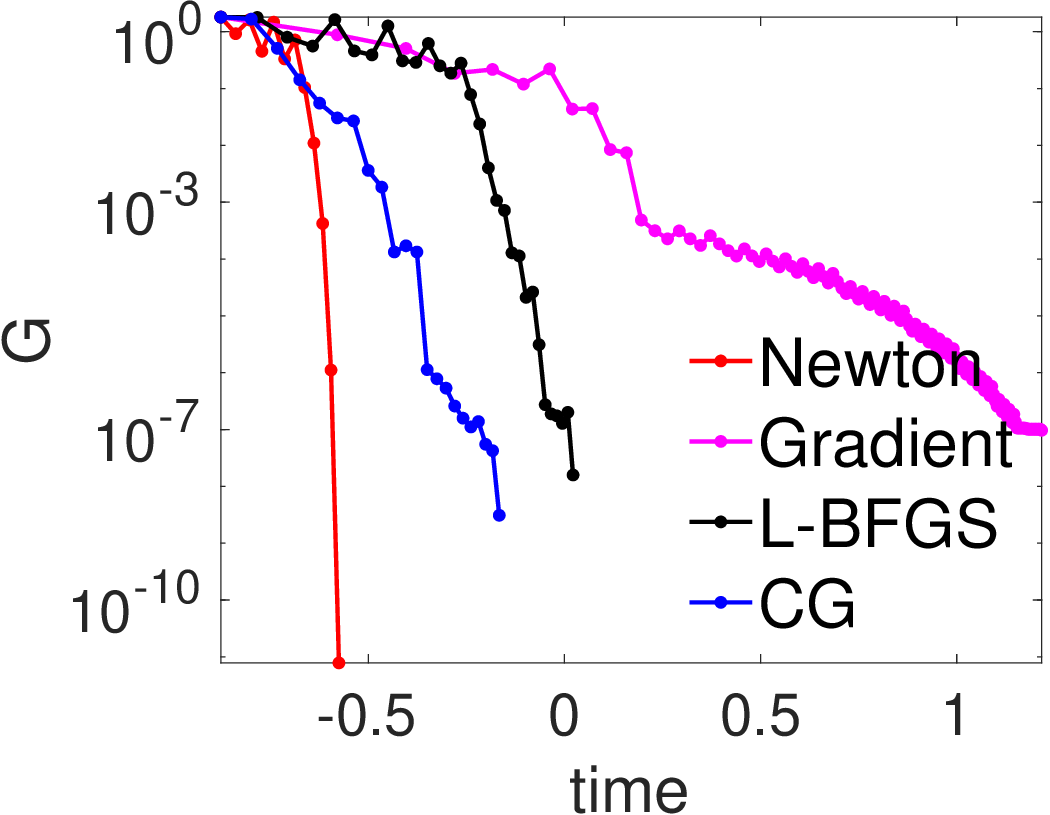}&
    \psfrag{G}{}
    \psfrag{time}[][]{$\log_{10}$(time (s))}
    \includegraphics*[width=0.25\linewidth,height=0.185\linewidth]{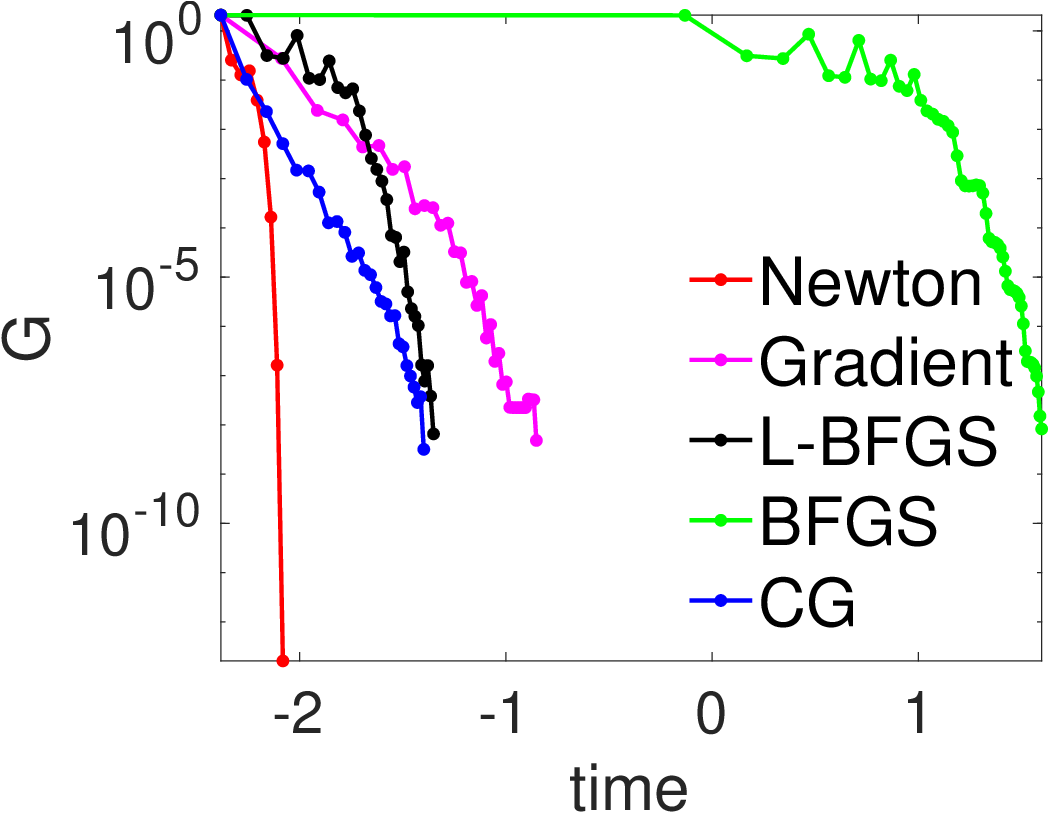}&
    \psfrag{G}{}
    \psfrag{time}[][]{$\log_{10}$(time (s))}
    \includegraphics*[width=0.25\linewidth,height=0.185\linewidth]{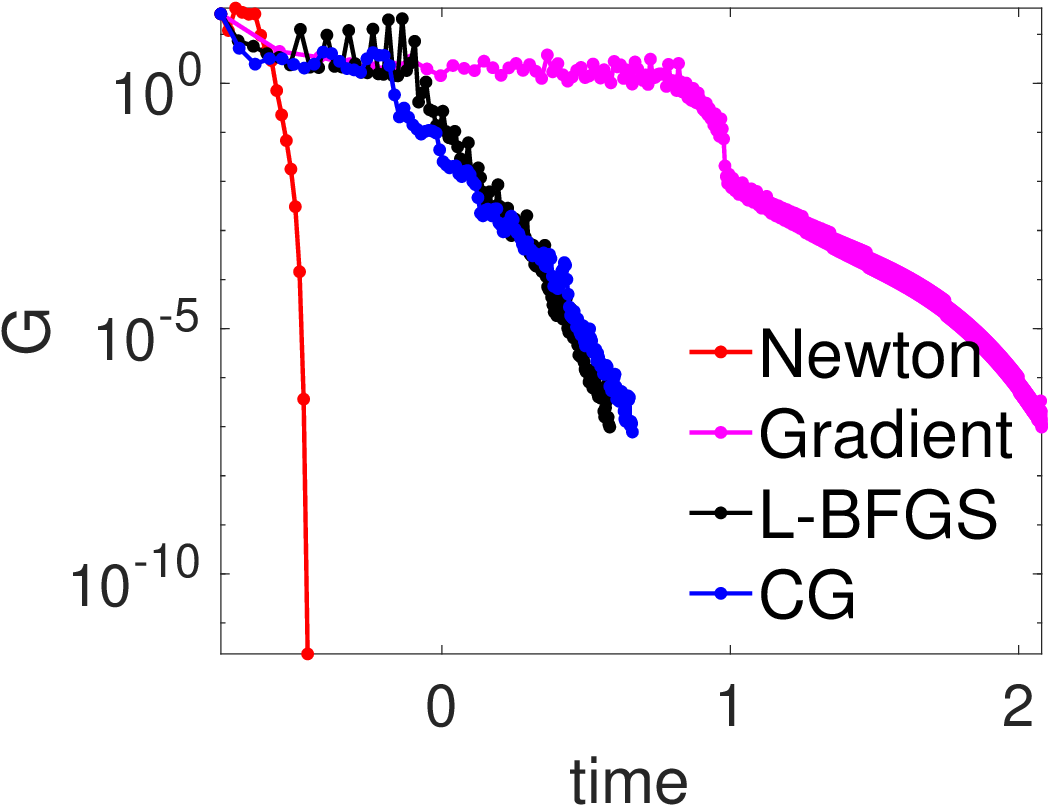}&
    \psfrag{G}{}
    \psfrag{time}[][]{$\log_{10}$(time (s))}
    \includegraphics*[width=0.25\linewidth,height=0.185\linewidth]{MNISTImages_CompGvTime.eps} \\[2ex]
    \psfrag{E}[][]{$E - E^{*}$}
    \psfrag{its}[][]{$\log_{10}$(\#iterations)}
    \includegraphics*[width=0.25\linewidth,height=0.185\linewidth]{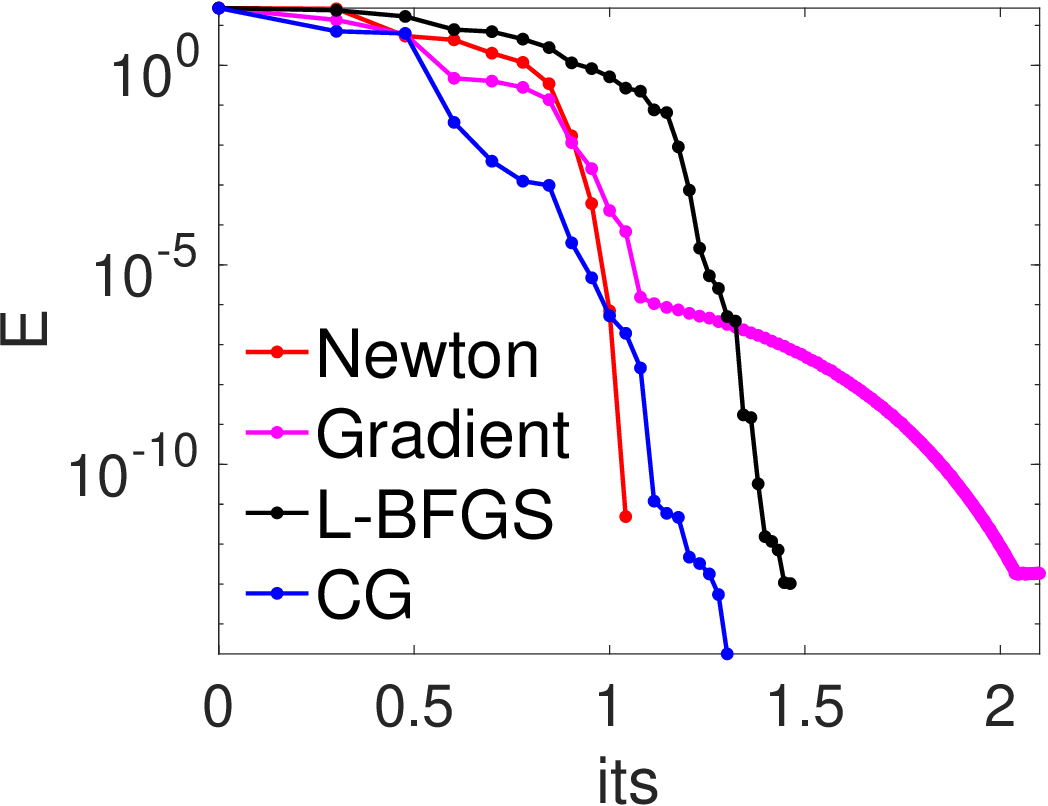} &
    \psfrag{E}{}
    \psfrag{its}[][]{$\log_{10}$(\#iterations)}
    \includegraphics*[width=0.25\linewidth,height=0.185\linewidth]{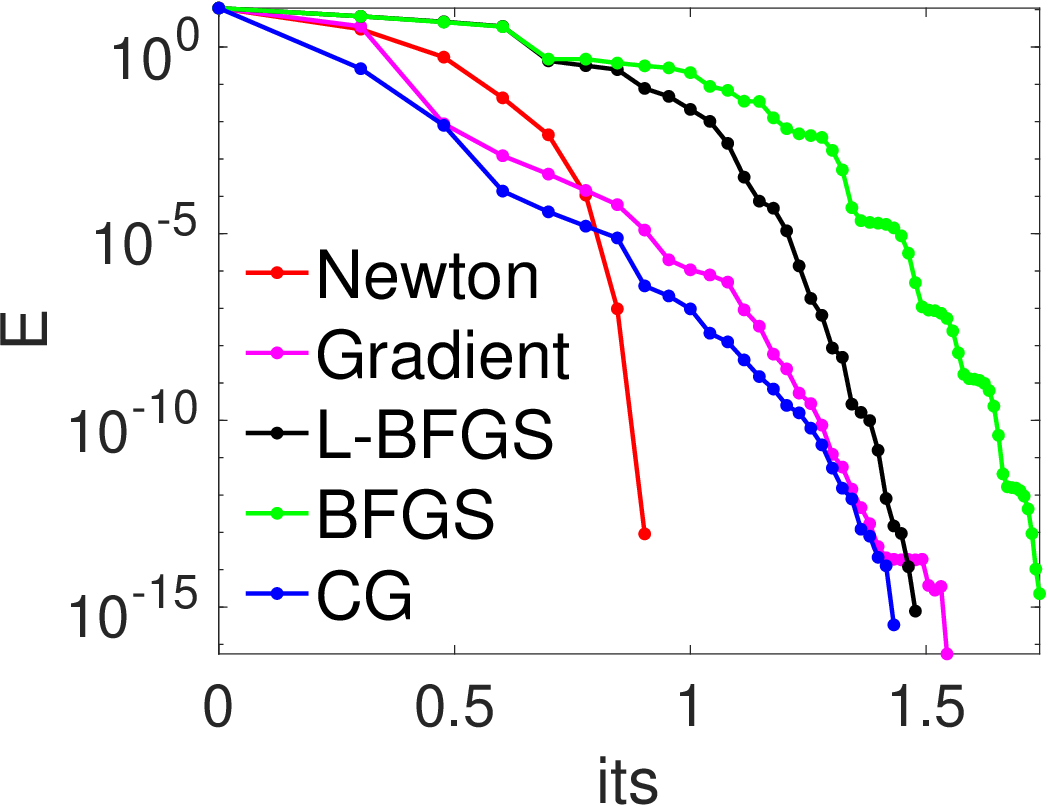} &
    \psfrag{E}{}
    \psfrag{its}[][]{$\log_{10}$(\#iterations)}
    \includegraphics*[width=0.25\linewidth,height=0.185\linewidth]{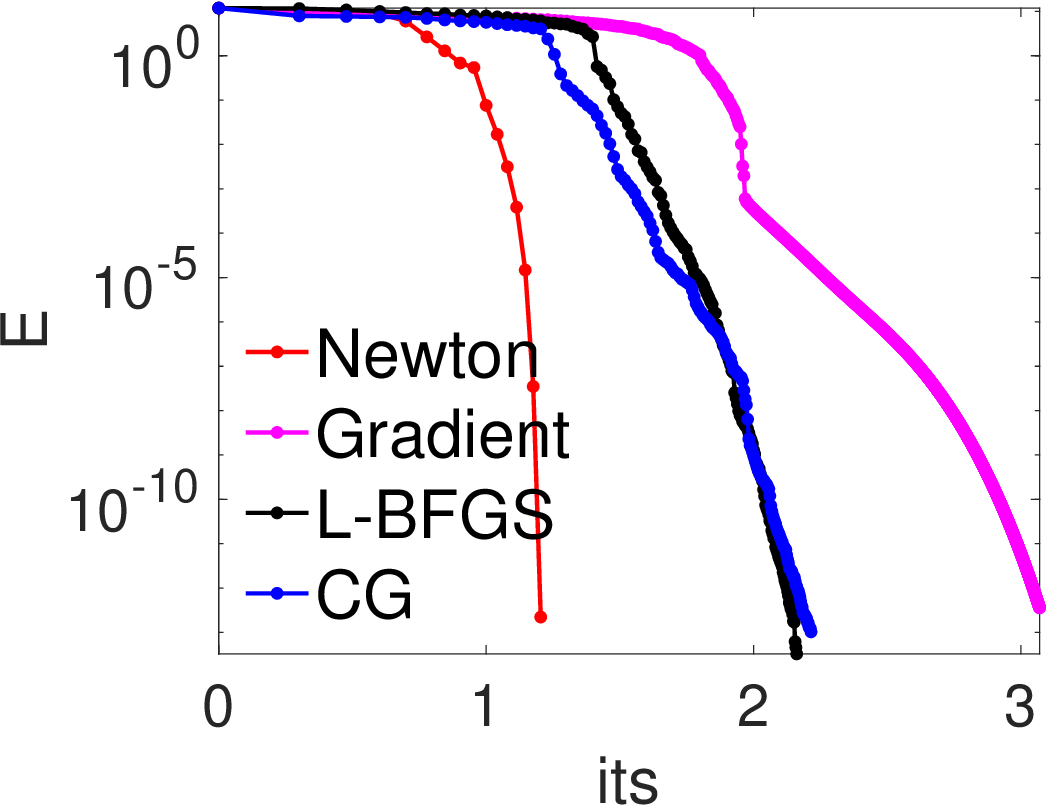} &
    \psfrag{E}{}
    \psfrag{its}[][]{$\log_{10}$(\#iterations)}
    \includegraphics*[width=0.25\linewidth,height=0.185\linewidth]{MNISTImages_CompDistESvItr.eps}\\
    \psfrag{E}[][]{$E - E^{*}$}
    \psfrag{time}[][]{$\log_{10}$(time (s))}
    \includegraphics*[width=0.25\linewidth,height=0.185\linewidth]{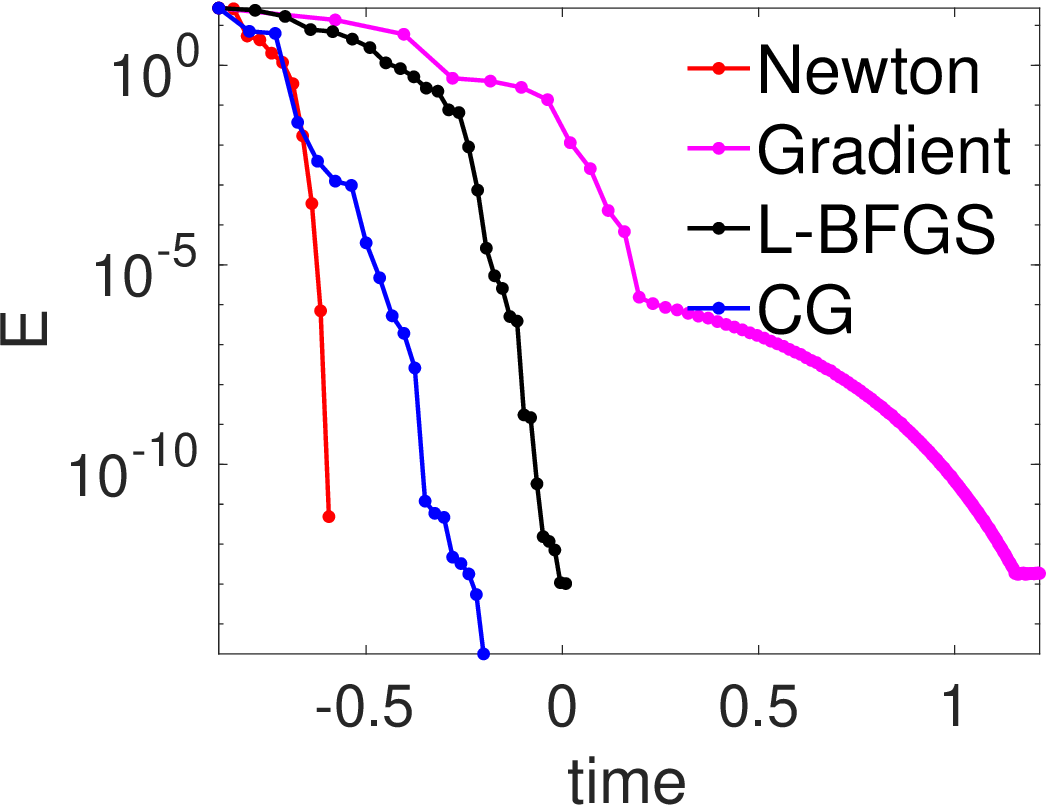} &
    \psfrag{E}{}
    \psfrag{time}[][]{$\log_{10}$(time (s))}
    \includegraphics*[width=0.25\linewidth,height=0.185\linewidth]{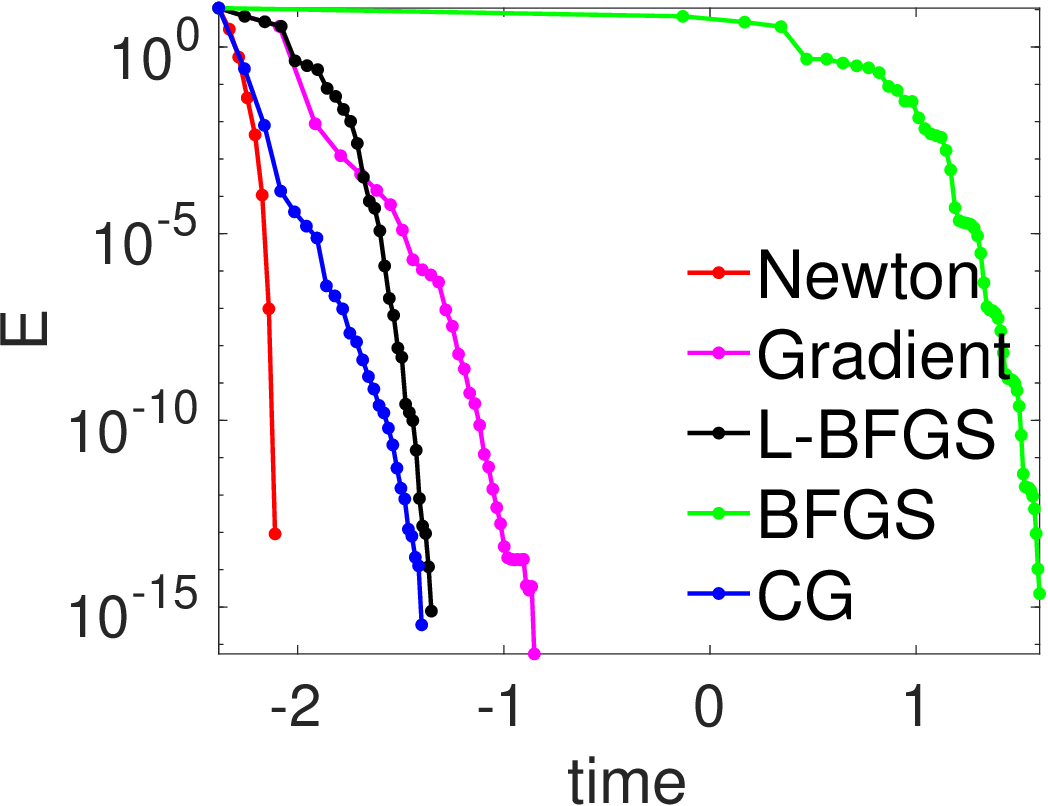} &
    \psfrag{E}{}
    \psfrag{time}[][]{$\log_{10}$(time (s))}
    \includegraphics*[width=0.25\linewidth,height=0.185\linewidth]{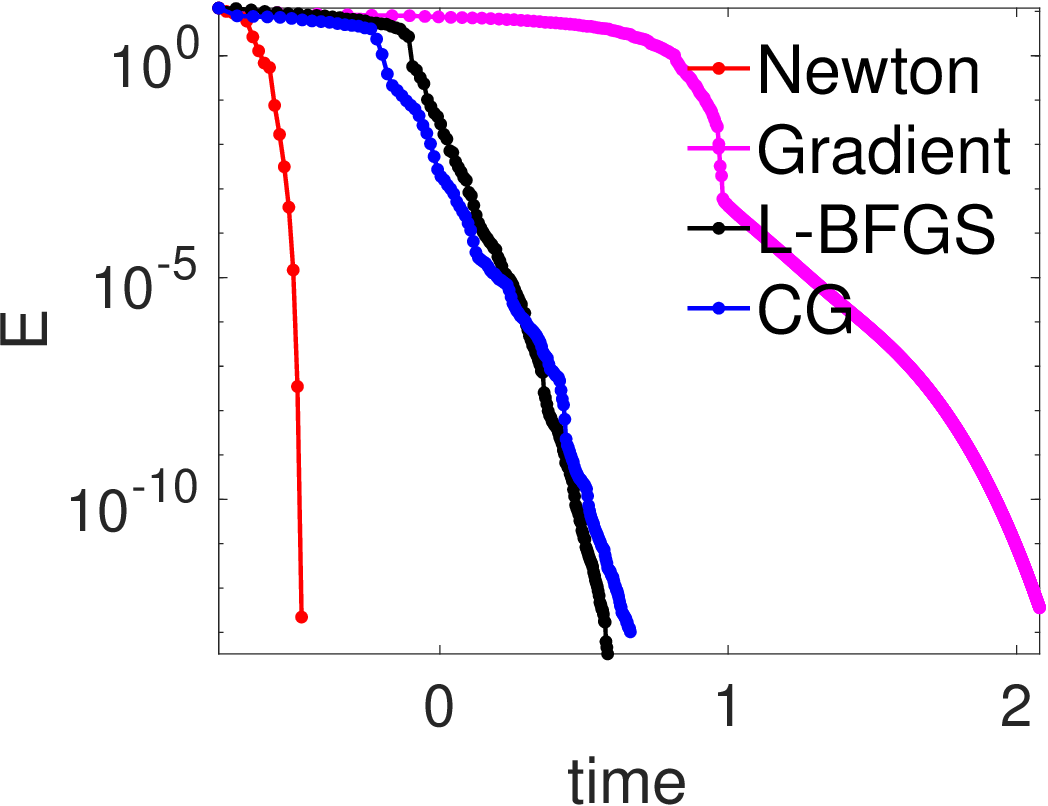} &
    \psfrag{E}{}
    \psfrag{time}[][]{$\log_{10}$(time (s))}
    \includegraphics*[width=0.25\linewidth,height=0.185\linewidth]{MNISTImages_CompDistESvTime.eps}
  \end{tabular}
  \caption{Learning curves for one problem instance per dataset (columns), indicated by its $\lambda$ value and class probabilities, encoded as $(p_{\overline{k}},p_k) =$ (at $\overline{\x}$) $\to$ (at $\x^*$). Rows 1--2, 3--4 and 5--6 show $E(\x_k)$, $\norm{\nabla E(\x_k)}$ and $E(\x_k) - E(\x^*)$, respectively (where $\x^*$ is estimated by the last iterate of Newton's method). Within each pair of rows, we show number of iterations (above) and runtime in seconds (below). All plots are log-log.}
  \label{f:examples}
\end{figure}

Figures~\ref{f:histogr}--\ref{f:lambdas} further investigate Newton's method. The histogram in fig.~\ref{f:histogr} shows that the initial step (equal to 1) is accepted most times, and rarely are more than 5 trials needed. This confirms that a better line search that ensures the Wolfe conditions hold is unlikely to help much. Figure~\ref{f:lambdas} (left plot) uses the number of iterations as a proxy for ``difficulty'' of the problem, by solving each problem for 100 values of $\lambda$. As predicted by theorem~\ref{th:Hessian-extremes}, problems with $\lambda \gg 1$ are easiest, followed by problems with $\lambda \ll 1$. Figure~\ref{f:lambdas} (right table) compares, when solving a problem over a range of decreasing $\lambda$ values, whether to warm-start (i.e., initialize the next problem from the solution of the previous one) or not (i.e., initialize always from $\overline{\x}$). Warm-start is about 5$\times$ faster and very few iterations are required for each $\lambda$ value. This makes it attractive to solve for a range of $\lambda$s and present the user an array of solutions of increasing target class probability $p_k(\x^*(\lambda))$.

\begin{figure}[t!]
  \centering
  \psfrag{RCV1}{RCV1}
  \psfrag{VGGFeat256}{VGGFeat256}
  \psfrag{VGGFeat64}{VGGFeat64}
  \psfrag{MNIST}{MNIST}
  \begin{tabular}[c]{@{}c@{}}
    \psfrag{lamd}[][B]{\# backtracking steps}
    \psfrag{count}[][t]{Proportion}
    \includegraphics*[width=0.40\linewidth]{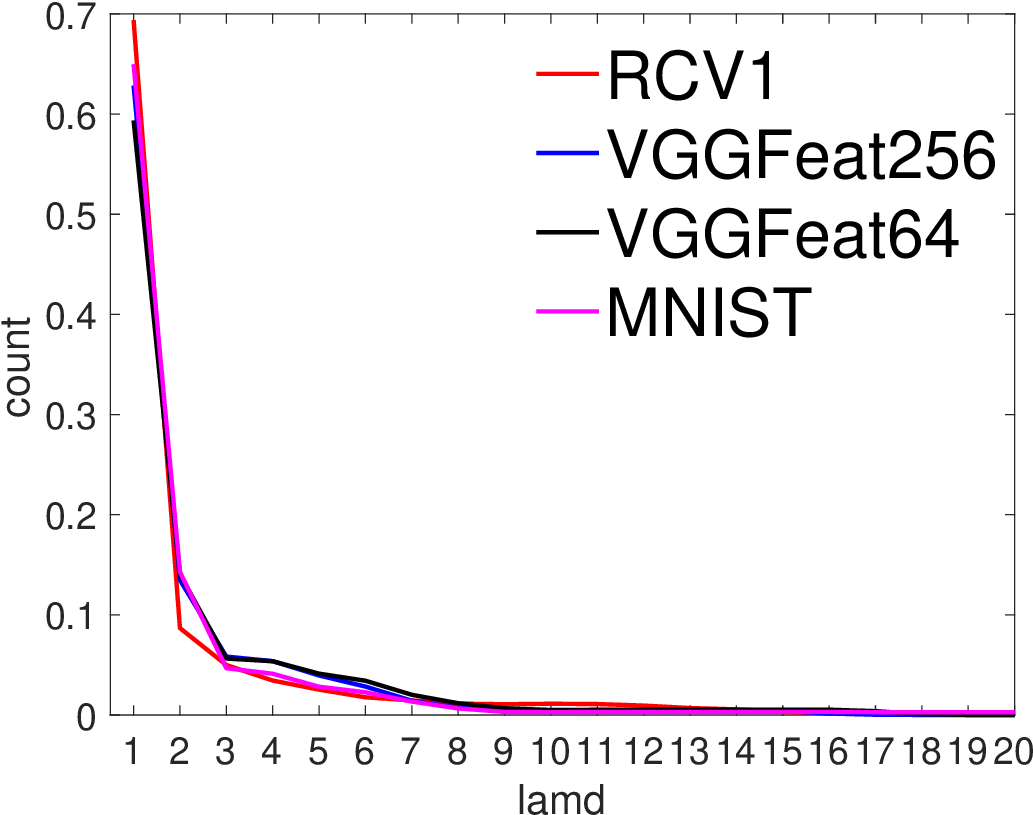}
  \end{tabular}
  \caption{Histogram of the number of steps tried in the backtracking line search for Newton's method, for instances of different datasets.}
  \label{f:histogr}
\end{figure}

\begin{figure}[t!]
  \centering
  \psfrag{RCV1}{RCV1}
  \psfrag{VGGFeat256}{VGGFeat256}
  \psfrag{VGGFeat64}{VGGFeat64}
  \psfrag{MNIST}{MNIST}
  \begin{tabular}{@{}c@{\hspace{0.1\linewidth}}c@{}}
    \begin{tabular}[c]{@{}c@{}}
      \psfrag{lamd}[][]{$\log_{10} \lambda$}
      \psfrag{itr}[][]{$\log_{10}$(\#iterations)}
      \includegraphics*[width=0.40\linewidth]{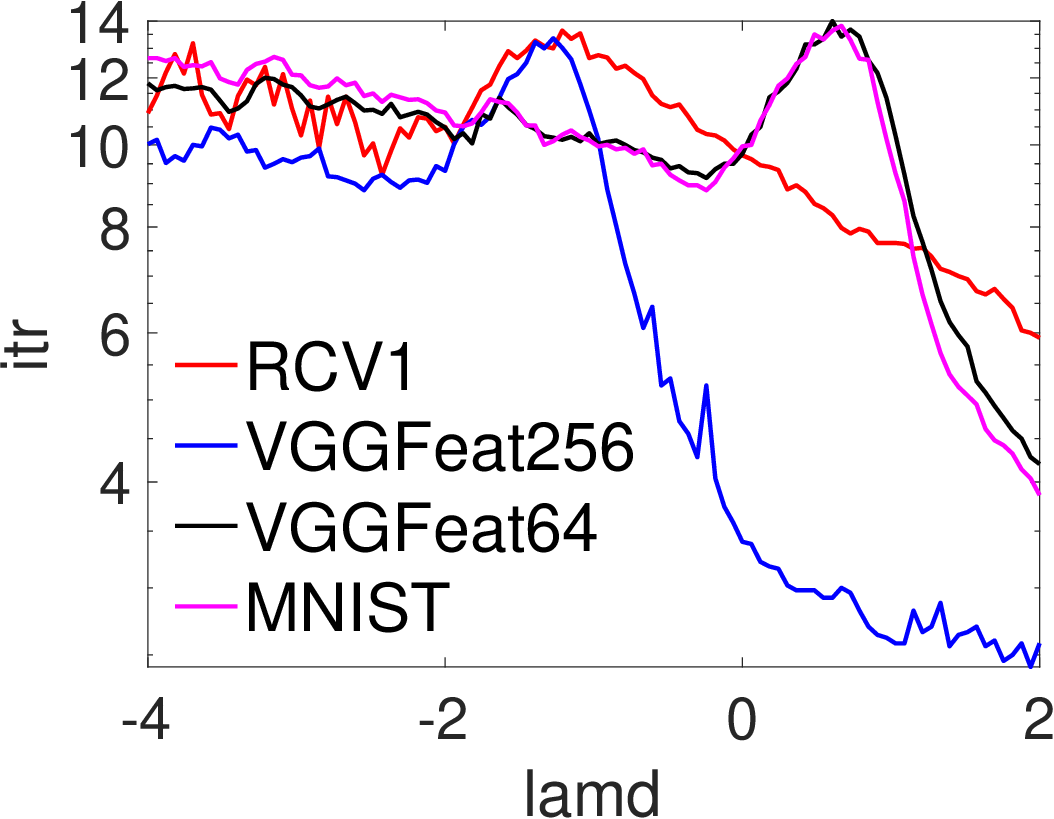}
    \end{tabular} &
    \begin{tabular}[c]{@{}ccc@{}}
      \toprule
      & \multicolumn{2}{c}{runtime (seconds)} \\
      \cline{2-3}
      Dataset & warm start & no warm start \\
      \midrule
      MNIST  & 0.021$\pm$0.002 & 0.093$\pm$0.015 \\
      VGGFeat264 & 1.460$\pm$0.022 & 6.147$\pm$0.791 \\
      VGGFeat64 & 0.079$\pm$0.004 & 0.296$\pm$0.014 \\
      RCV1 & 1.351$\pm$0.031 & 6.749$\pm$1.782 \\
      \bottomrule
    \end{tabular}
  \end{tabular}
  \caption{\emph{Left}: number of iterations for Newton's method as a function of $\lambda$ (averaged, for each $\lambda$, over the 50 problem instances of fig.~\ref{f:all-examples}). \emph{Right}: total runtime (seconds) over the 100 values of $\lambda$ in the top plot with and without warm-start (average $\pm$ stdev over the 50 problems).}
  \label{f:lambdas}
\end{figure}

\subsection{Logistic regression}

We created binary classification versions of the datasets used for the softmax (same training and test instances) by partitioning the classes into two groups and training a logistic regression to learn that. As expected, the closed-form solution was far faster (about 100$\times$) than Newton's method.

\section{Conclusion}

Inverse classification problems such as counterfactual explanations and adversarial examples can be formulated as optimization problems. We have theoretically characterized a particular formulation (using the squared Euclidean distance) for two fundamental classifiers: logistic regression and softmax classifiers. The former admits a closed-form solution, while the latter can be solved extremely efficiently with Newton's method using a suitably reformulated Hessian. The fact that we can solve this so efficiently is due to the special form of the logistic regression and softmax classifiers, and to the formulation we use of the optimization problem. While other formulations are possible, their solution would not be as efficient as here. In our formulation, what is a $D$-dimensional optimization problem (where $D$ is the number of features) can essentially be reduced to a $K$-dimensional optimization problem (where $K$ is the number of classes) for the softmax classifier, and to a scalar problem for logistic regression. This is important because $K \ll D$ in most applications of interest. For both classifiers, we can solve the optimization to practically machine precision and scale to feature vectors of over $10^5$ dimensions and tens of classes with a runtime under one second. This makes it suitable for real-time or interactive applications. Future work includes handling a very large number of classes; constraints on the desired instance; and investigating other classifiers, distances and problem settings.

Code implementing the algorithms is available from the authors.

\section{Acknowledgments}

Work partially supported by NSF award IIS--2007147.

\clearpage

\appendix

\section{Datasets and experimental setup}
\label{s:datasets}

We use the following 4 datasets (summarized in table~\ref{t:datasets}):
\begin{description}
\item[MNIST] \citep{Lecun_98a} It consists of 28$\times$28 grayscale images of handwritten digits of classes 0 to 9, represented as a feature vector of pixel values in [0,1] of dimension $D =$ 784. There are 60\,000 training instances and 10\,000 test instances.
\item[RCV1] \citep{Lewis_04a} It consists of newswire stories manually categorized into $K =$ 51 classes and made available by Reuters, Ltd.\ for research purposes. Each document is represented as a vector of TFIDF features of dimension $D =$ 47\,236. There are 15\,564 training instances and 518\,571 test instances.
\item[VGGFeat64] Each instance corresponds to an image of 64$\times$64 pixels, categorized into $K =$ 16 object classes, a subset of man-made and natural objects from the ImageNet dataset \citep{Deng_09a}. Each image is represented by a feature vector of dimension $D =$ 8\,192, corresponding to the neural net features from the last convolutional layer of a pre-trained VGG16 deep net \citep{SimonyZisser15a} (on the subset of 16 classes). There are 16\,000 training instances and 3\,200 test instances.
\item[VGGFeat256] Like VGGFeat64, but using images of 256$\times$256 to create a vector of $D =$ 131\,072 features. 
\end{description}
We created binary classification versions of the MNIST, RCV1 and VGGFeat256 datasets for use with logistic regression. They have the same training and test instances and feature vectors and only differ in the class labels, which are regrouped from $K$ into 2. For MNIST, we use even vs odd digits. For RCV1, we use classes 0--25 vs 26--51. For VGGFeat256, we use man-made objects (7 classes) vs natural objects (9 classes).

For each dataset we trained a softmax classifier using \texttt{scikit-learn} \citep{Pedreg_11a}. The training and test error are in table~\ref{t:datasets}.

In some experiments (e.g.\ in fig.~\ref{f:all-examples}) we use a collection of 50 inverse classification problem instances per dataset, each consisting of a triplet $(\overline{\x},k,\lambda)$ in eq.~\eqref{e:objfcn}. We generated them at random with the goal of having some diversity in the problem difficulty, as follows. For 40 of the problems, we picked $\overline{\x}$ uniformly at random from the training set; we picked $k$ as the class having lowest softmax probability for $\overline{\x}$ (i.e., the ``farthest'' class, having a near-zero probability); and we set $\lambda =$ 0.01 (which makes the minimizer achieve a softmax probability for class $k$ above 0.95 or so). For the remaining 10 problems, we picked $\overline{\x}$ uniformly at random from the training set; we picked $k$ as the class having the second largest softmax probability for $\overline{\x}$ (i.e., the ``closest'' class); and we set $\lambda =$ 0.1 (which again makes the minimizer achieve a softmax probability for class $k$ above 0.95 or so). For VGGFeat256 we set instead $\lambda =$ 0.006 and 0.08, respectively. In fig.~\ref{f:lambdas}, we used the 50 problems for each dataset but for each we kept the $(\overline{\x},k)$ part and varied $\lambda$ in $[10^{-4},10^2]$.

All our experiments were ran using Matlab in a PC with an Intel Core i7 processor (8 cores) and 16 GB RAM.

\begin{table}[b]
  \centering
  \begin{tabular}{@{}ccccccc@{}}
    \toprule 
    & & & \multicolumn{2}{c}{Training set} & \multicolumn{2}{c}{Test set} \\
    \cline{4-5}\cline{6-7}
    Dataset & $D$ & $K$ & $N$ & error (\%) & $N$ & error (\%) \\
    \midrule
    MNIST & 784 & 10 & 60\,000 & 6.14 & 10\,000 & 7.35 \\
    RCV1 & 47\,236 & 51 & 15\,564 & 7.03 & 518\,571 & 14.52 \\
    VGGFeat64 & 8\,192 & 16 & 16\,000 & 0.00 & 3\,200 & 6.54 \\
    VGGFeat256 & 131\,072 & 16 & 16\,000 & 0.00 & 3\,200 & 3.07 \\
    \midrule
    MNIST ($K =$ 2) & 784 & 2 & 60\,000 & 15.60 & 10\,000 & 15.83 \\
    RCV1 ($K =$ 2) & 47\,236 & 2 & 15\,564 & 4.23 & 518\,571 & 5.16 \\
    VGGFeat256 ($K =$ 2) & 131\,072 & 2 & 16\,000 & 0.00 & 3\,200 & 0.44 \\
    \bottomrule
  \end{tabular}
  \caption{Datasets and softmax classifiers.}
  \label{t:datasets}
\end{table}

\clearpage

\section{Looking at the digits of accuracy in the solution}
\label{s:digits-accuracy}

As discussed in the main text, Newton's method converges quadratically while all other methods converge linearly (or superlinearly for BFGS). This is evident in fig.~\ref{f:examples}, where the $E_k-E^*$ (or $\norm{\nabla E_k}$) curve is an exponential curve for Newton's method but a straight line for linearly convergent methods. Table~\ref{t:decimals} shows this more dramatically by printing explicitly the value of $E_k-E^*$ so we can see the decimals of accuracy achieved. It is clear that, once near the minimizer, Newton's method enters an asymptotic regime where it converges quadratically, and the number of correct digits roughly \emph{doubles} at each iteration, so we reach machine precision in just a few iterations. Linearly convergent methods require far many more iterations to achieve the same accuracy.

\begin{table}[b!]
  \centering
  \begin{tabular}{@{}rrrrr@{}}
    \toprule
    \multicolumn{1}{c}{$k$} & \multicolumn{1}{c}{Newton}  & \multicolumn{1}{c}{Gradient descent} & \multicolumn{1}{c}{L-BFGS} & \multicolumn{1}{c}{Conjugate gradient} \\
    \midrule
    1 & 11.8619041837028210 & 11.8619041837028210 & 11.8619041837028210 & 11.8619041837028210  \\
    2 & 9.9372386648974036 & 9.4302539222109605 & 11.5799873531340047 & 7.8983750132722008  \\
    3 & 9.0808867584076030 & 8.7110587612082764 & 10.7583569748493861 & 7.6657393532552422  \\
    4 & 8.8305942589710611 & 8.3881330123967608 & 9.9349688191421208 & 7.4857154746015588  \\
    5 & 6.0417478625925387 & 8.1562216346068226 & 9.4216784942161258 & 7.3181433942342551  \\
    6 & 2.6591874389232428 & 7.9932275656377580 & 8.8938728977689721 & 6.9227281220705681  \\
    7 & 1.2890680769334155 & 7.9712244558701926 & 8.8149865066619277 & 6.4034389561568101  \\
    8 & 0.6853217955710699 & 7.6718893687727281 & 8.2623248226038566 & 6.0762883180276770  \\
    9 & 0.5424719611123440 & 7.5005660590871734 & 8.0888003708399339 & 5.8814411631701109  \\
    10 & 0.0760466660936738 & 7.3211346697357822 & 7.8860714868005122 & 5.6633127311506914  \\
    11 & 0.0168636110015056 & 7.2410781355651777 & 7.4850916049060583 & 5.3105596069985168  \\
    12 & 0.0031375192229656 & 7.0977915521472728 & 7.2430847483604373 & 4.9945815453849001  \\
    13 & 0.0003850423026792 & 7.0388488056588061 & 6.7245677562655546 & 4.8097802381394512  \\
    14 & 0.0000147435815694 & 6.8631000669546767 & 6.6127580406687594 & 4.6308326737056191  \\
    15 & 0.0000000349853287 & 6.7057687379199944 & 6.4090847162496329 & 4.2268346415684688  \\
    16 & 0.0000000000002218 & 6.5381468797087248 & 6.0836575796218124 & 4.0361704289760212  \\
    17 & 0.0000000000000000 & 6.4878228940302725 & 5.5272242498443811 & 2.3805364947786689  \\[1ex]
    25 &  & 5.3636636844729431 & 2.6898091938913939 & 0.0629701004269668  \\[1ex]
    100 &  & 0.0003447036460872 & 0.0000000012869300 & 0.0000000009660868  \\[1ex]
    145 &  & 0.0000432782256198 & 0.0000000000000328 & 0.0000000000010827  \\[1ex]
    160 &  & 0.0000247039692411 &  & 0.0000000000001663  \\[1ex]
    165 &  & 0.0000207129199613 &  &   \\[1ex]
    500 &  & 0.0000000231646672 &  &   \\[1ex]
    1000 &  & 0.0000000000059133 &  &   \\[1ex]
    1174 &  & 0.0000000000003579 &  &  \\
    \bottomrule
  \end{tabular} 
  \caption{Numerical values of $E(\x_k) - E(\x^*)$ for the iterates $k = 1,2\dots$ for the RCV1 dataset example of fig.~\ref{f:examples}. Note that not all values of $k$ are shown. Blank elements mean the method already reached the stopping criterion.}
  \label{t:decimals}
\end{table}

\clearpage

\section{Type of line search}
\label{s:ls}

We consider 3 types of line search (l.s.\@) to determine the step size $\alpha_k$ given a search direction $\sss_k \in \bbR^D$:
\begin{description}
\item[Wolfe] This is a l.s.\ that finds a step size that satisfies the strong Wolfe conditions:
  \begin{itemize}
  \item $E(\x_k+\alpha_k\sss_k) \leq E(\x_k) + c_1\alpha_k\nabla E(\x_k)^T\sss_k$ (``sufficient decrease in objective function'')
  \item $\abs{\nabla E(\x_k + \alpha_k \sss_k)^T \sss_k} \le c_2 \abs{\nabla E(\x_k)^T\sss_k}$ (``flatter gradient'')
  \end{itemize}
  where $0<c_1<c_2<1$. The l.s.\ is based on algorithm 3.5 of \citet{NocedalWright06a} (using $c_1 = 10^{-4}$ and $c_2 = 0.9$). This is a sophisticated l.s.\ requiring several evaluations of $E$ to approximate it via a cubic interpolant and a procedure to zoom in and out of the search interval.
\item[Backtracking] This satisfies only that the step decreases $E$. It is much simpler and faster than the previous l.s., since all it does is, starting with $\alpha_k = 1$, multiply $\alpha_k$ by a constant factor $\rho \in (0,1)$ until a step size is found such that $E(\x_k + \alpha_k \sss_k) < E(\x_k)$. We used $\rho = 0.8$.
\item[Constant] We use a constant step size $\alpha_k = 1/L$ where $L$ is the Lipschitz constant for $E$. This is the simplest and fastest l.s., since no step sizes are searched. With gradient descent, it ensures that $E$ decreases at each iteration and that we converge linearly to $\x^*$. In the example of fig.~\ref{f:GD-ls}, $L = 4.511$ so $\alpha_k = 0.221$. We use it only with gradient descent, since for other methods it is difficult to find a constant step size that is guaranteed to decrease $E$.
\end{description}
Fig.~\ref{f:GD-ls} compares the line searches for gradient descent ($\sss_k = - \nabla E(\x_k)$). Clearly, gradient descent is best with a l.s.\ that ensures the Wolfe conditions hold; it converged within 200 iterations while the other two did not reach the desired accuracy in 1\,000 iterations. The extra time spent in the l.s.\ compensates by reducing drastically the number of iterations. One reason why this happens is that the value of the step size $\alpha_k$ can vary significantly across iterations.

\begin{figure}[t]
  \centering
  \psfrag{Wolfe}{Wolfe cond.}
  \psfrag{Backtracking}{backtracking}
  \psfrag{Constant}{constant}
  \begin{tabular}{@{\hspace{-2ex}}c@{}c@{}c@{}}
    $E(\x_k)$ & $\norm{\nabla E(\x_k)}$ & $E(\x_k) - E(\x^*)$ \\
    \psfrag{E}{} 
    \psfrag{its}[][]{$\log_{10}$(\#iterations)}
    \includegraphics*[width=0.333\linewidth]{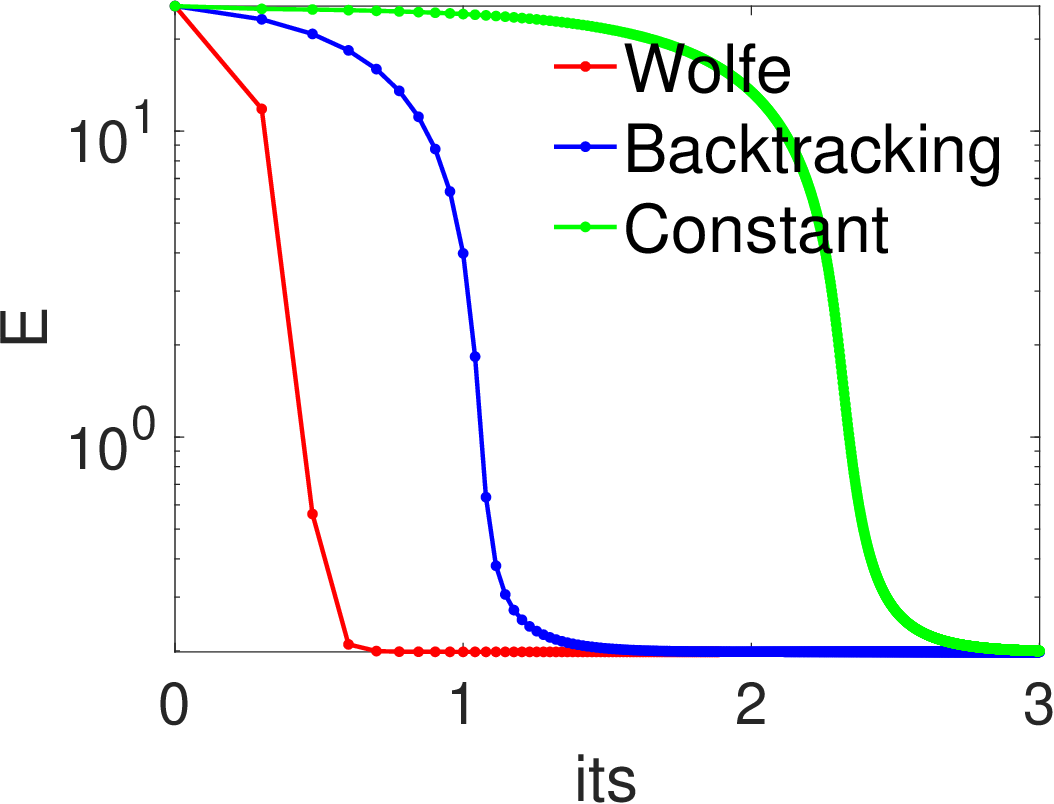} &
    \psfrag{G}{} 
    \psfrag{its}[][]{$\log_{10}$(\#iterations)}
    \includegraphics*[width=0.333\linewidth]{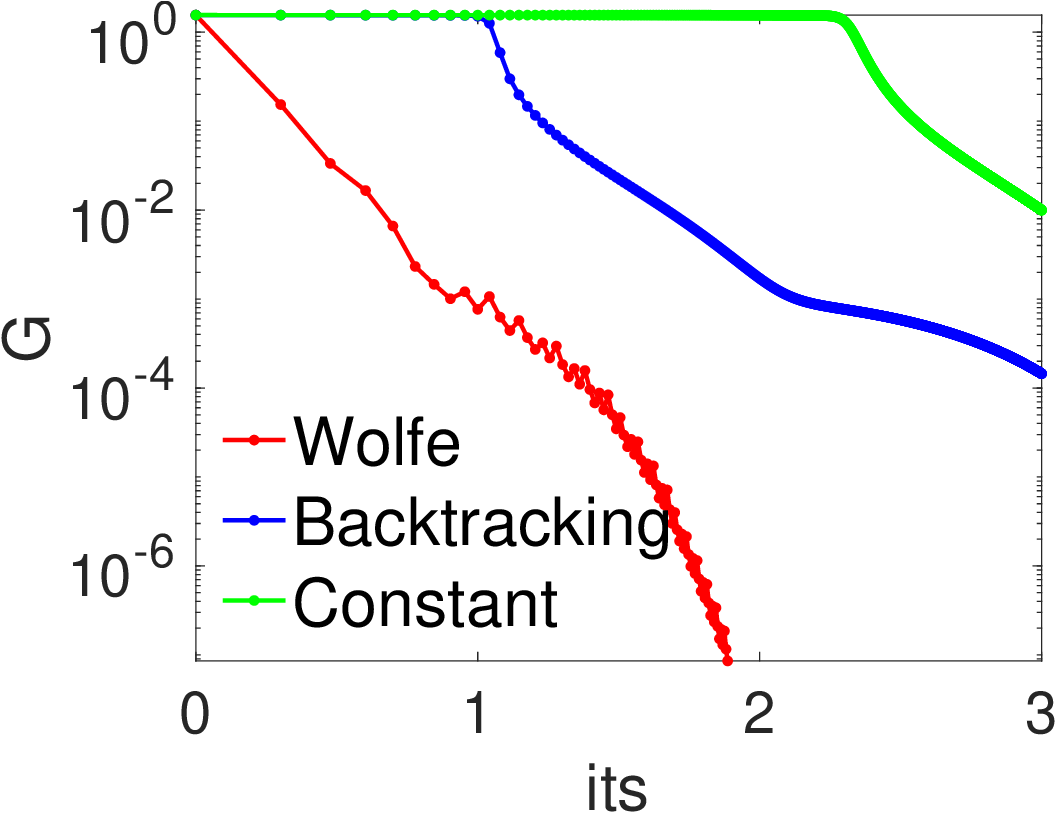} &
    \psfrag{E}{} 
    \psfrag{its}[][]{$\log_{10}$(\#iterations)}
    \includegraphics*[width=0.333\linewidth]{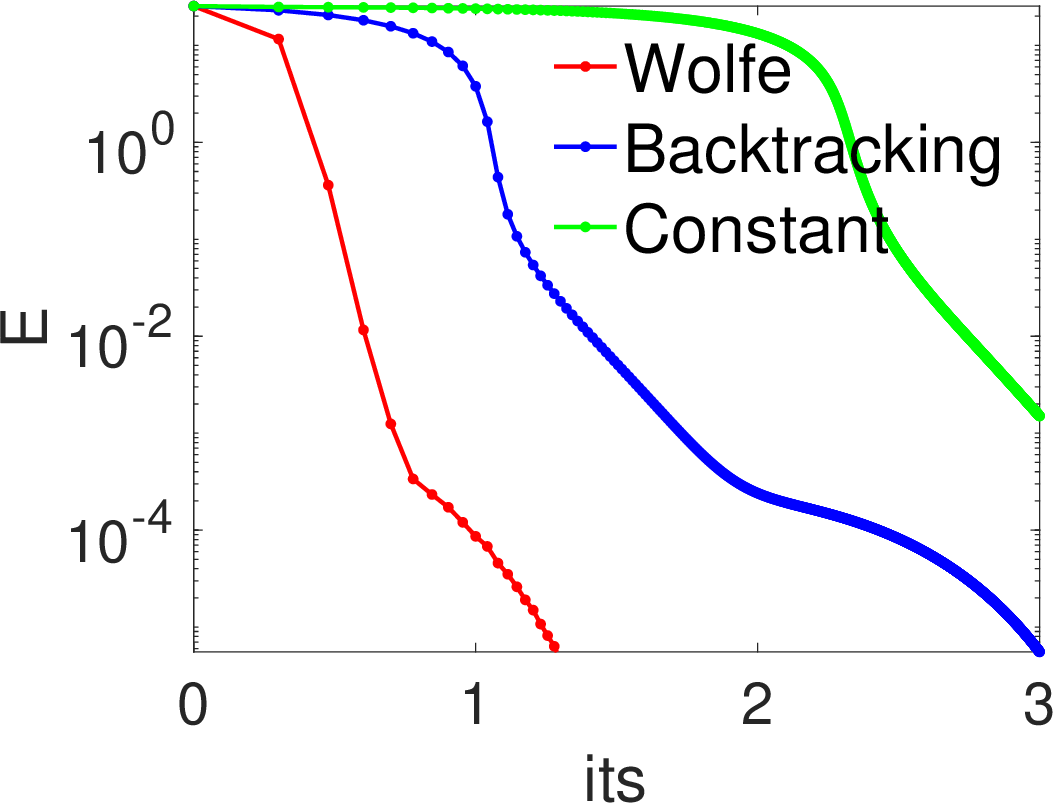} \\[3ex]
    \psfrag{E}{} 
    \psfrag{time}[][]{$\log_{10}$(time (s))}
    \includegraphics*[width=0.333\linewidth]{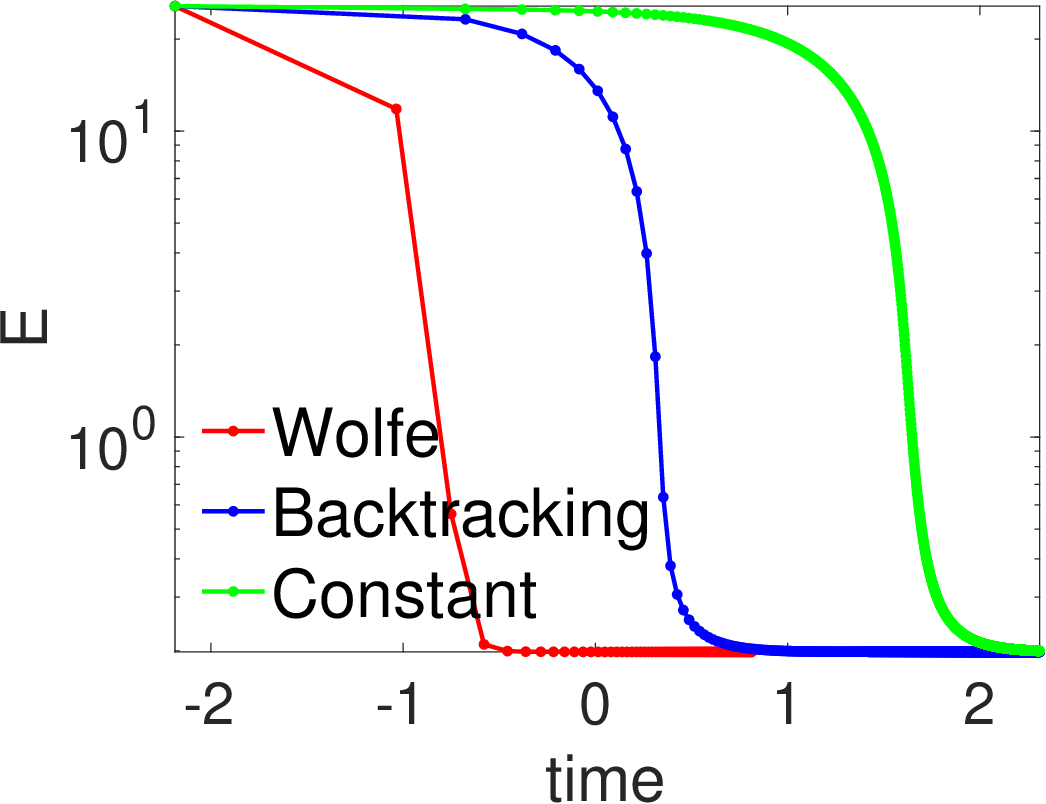} &   
    \psfrag{G}{} 
    \psfrag{time}[][]{$\log_{10}$(time (s))}
    \includegraphics*[width=0.333\linewidth]{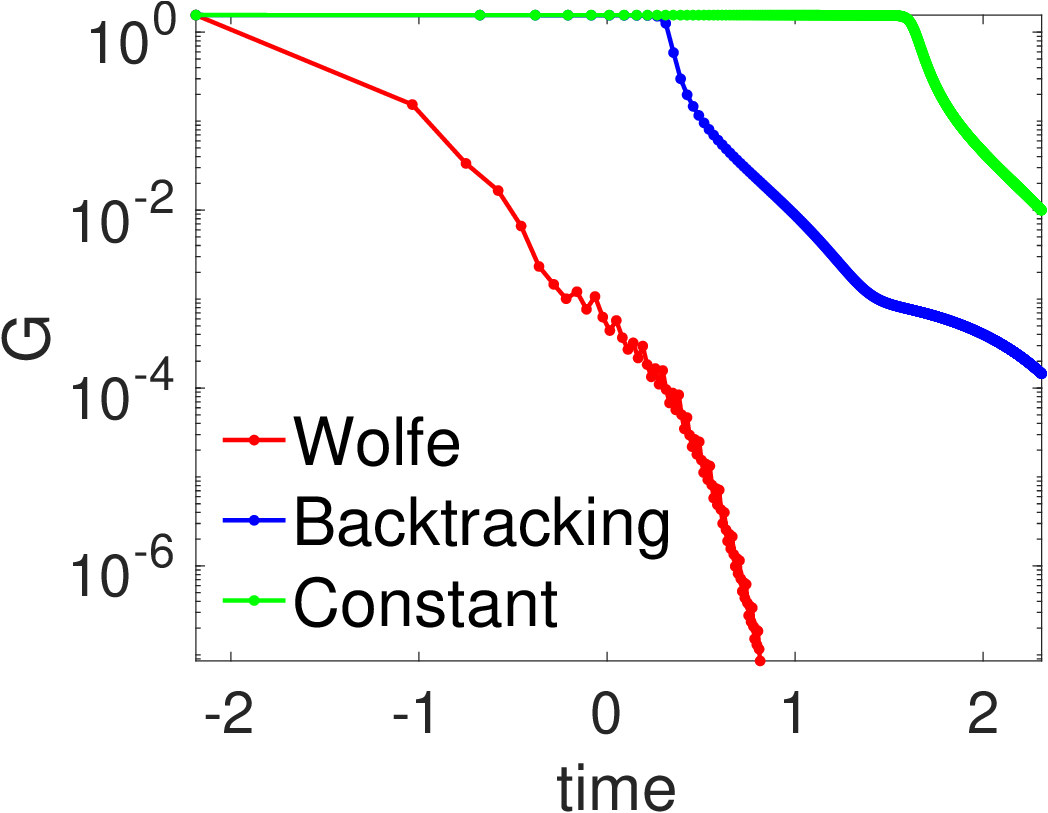} &
    \psfrag{E}{} 
    \psfrag{time}[][]{$\log_{10}$(time (s))}
    \includegraphics*[width=0.333\linewidth]{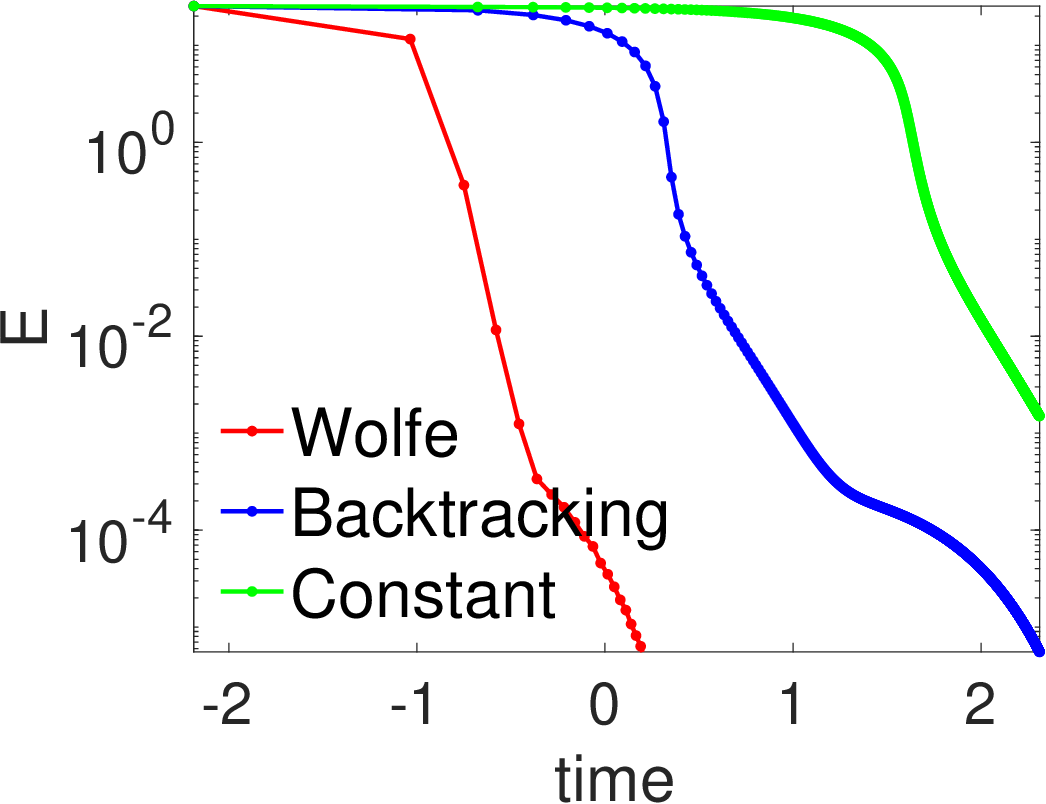}
  \end{tabular}
  \caption{Types of line search in gradient descent: satisfying the Wolfe conditions, backtracking and using a constant step size $1/L$ (where $L$ is the Lipschitz constant of the objective function $E$). The problem uses a source instance from the VGGFeat256 dataset with $\lambda = 10^{-3}$. In each column we plot $E(\x_k)$, $\norm{\nabla E(\x_k)}$ and $E(\x_k) - E(\x^*)$ (where $\x^*$ was estimated using Newton's method), for each iterate $k = 1,2,\dots$ Row 1 shows the number of iterations in the X axis and row 2 the runtime (s). Gradient descent was run with each line search until either $\norm{\nabla E(\x_k)} < 10^{-7}$ or 1\,000 iterations were reached.}
  \label{f:GD-ls}
\end{figure}

Fig.~\ref{f:Newton-ls-summary}--\ref{f:Newton-ls} compare the line searches for Newton's method ($\sss_k = -\nabla^2 E(\x_k)^{-1} \nabla E(\x_k)$). In terms of the number of iterations required to reach the desired accuracy, the Wolfe l.s.\ is consistently better than backtracking but by a minimal margin, it just saves a few iterations (1 to 2 typically). Note that, from fig.~\ref{f:Newton-ls-summary} (see also the main text), Newton's method takes usually less than 10 to converge, rarely reaching 14, so there is very little room to improve that. In runtime, the Wolfe l.s.\ is around twice slower than backtracking (1.5--5$\times$ slower depending on the case), and never faster than backtracking. This is a consequence of the Wolfe l.s.\ taking more time per iteration, particularly given that each backtracking l.s.\ tries very few steps before succeeding (see fig.~\ref{f:histogr}), very often just one. Although the precise runtimes will depend on the implementation of the Wolfe l.s.\ (we used Matlab), it is clear that backtracking is likely faster and certainly simpler. Also, a good l.s.\ for Newton's method must ensure that step sizes of 1 are used once near enough the minimizer to ensure quadratic convergence.

\begin{figure}[t]
  \centering
  \psfrag{RCV1}{RCV1}
  \psfrag{VGGFeat256}{VGGFeat256}
  \psfrag{VGGFeat64}{VGGFeat64}
  \psfrag{MNIST}{MNIST}
  \begin{tabular}{@{}c@{\hspace{0.10\linewidth}}c@{}}
    number of iterations & runtime (s), log plot \\
    \psfrag{iitlsw}[][]{Wolfe l.s.}
    \psfrag{iitlsb}[][]{backtracking l.s.}
    \includegraphics*[width=0.40\linewidth]{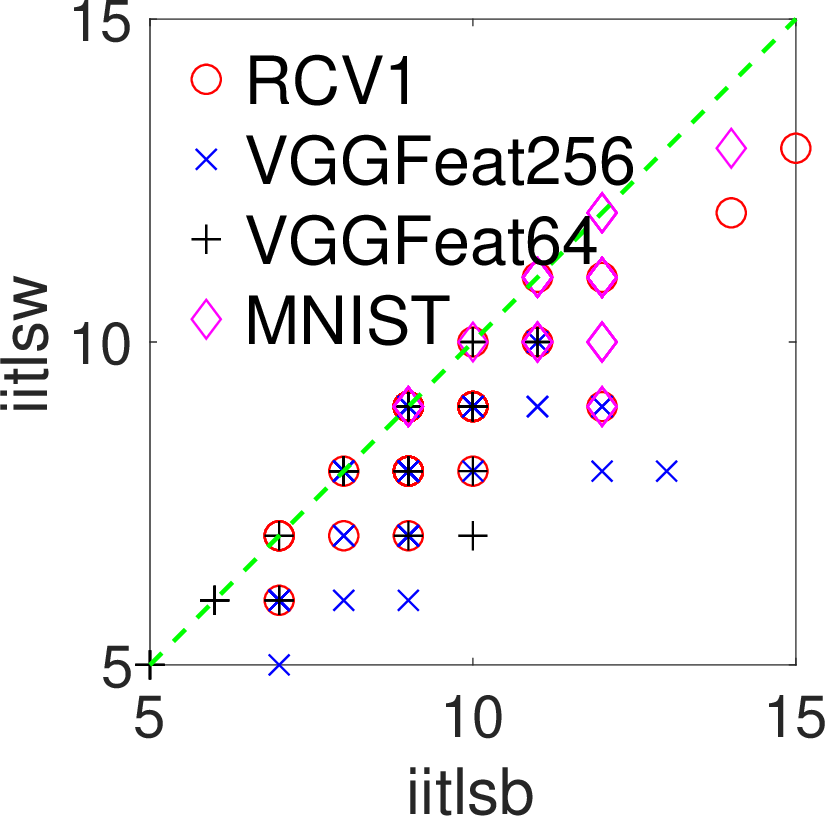} &
    \psfrag{timettlsw}[][]{Wolfe l.s.}
    \psfrag{timettlsb}[][]{backtracking l.s.}
    \includegraphics*[width=0.40\linewidth]{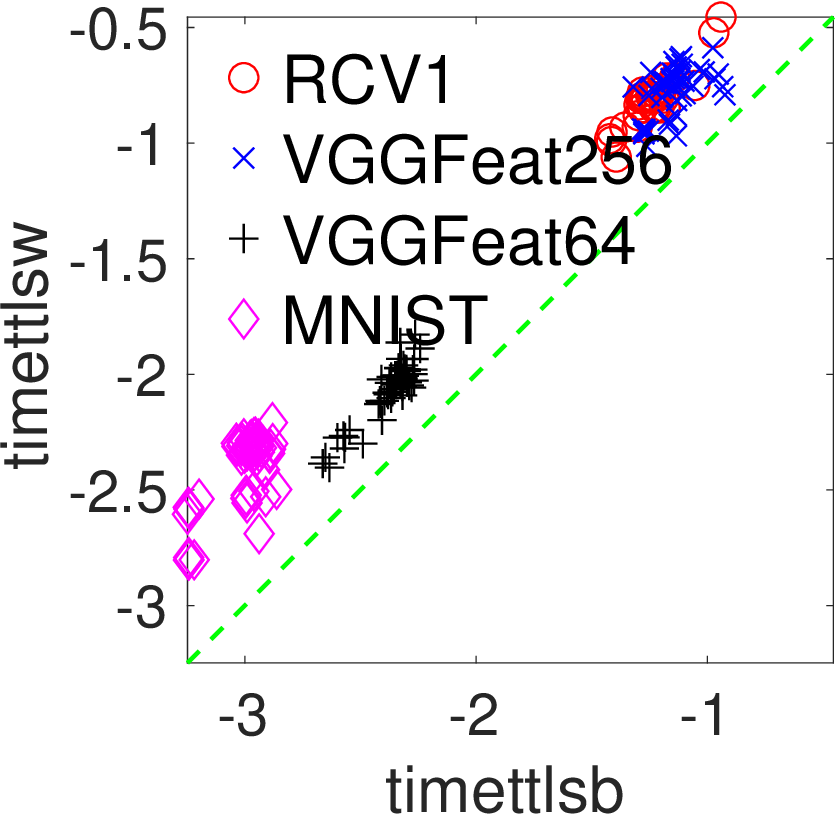}
  \end{tabular}
  \caption{Types of line search in Newton's method: satisfying the Wolfe conditions and backtracking. We plot the number of iterations and runtime (s) over a collection of 50 problem instances for each of 4 datasets (these are the same problems as in fig.~\ref{f:all-examples}).}
  \label{f:Newton-ls-summary}
\end{figure}

\begin{figure}[t]
  \centering
  \psfrag{Backtracking}{\scriptsize Backtracking}
  \psfrag{Wolfe}{\scriptsize Wolfe}
  \begin{tabular}{@{}c@{}c@{}c@{}c@{}}
    \caja{c}{c}{VGGFeat256 \\ $\lambda=0.015$ \\[-0.5ex] \scriptsize $(p_8,p_0) = (0.999,1.28\cdot 10^{-14})$ \\[-1ex] \scriptsize $\rightarrow (0.141,0.827)$} & \caja{c}{c}{VGGFeat64 \\ $\lambda=0.01$ \\[-0.5ex] \scriptsize $(p_8,p_0) = (0.999,1.22\cdot 10^{-5})$ \\[-1ex] \scriptsize $\rightarrow (3.15\cdot 10^{-3},0.963)$} & \caja{c}{c}{RCV1 \\ $\lambda=0.01$ \\[-0.5ex] \scriptsize $(p_{19},p_{49}) = (0.984,1.55\cdot 10^{-12})$ \\[-1ex] \scriptsize $\rightarrow (2.31\cdot 10^{-3},0.899)$} & \caja{c}{c}{MNIST \\ $\lambda=0.01$ \\[-0.5ex] \scriptsize $(p_7,p_0) = (0.999,6.49\cdot 10^{-6})$ \\[-1ex] \scriptsize $\rightarrow (4.22\cdot 10^{-4},0.999)$} \\[5ex]
    \psfrag{E}[][]{$E$}
    \psfrag{its}[][]{$\log_{10}$(\#iterations)}
    \includegraphics*[width=0.25\linewidth,height=0.19\linewidth]{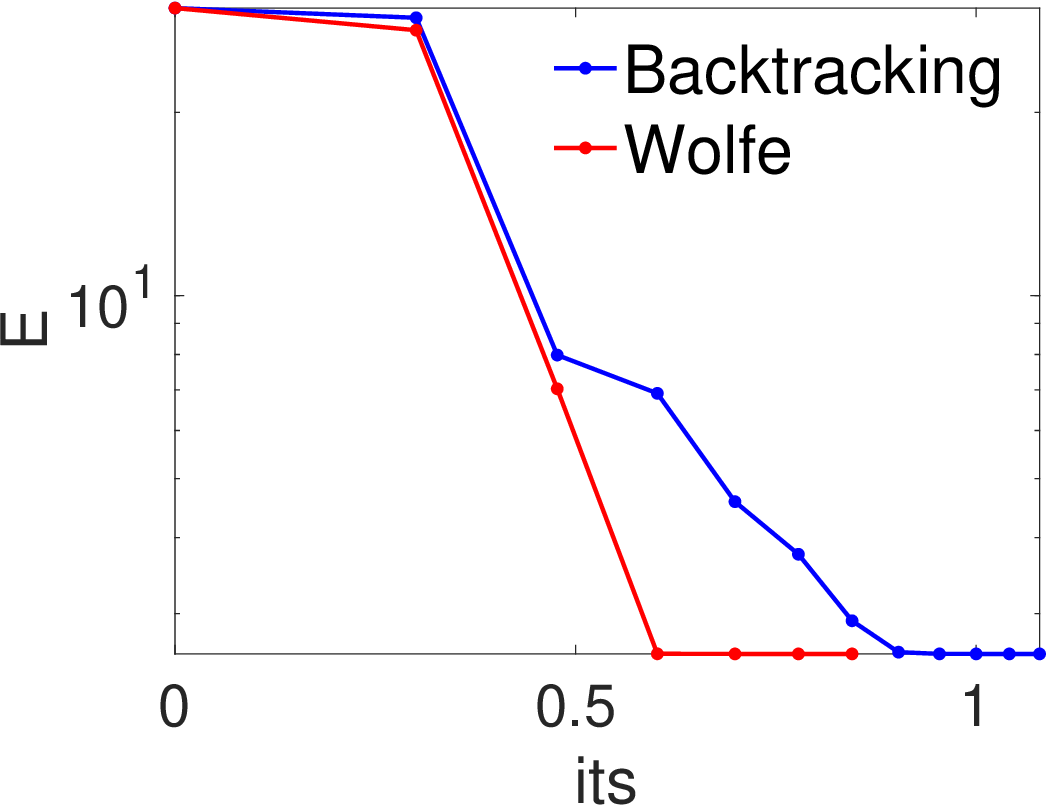}&
    \psfrag{E}{}
    \psfrag{its}[][]{$\log_{10}$(\#iterations)}
    \includegraphics*[width=0.25\linewidth,height=0.19\linewidth]{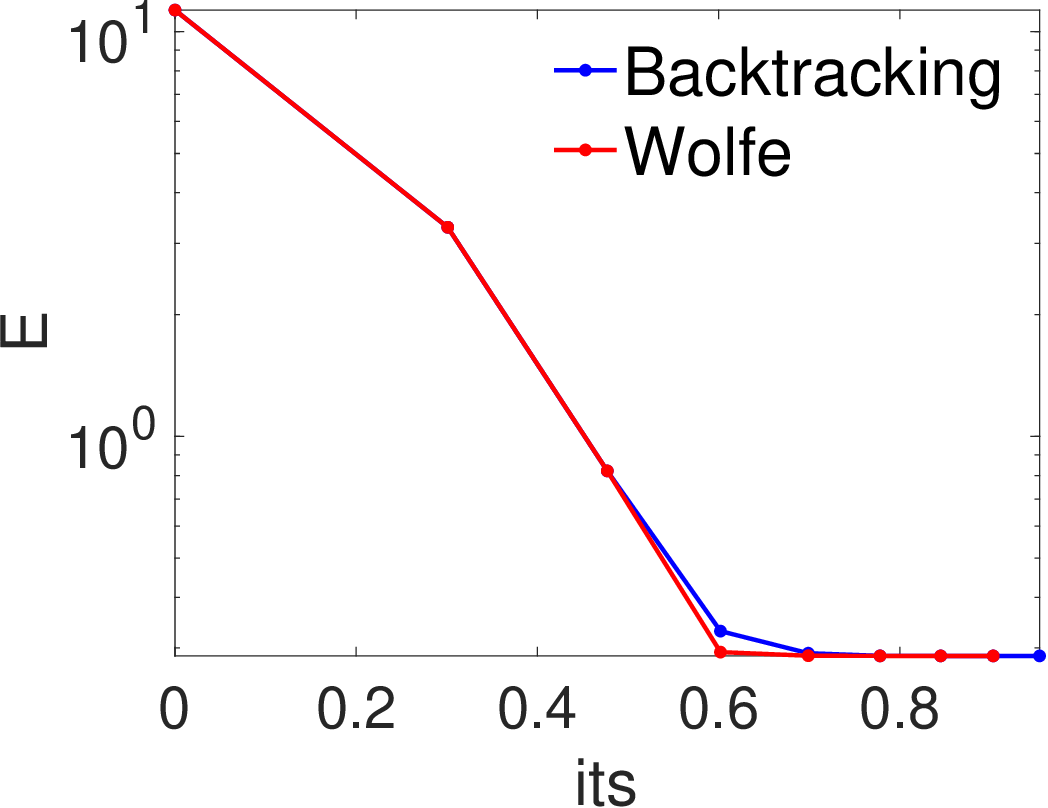}&
    \psfrag{E}{}
    \psfrag{its}[][]{$\log_{10}$(\#iterations)}
    \includegraphics*[width=0.25\linewidth,height=0.19\linewidth]{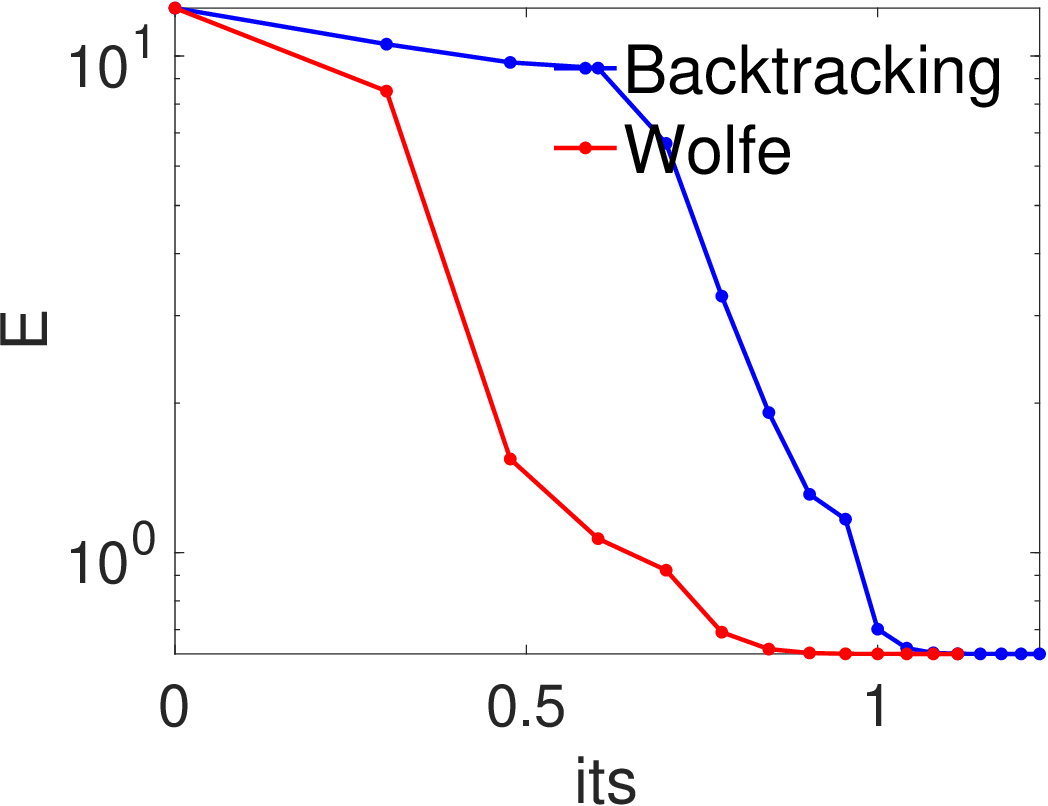}&
    \psfrag{E}{}
    \psfrag{its}[][]{$\log_{10}$(\#iterations)}
    \includegraphics*[width=0.25\linewidth,height=0.19\linewidth]{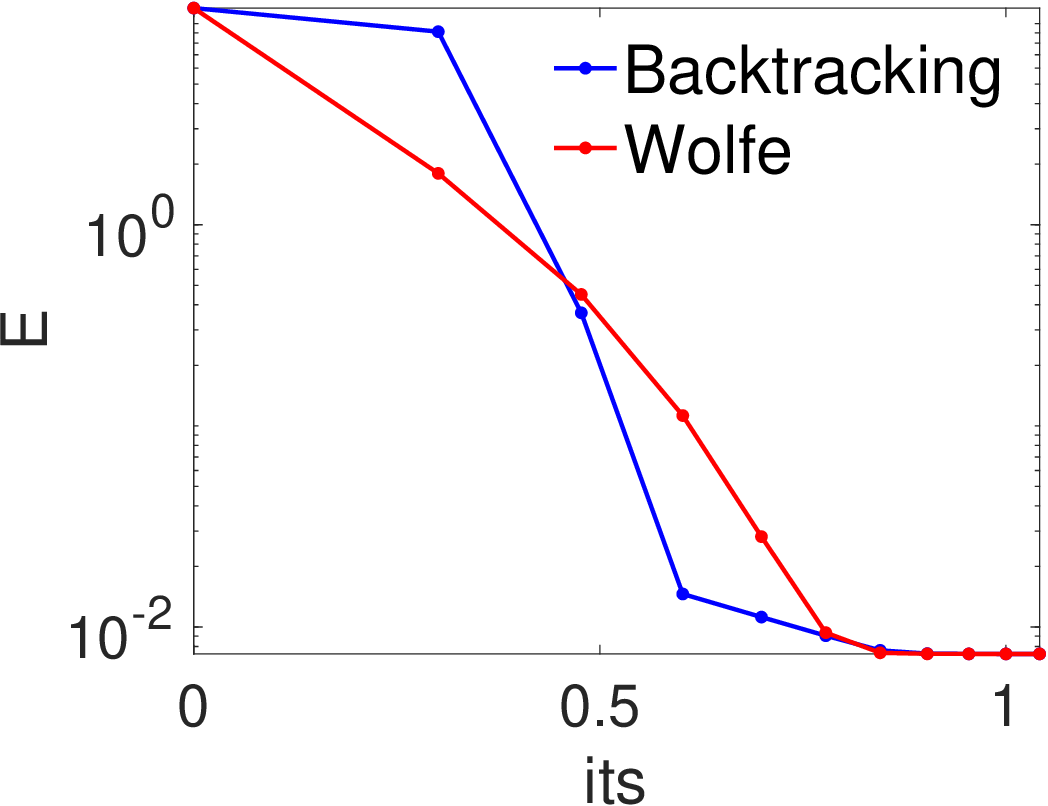} \\
    \psfrag{E}[][]{$E$}
    \psfrag{time}[][]{$\log_{10}$(time (s))}
    \includegraphics*[width=0.25\linewidth,height=0.19\linewidth]{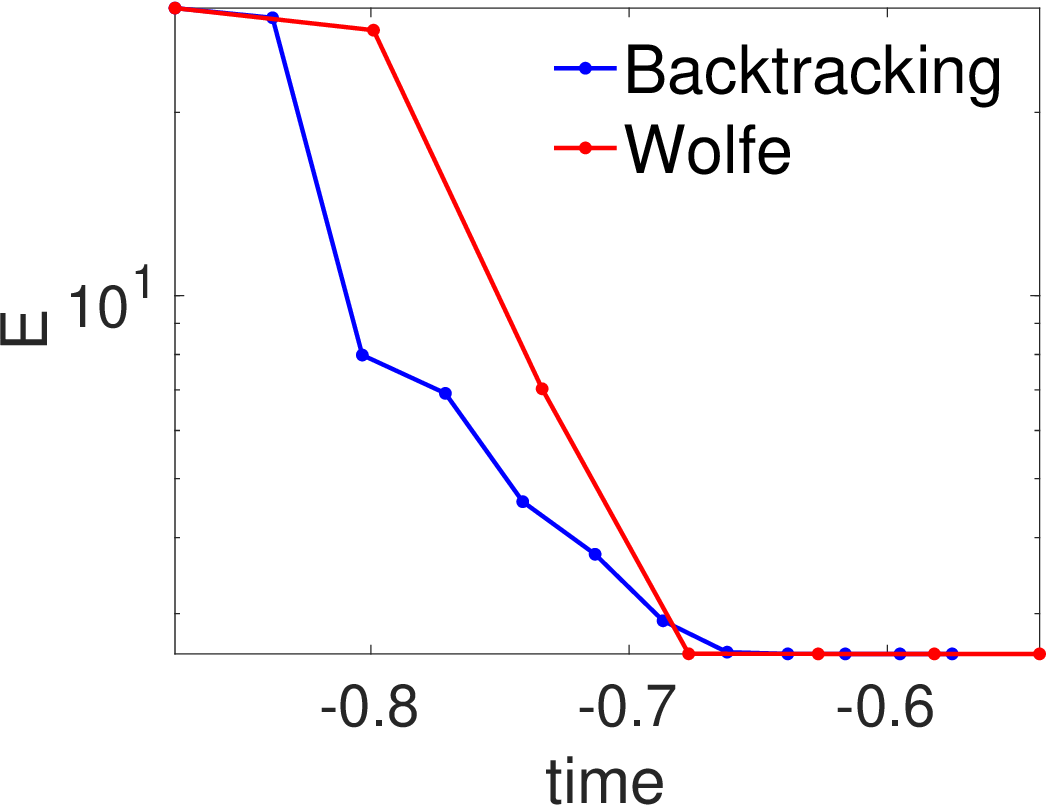}&
    \psfrag{E}{}
    \psfrag{time}[][]{$\log_{10}$(time (s))}
    \includegraphics*[width=0.25\linewidth,height=0.19\linewidth]{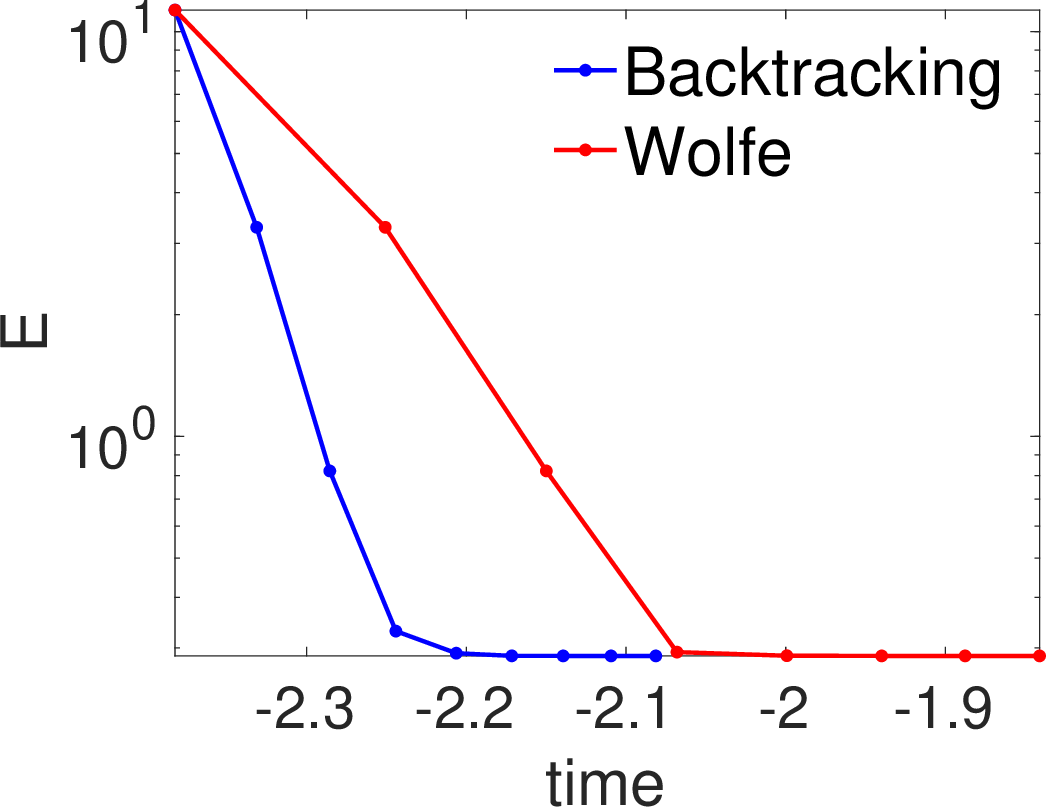}&
    \psfrag{E}{}
    \psfrag{time}[][]{$\log_{10}$(time (s))}
    \includegraphics*[width=0.25\linewidth,height=0.19\linewidth]{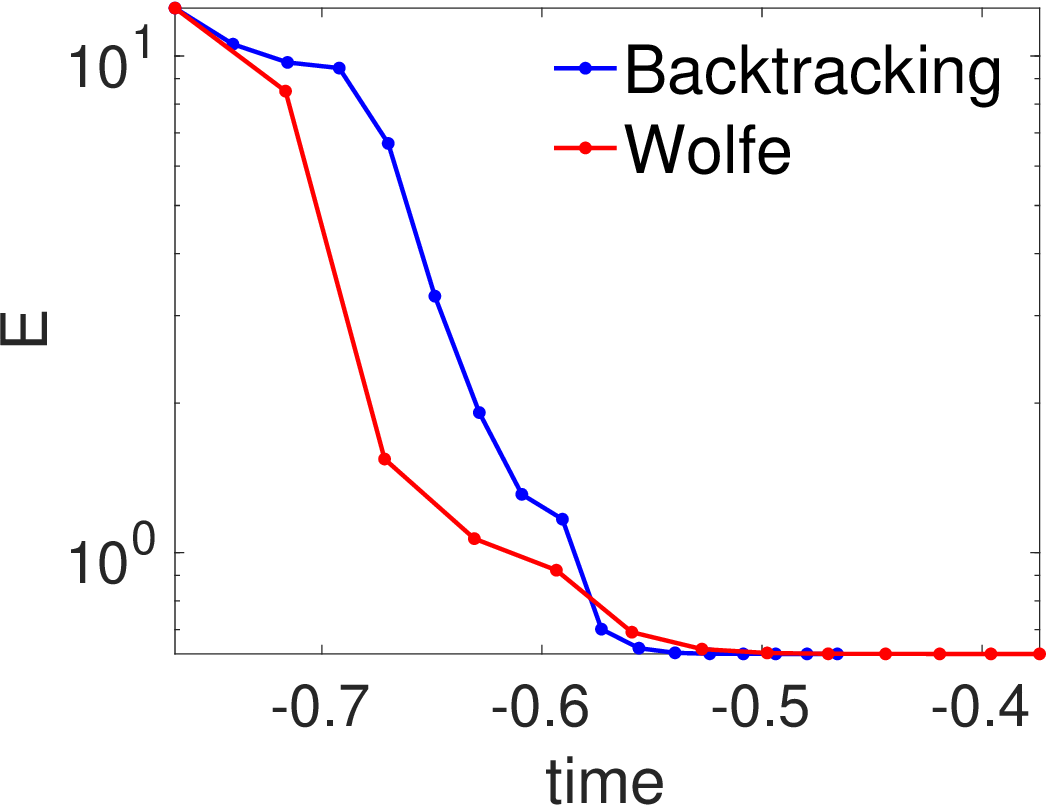}&
    \psfrag{E}{}
    \psfrag{time}[][]{$\log_{10}$(time (s))}
    \includegraphics*[width=0.25\linewidth,height=0.19\linewidth]{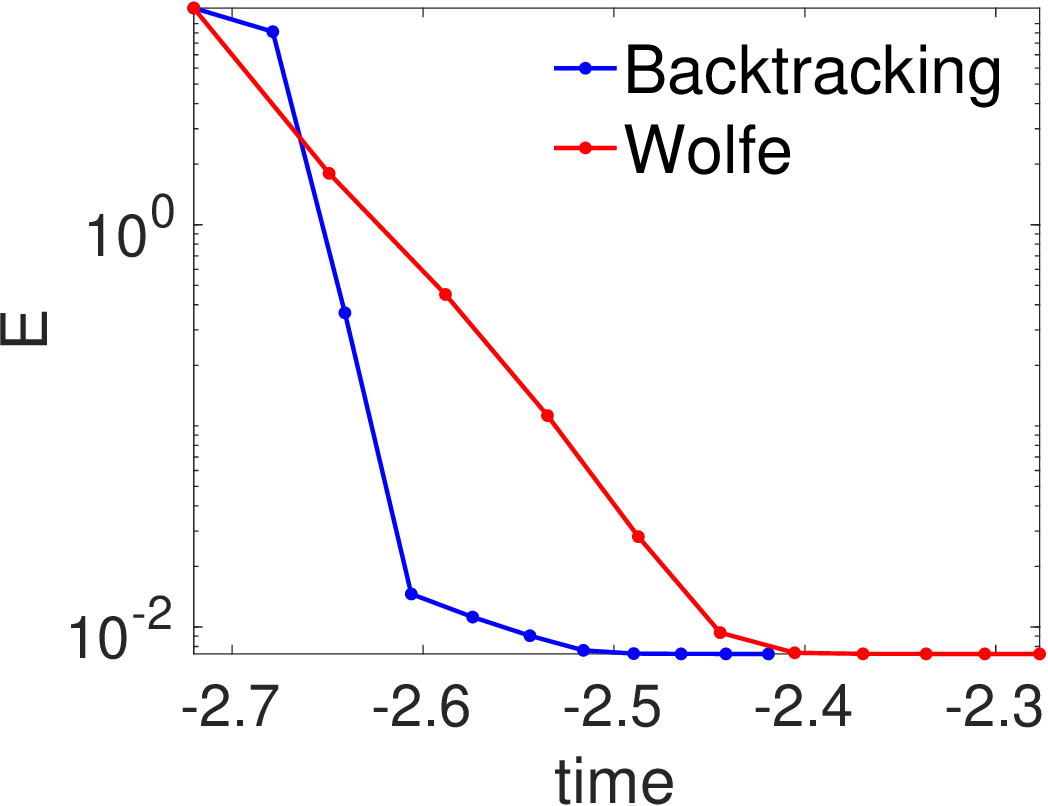} \\[2ex]
    \psfrag{G}[][]{$\norm{\nabla E}$}
    \psfrag{its}[][]{$\log_{10}$(\#iterations)}
    \includegraphics*[width=0.25\linewidth,height=0.19\linewidth]{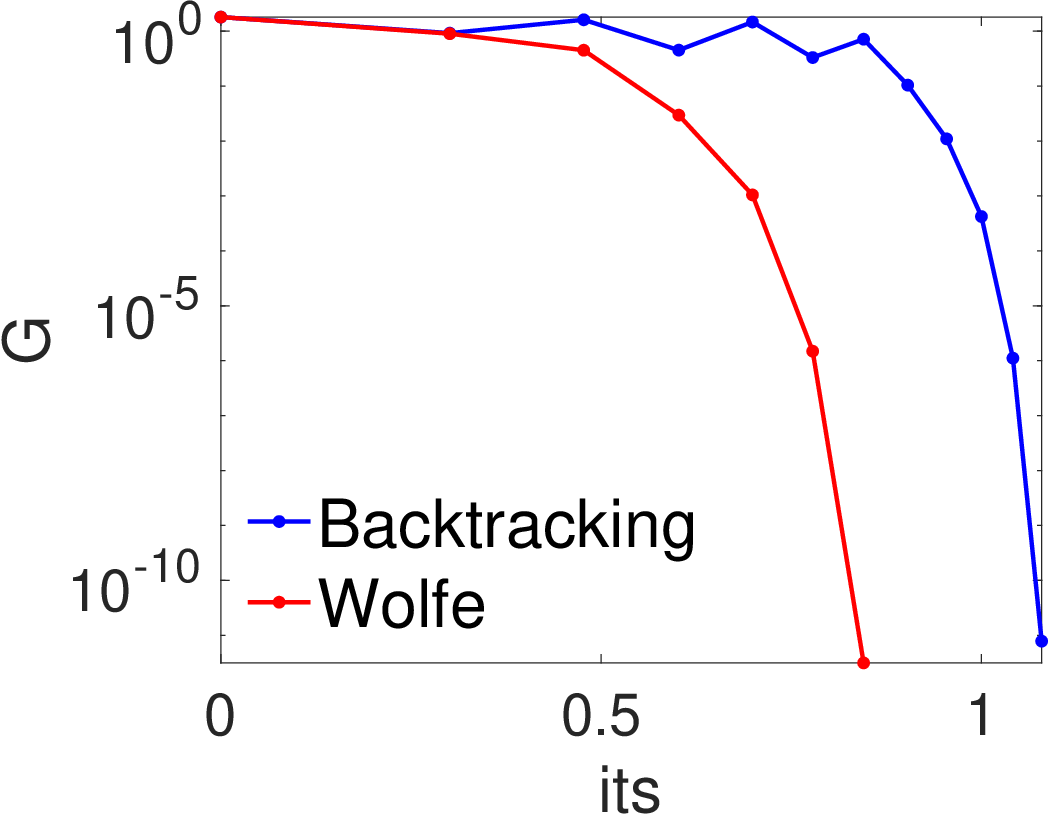}&
    \psfrag{G}{}
    \psfrag{its}[][]{$\log_{10}$(\#iterations)}
    \includegraphics*[width=0.25\linewidth,height=0.19\linewidth]{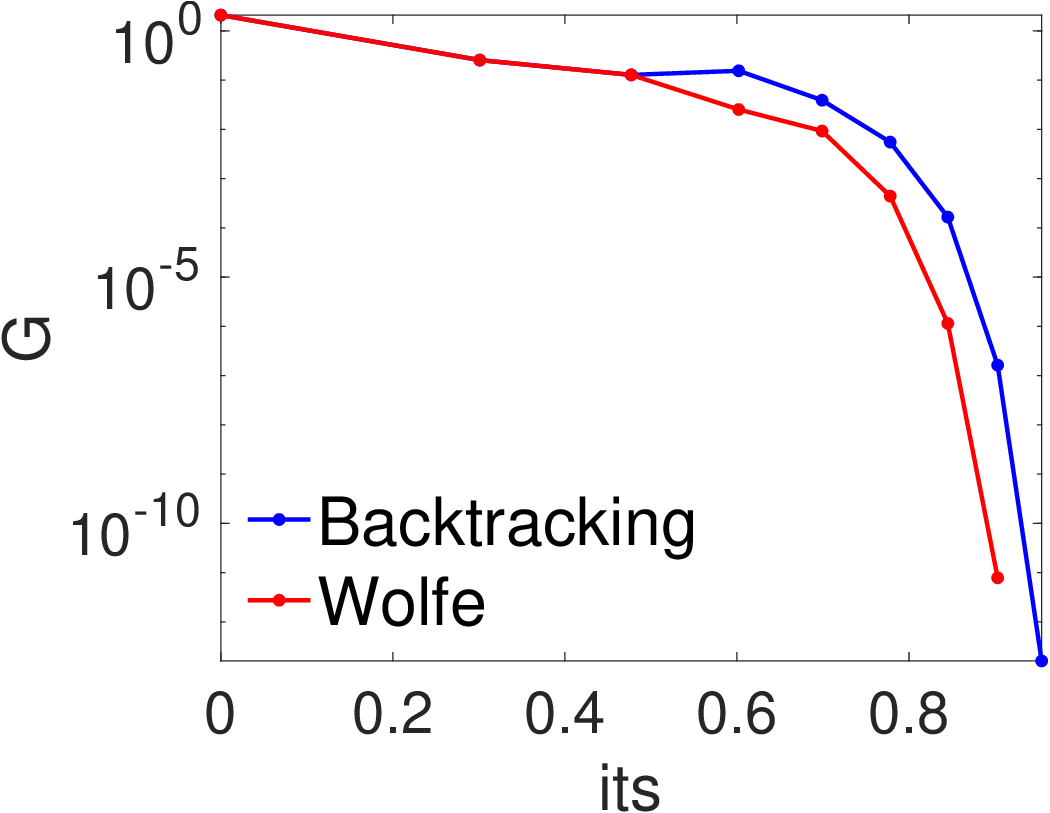}&
    \psfrag{G}{}
    \psfrag{its}[][]{$\log_{10}$(\#iterations)}
    \includegraphics*[width=0.25\linewidth,height=0.19\linewidth]{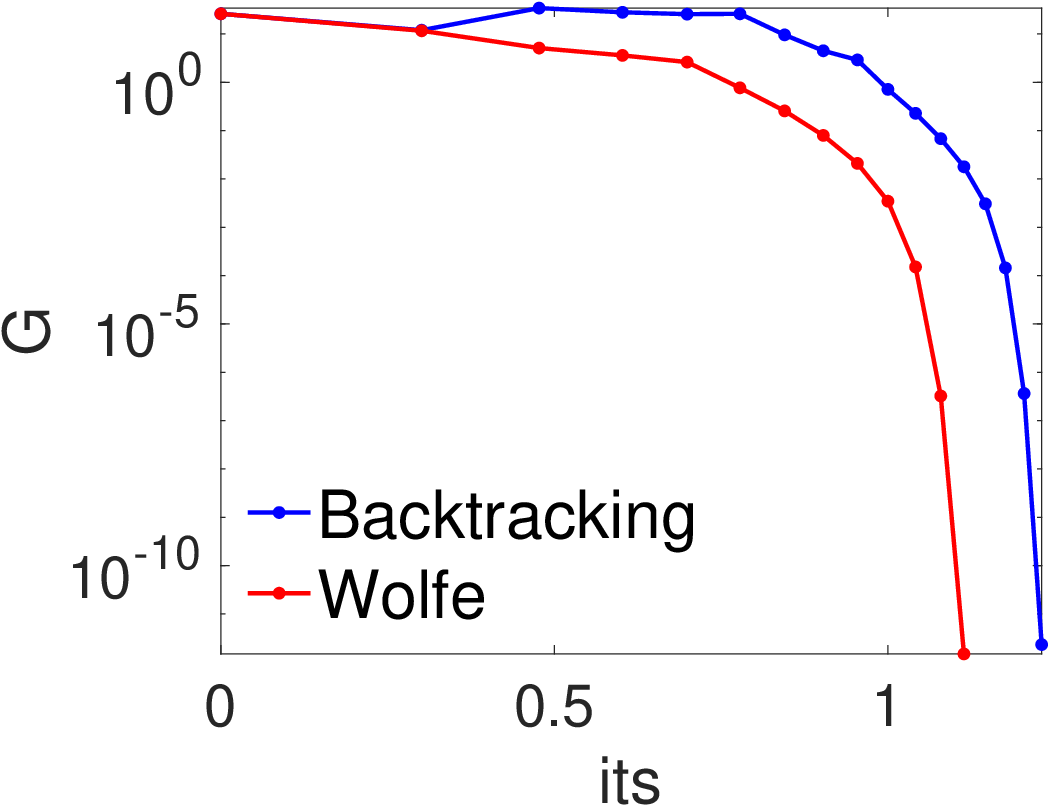}&
    \psfrag{G}{}
    \psfrag{its}[][]{$\log_{10}$(\#iterations)}
    \includegraphics*[width=0.25\linewidth,height=0.19\linewidth]{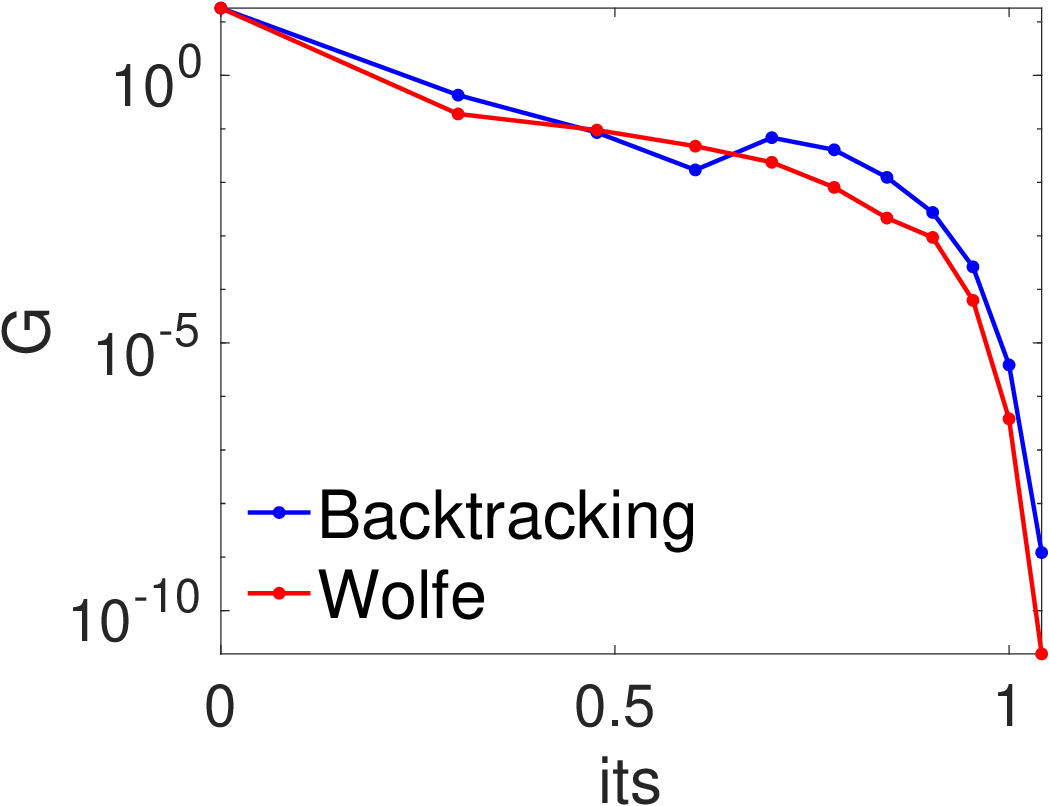} \\
    \psfrag{G}[][]{$ \norm{\nabla E}$}
    \psfrag{time}[][]{$\log_{10}$(time (s))}
    \includegraphics*[width=0.25\linewidth,height=0.19\linewidth]{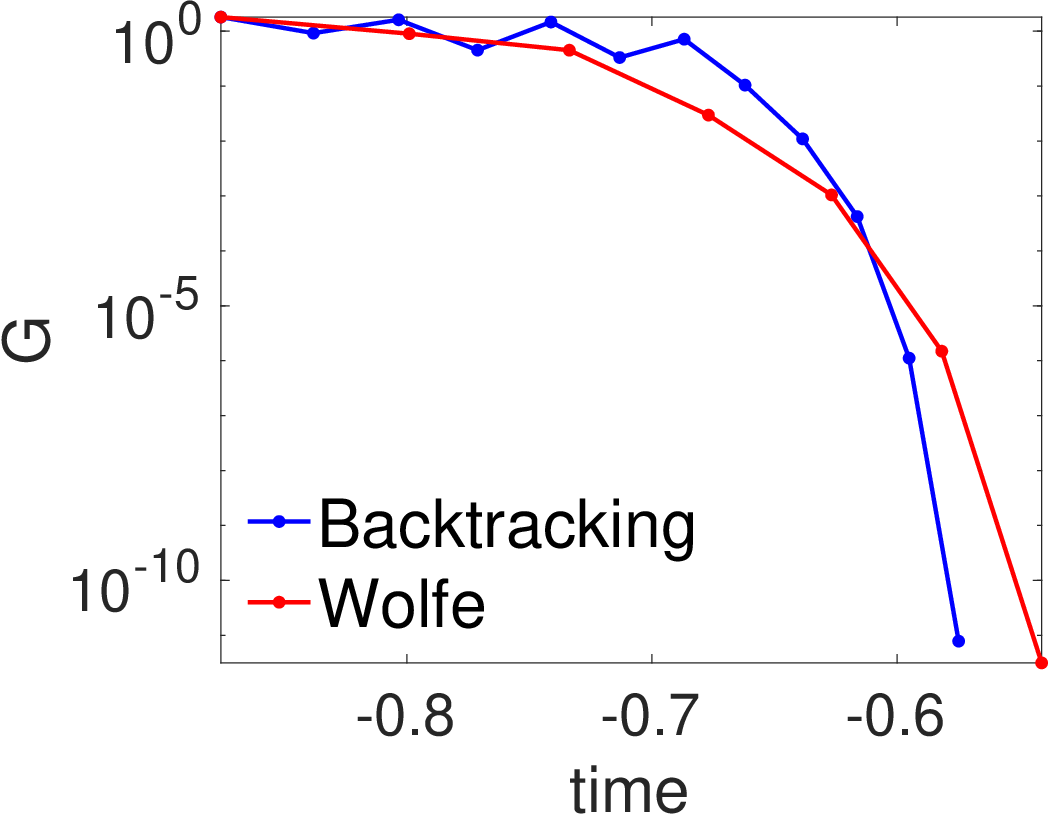}&
    \psfrag{G}{}
    \psfrag{time}[][]{$\log_{10}$(time (s))}
    \includegraphics*[width=0.25\linewidth,height=0.19\linewidth]{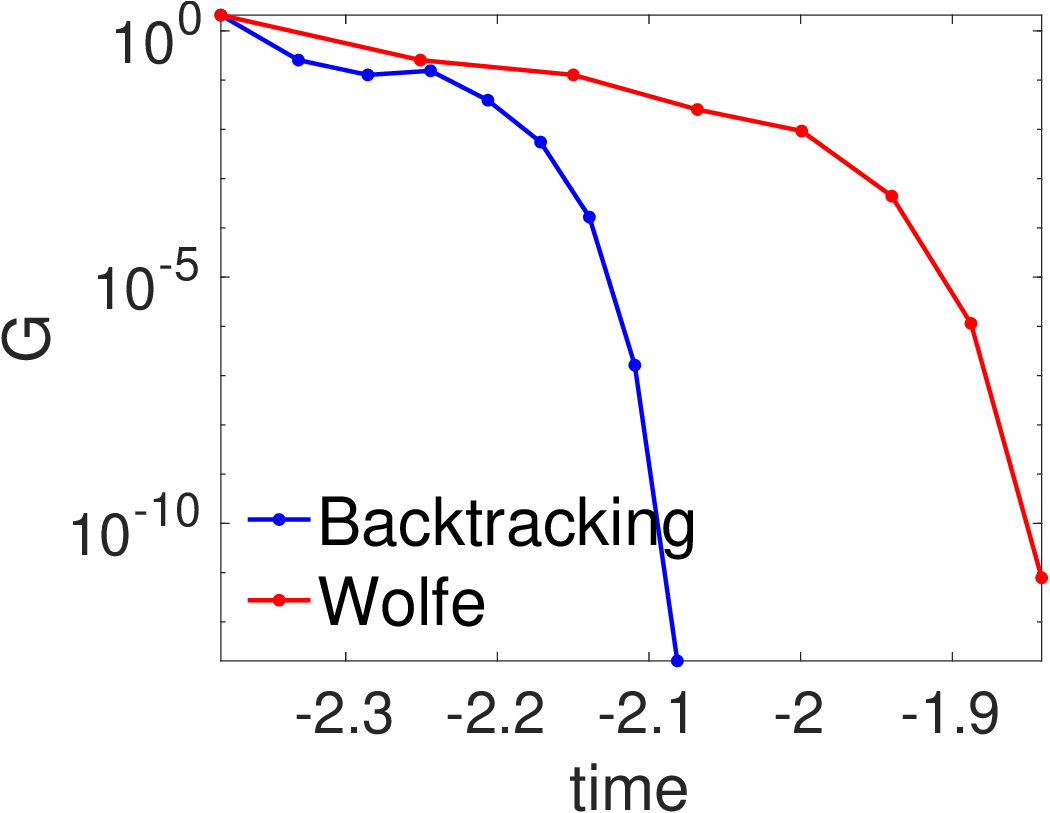}&
    \psfrag{G}{}
    \psfrag{time}[][]{$\log_{10}$(time (s))}
    \includegraphics*[width=0.25\linewidth,height=0.19\linewidth]{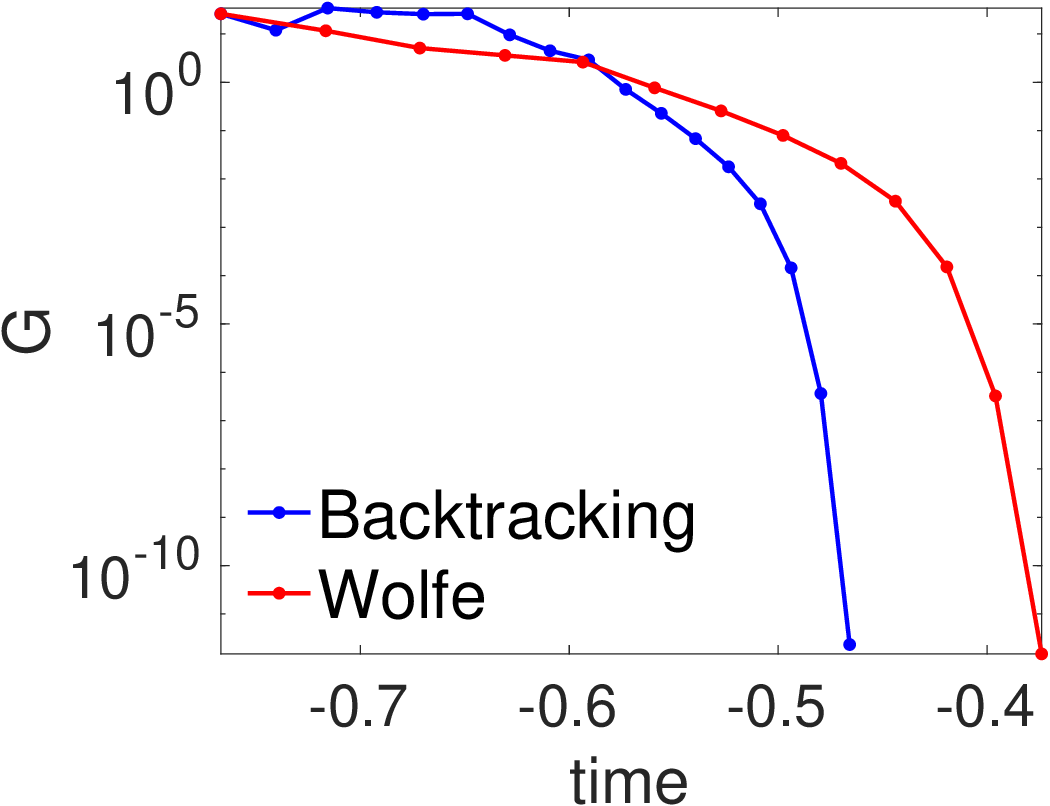}&
    \psfrag{G}{}
    \psfrag{time}[][]{$\log_{10}$(time (s))}
    \includegraphics*[width=0.25\linewidth,height=0.19\linewidth]{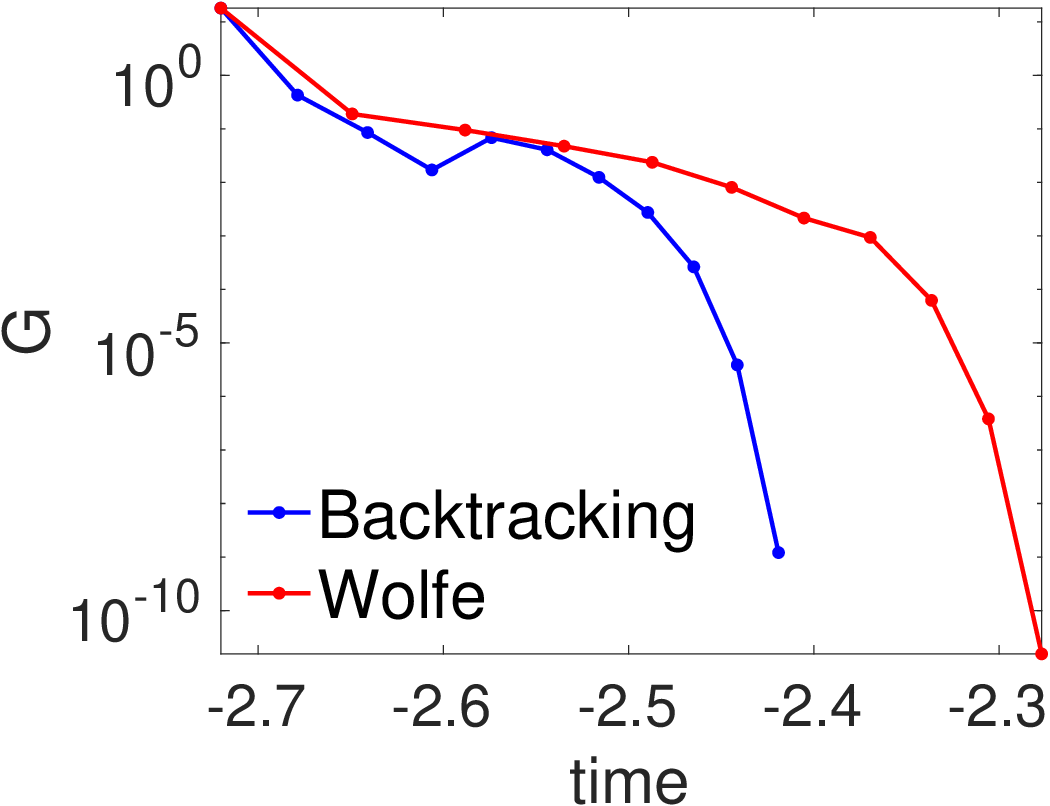} \\[2ex]
    \psfrag{E}[][]{$E - E^{*}$}
    \psfrag{its}[][]{$\log_{10}$(\#iterations)}
    \includegraphics*[width=0.25\linewidth,height=0.19\linewidth]{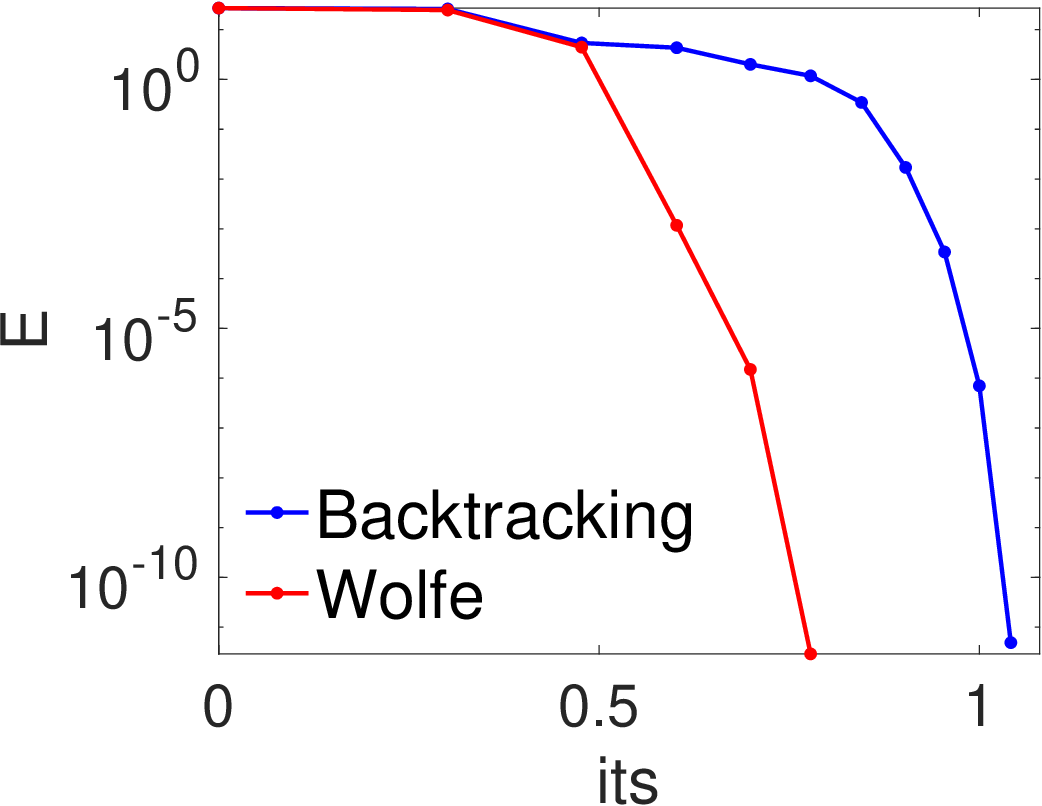} &
    \psfrag{E}{}
    \psfrag{its}[][]{$\log_{10}$(\#iterations)}
    \includegraphics*[width=0.25\linewidth,height=0.19\linewidth]{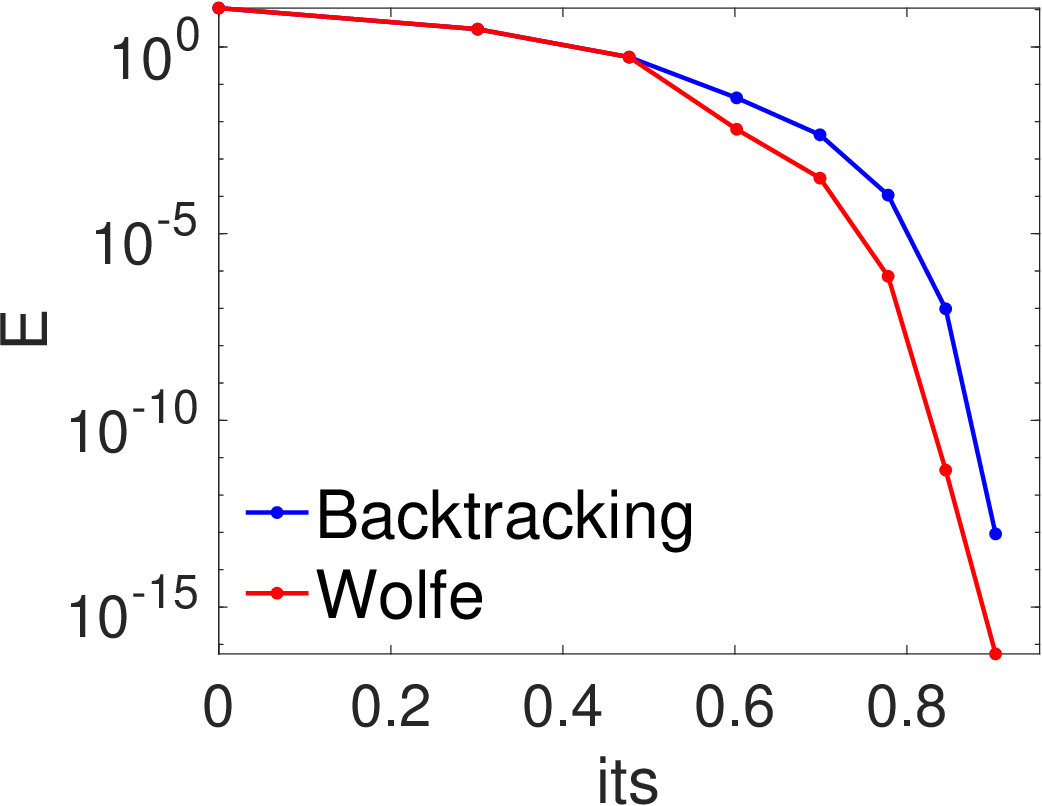} &
    \psfrag{E}{}
    \psfrag{its}[][]{$\log_{10}$(\#iterations)}
    \includegraphics*[width=0.25\linewidth,height=0.19\linewidth]{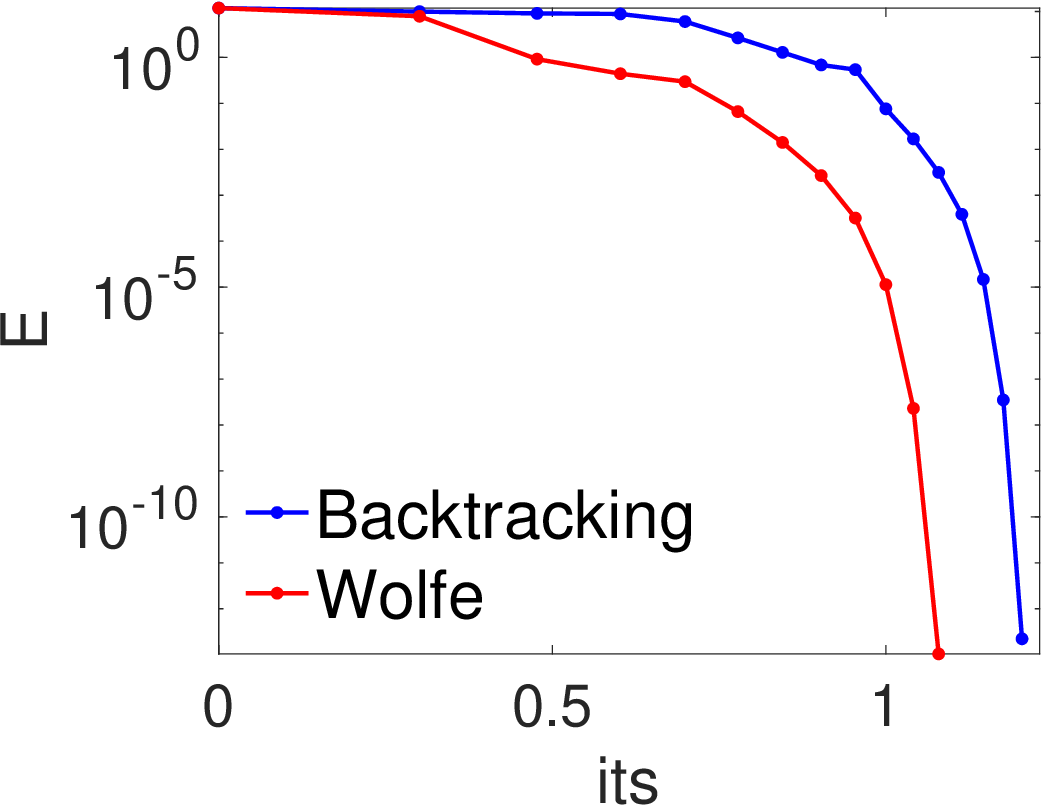} &
    \psfrag{E}{}
    \psfrag{its}[][]{$\log_{10}$(\#iterations)}
    \includegraphics*[width=0.25\linewidth,height=0.19\linewidth]{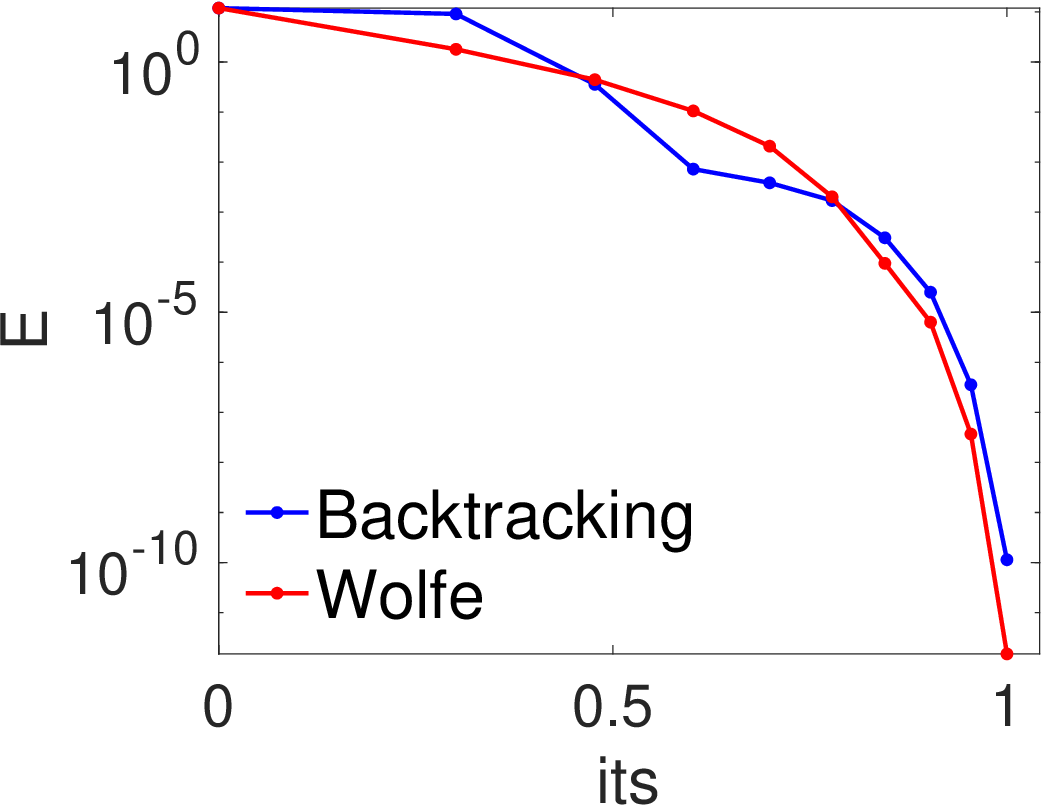}\\
    \psfrag{E}[][]{$E - E^{*}$}
    \psfrag{time}[][]{$\log_{10}$(time (s))}
    \includegraphics*[width=0.25\linewidth,height=0.19\linewidth]{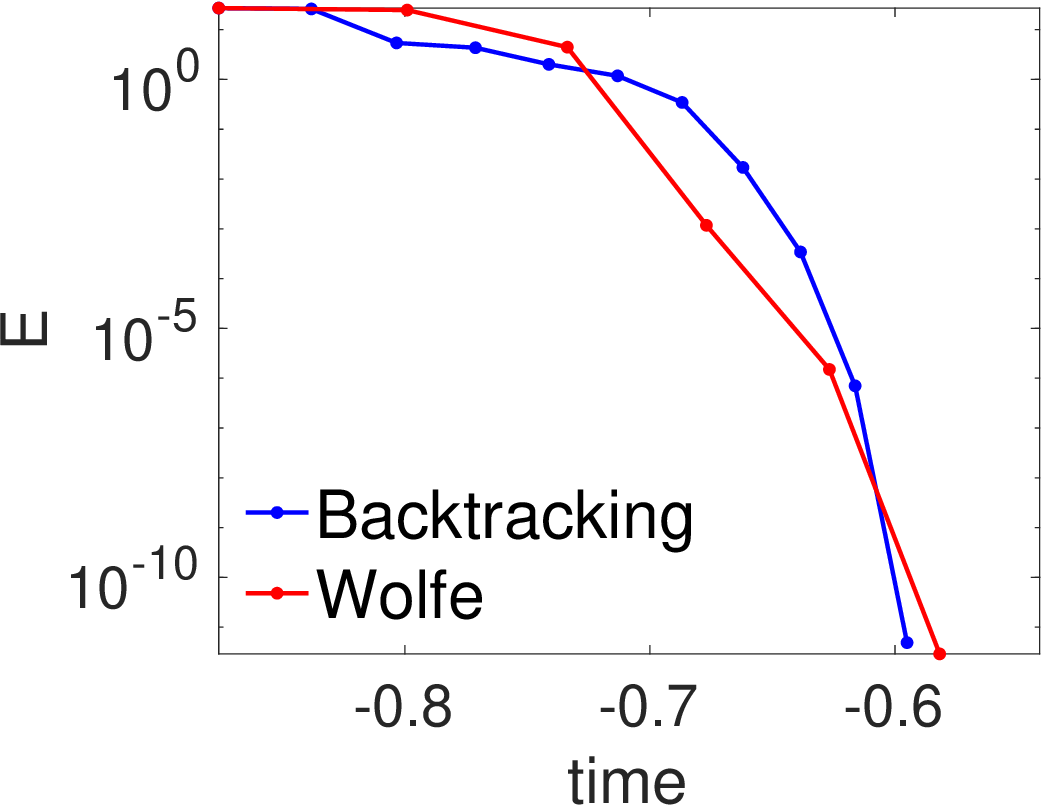} &
    \psfrag{E}{}
    \psfrag{time}[][]{$\log_{10}$(time (s))}
    \includegraphics*[width=0.25\linewidth,height=0.19\linewidth]{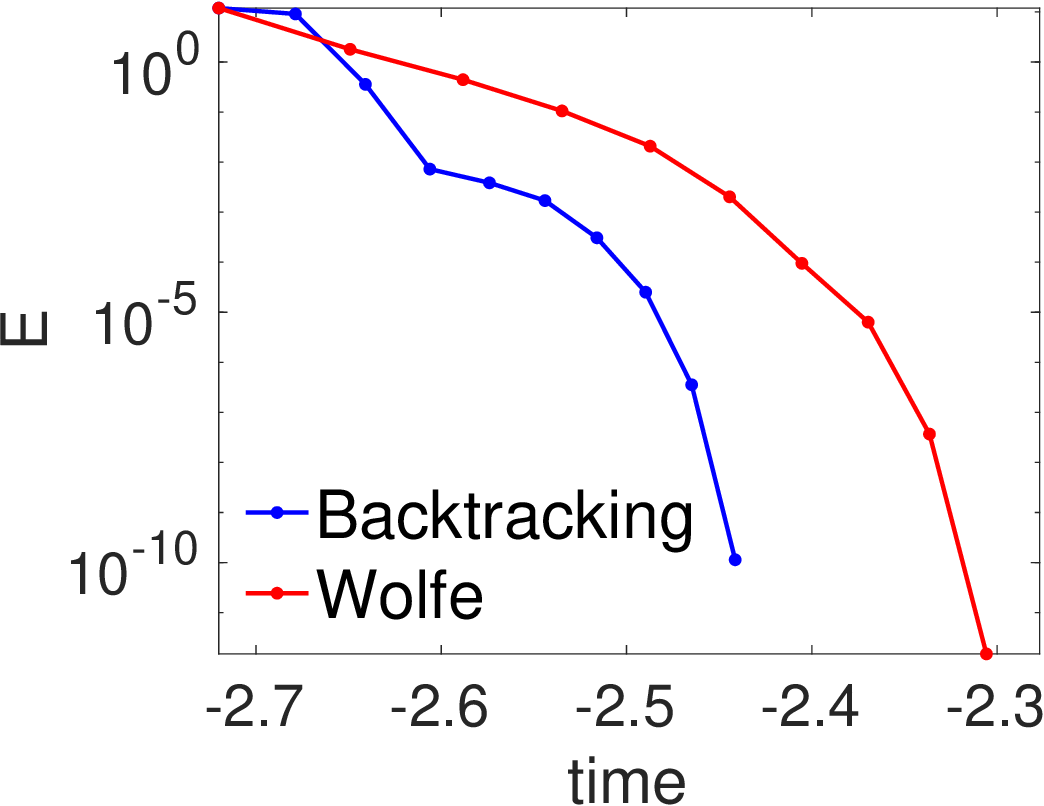} &
    \psfrag{E}{}
    \psfrag{time}[][]{$\log_{10}$(time (s))}
    \includegraphics*[width=0.25\linewidth,height=0.19\linewidth]{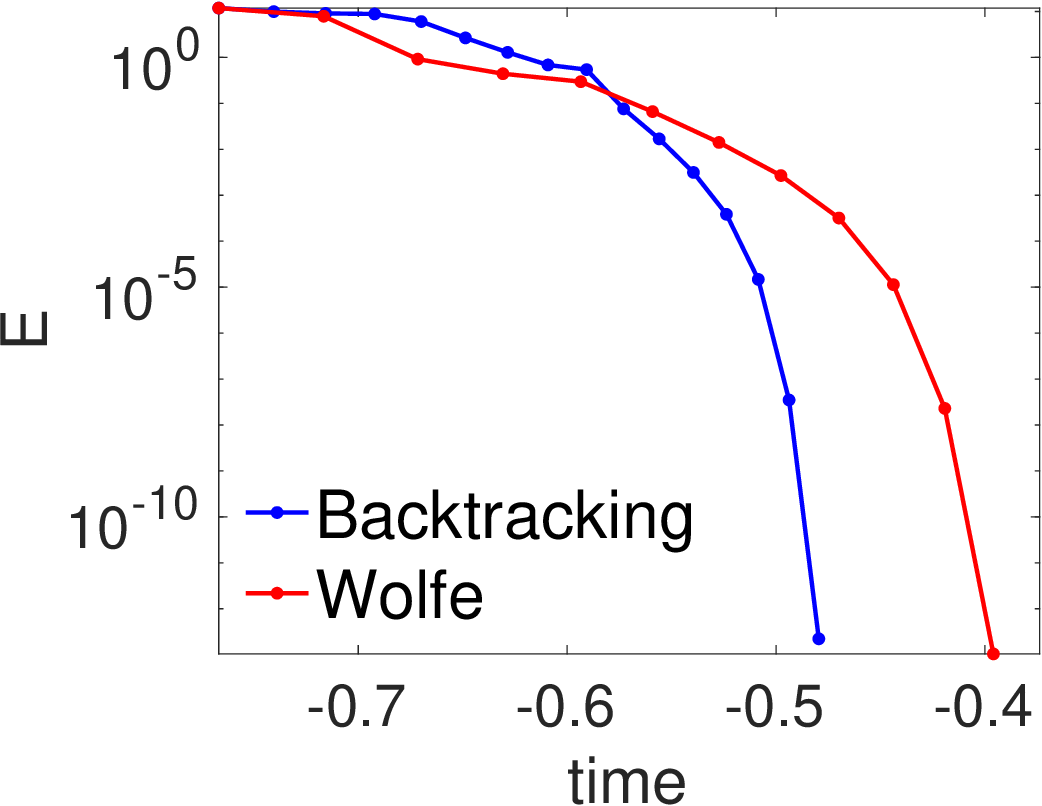} &
    \psfrag{E}{}
    \psfrag{time}[][]{$\log_{10}$(time (s))}
    \includegraphics*[width=0.25\linewidth,height=0.19\linewidth]{lsNewton/LSMNIST_CompDistESvTime.eps}
  \end{tabular}
  \caption{Types of line search in Newton's method: satisfying the Wolfe conditions and backtracking, as learning curves for one problem instance per dataset (columns), as in fig.~\ref{f:examples}.}
  \label{f:Newton-ls}
\end{figure}

\clearpage

\section{MNIST with 100 classes}
\label{s:MNIST100}

We show another experiment using a large number of classes based on MNIST. In the original MNIST dataset, we have 10 digit classes (0 to 9) and each image is of 28$\times$28. We can create a ``double-digit'' image by concatenating (side by side) two MNIST images, of classes $i$ and $j$, to generate an image of 28$\times$56 of class $ij$ (say, a digit of class 6 and a digit of class 4 generate a digit of class 64). Thus we generate an ``MNIST-100'' dataset where the instance dimensionality is $D =$ 1\,568 and the number of classes $K = 100$ (00 to 99). We constructed a training set of $N =$ 60k double-digit images by creating 600 images for each class, each image being the concatenation of two random training set images of the corresponding class; likewise, we created a test set of 10k double-digit images. We trained a softmax classifier on the training set, which achieved 1.92\% and 22.12\% training and test error, respectively (this is likely overfitting, but it is fine for our purposes of setting up an inverse classification problem).

This way, we create a harder inverse classification problem, because we have more feature dimensions, many more classes and hence many more parameters, which increases the computation for each method. But it also makes the Hessian more ill-conditioned (less round), since from theorem~\ref{th:Hess_pd} the Hessian has $K-1$ eigenvalues greater or equal than $\lambda$ and the rest ($D-K+1$) equal to $\lambda$.

Fig.~\ref{f:MNIST-100} shows the learning curves for a source instance of each dataset. Note that Newton's method barely needs more iterations to solve the MNIST-100 problem than to solve the MNIST one (around 10 in both cases) while all the other methods take twice as many iterations as before or more. The only exception is BFGS, which now takes fewer iterations than before; this makes sense, since after Newton's method it is the method that most uses second-order information. In terms of runtime, Newton's method is again the fastest by far, with a bigger gain than in MNIST. BFGS becomes the slowest method because of its costly matrix-vector multiplications with a $D \times D$ matrix.

\begin{figure}[p]
  \centering  
  \psfrag{Newton}{\scriptsize Newton}
  \psfrag{Gradient}{\scriptsize Gradient}
  \psfrag{L-BFGS}{\scriptsize L-BFGS}
  \psfrag{BFGS}{\scriptsize BFGS}
  \psfrag{CG}{\scriptsize CG}
  \begin{tabular}{@{\hspace{-2ex}}c@{\hspace{1ex}}c@{\hspace{1ex}}c@{}}
    \multicolumn{3}{c}{MNIST: $\lambda=0.01, (p_7,p_0) = (0.999 , 6.49\times 10^{-6}) \rightarrow (4.22\times 10^{-4}, 0.999) $} \\[1ex]
    $E(\x_k)$ & $\norm{\nabla E(\x_k)}$ & $E(\x_k) - E(\x^*)$ \\
    \psfrag{E}{} 
    \psfrag{its}[][]{$\log_{10}$(\#iterations)}
    \includegraphics*[width=0.33\linewidth]{MNISTImages_CompEvItr.eps}&
    \psfrag{G}{} 
    \psfrag{its}[][]{$\log_{10}$(\#iterations)}
    \includegraphics*[width=0.33\linewidth]{MNISTImages_CompGvItr.eps}&
    \psfrag{E}{} 
    \psfrag{its}[][]{$\log_{10}$(\#iterations)}
    \includegraphics*[width=0.33\linewidth]{MNISTImages_CompDistESvItr.eps} \\[2ex]
    \psfrag{E}{} 
    \psfrag{time}[][]{$\log_{10}$(time (s))}
    \includegraphics*[width=0.33\linewidth]{MNISTImages_CompEvTime.eps}&
    \psfrag{G}{} 
    \psfrag{time}[][]{$\log_{10}$(time (s))}
    \includegraphics*[width=0.33\linewidth]{MNISTImages_CompGvTime.eps}&
    \psfrag{E}{} 
    \psfrag{time}[][]{$\log_{10}$(time (s))}
    \includegraphics*[width=0.33\linewidth]{MNISTImages_CompDistESvTime.eps}
  \end{tabular} \\[5ex]
  \begin{tabular}{@{\hspace{-2ex}}c@{\hspace{1ex}}c@{\hspace{1ex}}c@{}}
    \multicolumn{3}{c}{MNIST-100: $\lambda=0.215, (p_{07},p_{11}) = (0.999 , 9.98\times 10^{-14}) \rightarrow (0.014\times 10^{-4}, 0.903) $} \\[1ex]
    $E(\x_k)$ & $\norm{\nabla E(\x_k)}$ & $E(\x_k) - E(\x^*)$ \\
    \psfrag{E}{} 
    \psfrag{its}[][]{$\log_{10}$(\#iterations)}
    \includegraphics*[width=0.33\linewidth]{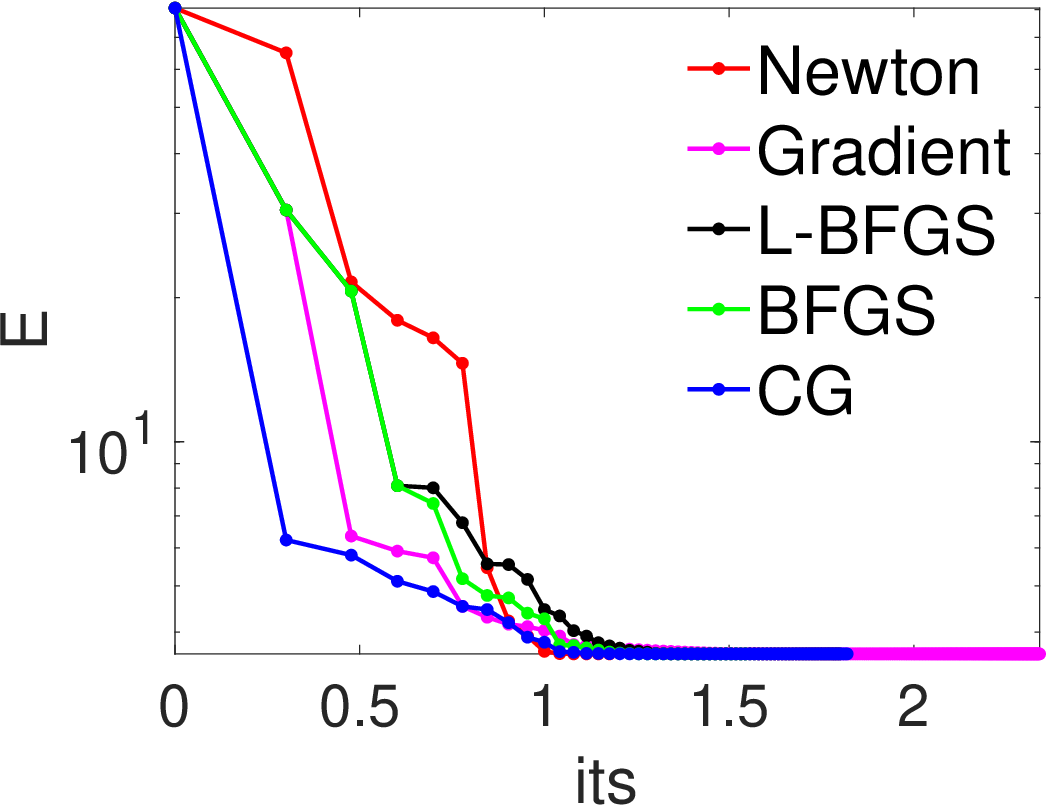}&
    \psfrag{G}{} 
    \psfrag{its}[][]{$\log_{10}$(\#iterations)}
    \includegraphics*[width=0.33\linewidth]{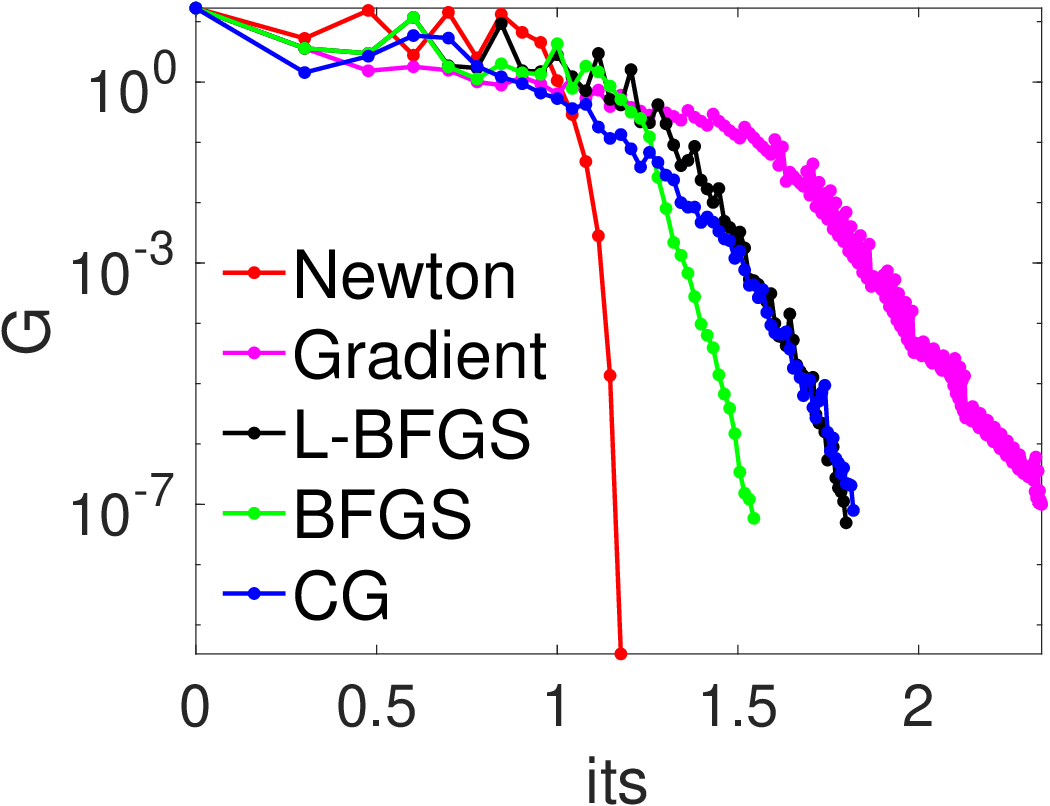}&
    \psfrag{E}{} 
    \psfrag{its}[][]{$\log_{10}$(\#iterations)}
    \includegraphics*[width=0.33\linewidth]{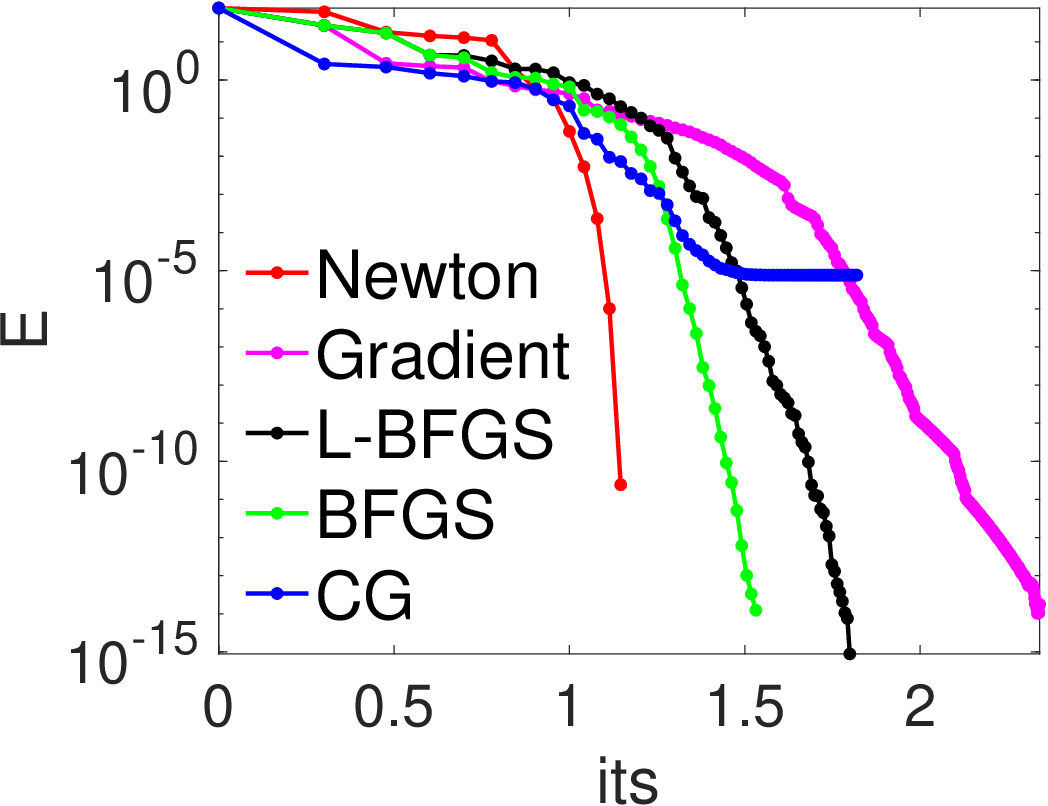} \\[2ex]
    \psfrag{E}{} 
    \psfrag{time}[][]{$\log_{10}$(time (s))}
    \includegraphics*[width=0.33\linewidth]{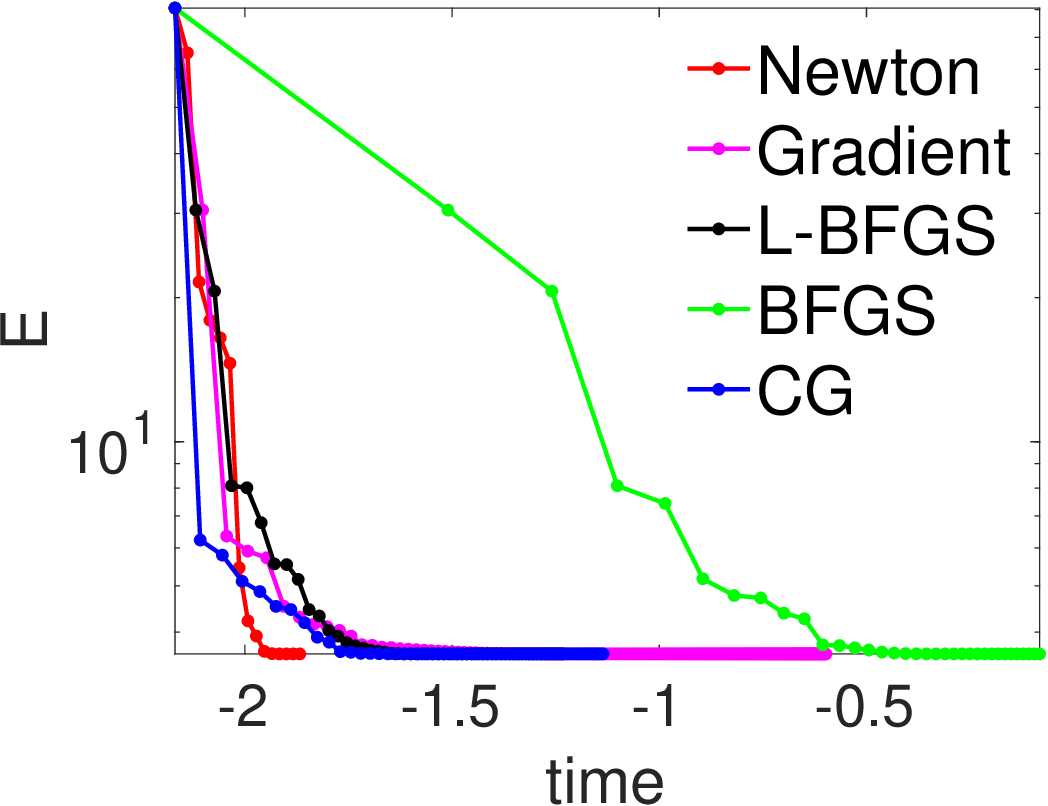}&
    \psfrag{G}{} 
    \psfrag{time}[][]{$\log_{10}$(time (s))}
    \includegraphics*[width=0.33\linewidth]{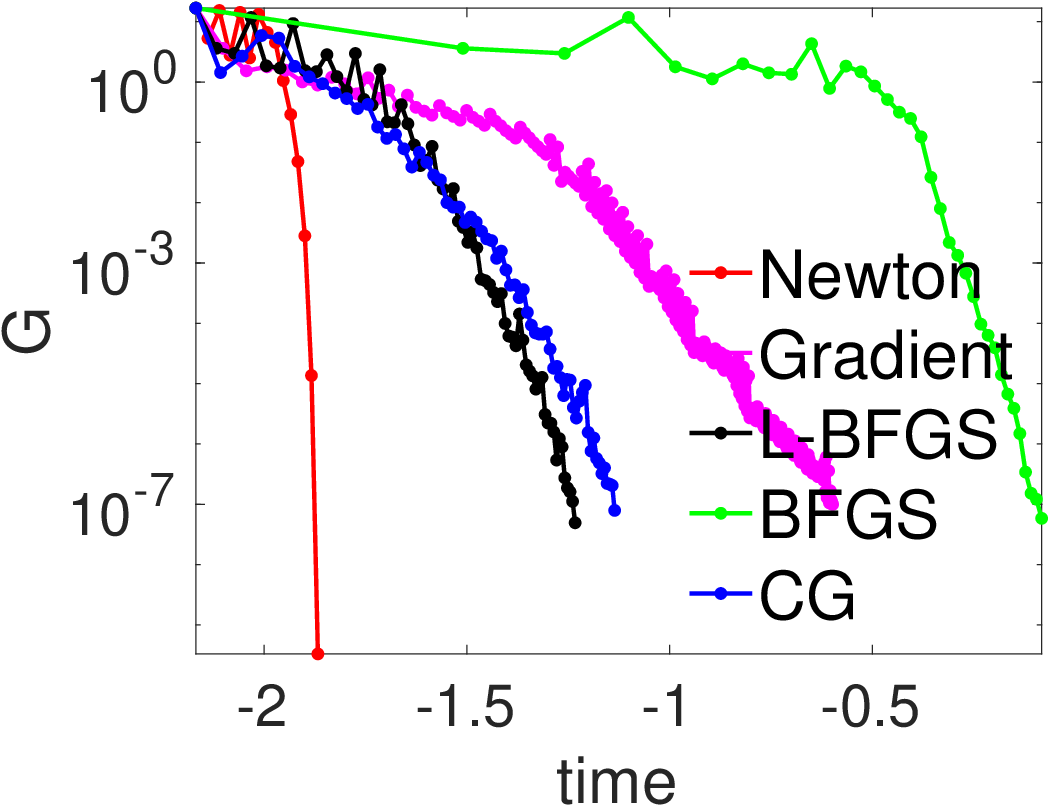}&
    \psfrag{E}{} 
    \psfrag{time}[][]{$\log_{10}$(time (s))}
    \includegraphics*[width=0.33\linewidth]{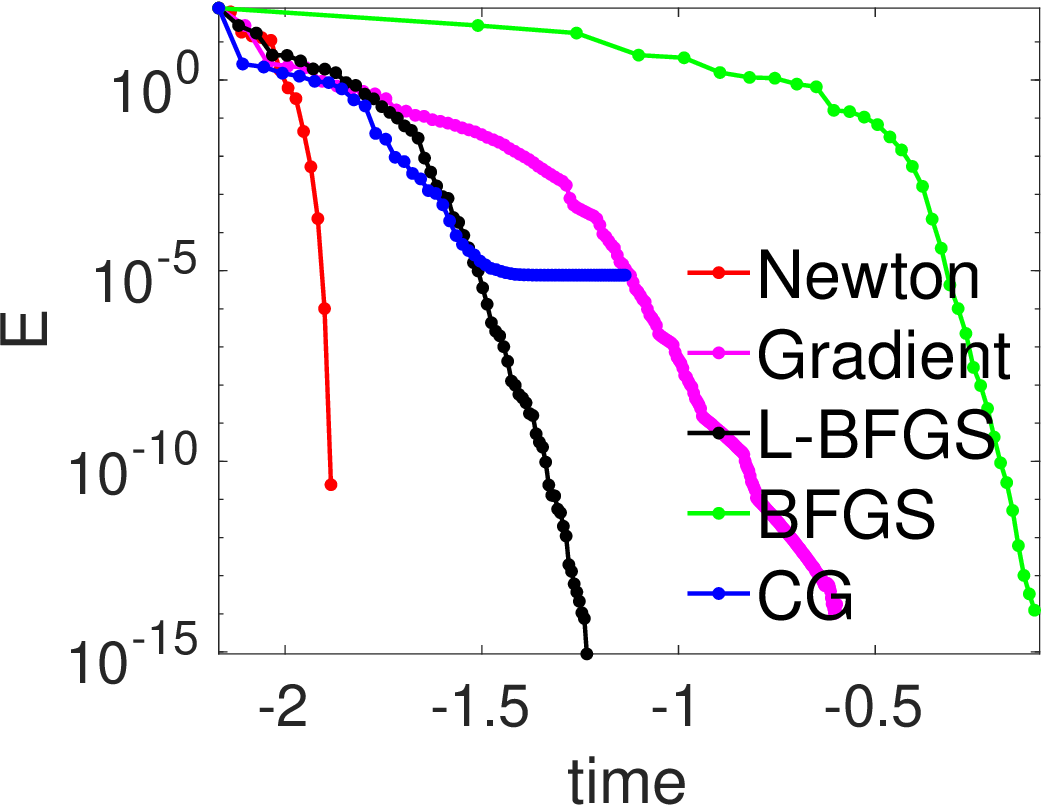}
  \end{tabular}  
  \caption{Learning curves in a problem instance from the MNIST (top) and MNIST-100 (bottom) datasets. In each column we plot $E(\x_k)$, $\norm{\nabla E(\x_k)}$ and $E(\x_k) - E(\x^*)$ (where $\x^*$ was estimated using Newton's method), for each iterate $k = 1,2,\dots$ For each dataset, row 1 shows the number of iterations in the X axis and row 2 the runtime (s).}
  \label{f:MNIST-100}
\end{figure}

\clearpage

\bibliographystyle{abbrvnat}

\end{document}